\documentclass{article}

\usepackage{microtype}

\usepackage{hyperref}

\usepackage{graphicx}
\usepackage{subcaption}
\usepackage{booktabs}
\usepackage{float}
\usepackage{enumerate}

\usepackage{amsmath}
\usepackage{amssymb}
\usepackage{mathtools}
\usepackage{amsthm}

\usepackage[capitalize,noabbrev]{cleveref}

\usepackage{tikz}
\usetikzlibrary{positioning,arrows.meta,fit,backgrounds}

\usepackage[disable,textsize=tiny]{todonotes}

\usepackage{lmodern}
\usepackage[T1]{fontenc}
\usepackage{url}
\usepackage{xurl}
\newcommand{\id}{\textnormal{id}}

\newcommand{\Q}{\mathbb{Q}}
\newcommand{\Z}{\mathbb{Z}}
\newcommand{\N}{\mathbb{N}}

\newcommand{\cL}{\mathcal{L}}
\newcommand{\poly}{\textnormal{poly}}
\newcommand{\cX}{\mathcal{X}}
\newcommand{\cY}{\mathcal{Y}}
\newcommand{\cH}{\mathcal{H}}
\newcommand{\cD}{\mathcal{D}}
\newcommand{\cA}{\mathcal{A}}

\theoremstyle{plain}
\newtheorem{theorem}{Theorem}[section]

\newtheorem{lemma}[theorem]{Lemma}
\newtheorem{conjecture}[theorem]{Conjecture}
\newtheorem{corollary}[theorem]{Corollary}
\newtheorem{claim}[theorem]{Claim}
\newtheorem{problem}[theorem]{Problem}

\theoremstyle{definition}
\newtheorem{definition}[theorem]{Definition}

\theoremstyle{remark}
\newtheorem{remark}[theorem]{Remark}

\title{Why ReLU? A Bit-Model Dichotomy for Deep Network Training}

\author{
	Ilan Doron-Arad\thanks{Supported by grant NSF DMS-2031883 and Vannevar Bush Faculty Fellowship ONR-N00014-20-1-2826 (PI Mossel).}\\
	MIT, Cambridge, MA, USA\\
	\texttt{ilanda@mit.edu}
	\and
	Elchanan Mossel\thanks{Partially supported by NSF DMS-2031883, the Vannevar Bush Faculty Fellowship ONR-N00014-20-1-2826, and a Simons Investigator Award.}\\
	MIT, Cambridge, MA, USA\\
	\texttt{elmos@mit.edu}
}

\date{}

\begin{document}
\maketitle

\begin{abstract}
Theoretical analyses of Empirical Risk Minimization (ERM) are standardly framed within the Real-RAM model of computation. In this setting, training even simple neural networks is known to be $\exists \mathbb{R}$-complete---a complexity class believed to be harder than NP, that characterizes the difficulty of solving systems of polynomial inequalities over the real numbers. 
However, this algebraic framework diverges from the reality of digital computation with finite-precision hardware.
In this work, we analyze the theoretical complexity of ERM under a realistic bit-level model ($\mathsf{ERM}_{\text{bit}}$), where network parameters and inputs are constrained to be rational numbers with polynomially bounded bit-lengths. Under this model, we reveal a sharp dichotomy in tractability governed by the network's activation function. We prove that for deep networks with {\em any} polynomial activations with rational coefficients and degree at least $2$, the bit-complexity of training is severe: deciding $\mathsf{ERM}_{\text{bit}}$ is $\#P$-Hard, hence believed to be strictly harder than NP-complete problems. Furthermore, we show that determining the sign of a single partial derivative of the empirical loss function is intractable (unlikely in BPP), and deciding a specific bit in the gradient is $\#P$-Hard. This provides a complexity-theoretic perspective for the phenomenon of exploding and vanishing gradients. In contrast, we show that for piecewise-linear activations such as ReLU, the precision requirements remain manageable: $\mathsf{ERM}_{\text{bit}}$ is contained within NP (specifically NP-complete), and standard backpropagation runs in polynomial time. Our results demonstrate that finite-precision constraints are not merely implementation details but fundamental determinants of learnability.
\end{abstract}

\newpage

\section{Introduction}

A standard theoretical formulation of empirical risk minimization (ERM) for neural
networks asks the following decision question: given a rationally specified
network architecture, a finite rational dataset, a (poly-time computable) loss, and a threshold $\gamma$,
decide whether there exists a \emph{real} parameter vector $\theta$ in these
domains whose total loss is at most~$\gamma$. In this \emph{real-valued} ERM
formulation, even for one hidden layer ERM becomes
$\exists\mathbb{R}$-complete~\cite{BertschingerHJM23,NEURIPS2021_9813b270,hankala2024complexity}:
Verifying a candidate's exact solution is as hard as deciding the existence of real
solutions to arbitrary systems of polynomial inequalities, so membership in
$\textnormal{NP}$ is unknown and widely believed to be unlikely~\cite{schaefer2024existential}.


From the perspective of actual learning systems, this formulation hides an
important issue. Modern training pipelines, and essentially all real-world
implementations, operate with finite-precision arithmetic: weights, biases,
inputs and outputs are stored as finite bit strings, and any realistically
implementable algorithm can only manipulate such encodings
\cite{goldberg1991every,higham2002accuracy}.

To better facilitate realistic settings and aiming to bridge theory and practice, we study a natural bit-model
analogue of ERM, which we call $\mathsf{ERM}_{\text{bit}}(C)$ with parameter complexity
$C=(C_1,C_2)\in\mathbb{N}^2$. An instance $I$ specifies a rationally encoded network and dataset, and the decision question is whether there exists a rational parameter vector $\theta$ for the edges such that the empirical loss $L(\theta)$ satisfies
$L(\theta)\le \gamma$, and the binary encoding length satisfies
$|\mathrm{enc}(\theta)| \le C_1\cdot |I|^{C_2}$. 
Note that every $C$ yields a different language; for simplicity, we fix a sufficiently large $C$ throughout and write
$\mathsf{ERM}_{\text{bit}}:=\mathsf{ERM}_{\text{bit}}(C)$.


Once we adopt this more concrete, bit-level formulation, the complexity of ERM
shifts already for very simple architectures. When the depth is constant, and all
activations and the loss are poly-time computable over~$\Q$, a candidate
parameter vector $\theta$ for $\mathsf{ERM}_{\text{bit}}$ has a polynomial-size
binary encoding, the empirical loss is a poly-time computable rational, and we can therefore \emph{verify} $\theta$ in
polynomial time, as in practice. Thus, shallow $\mathsf{ERM}_{\text{bit}}$ lies in $\textnormal{NP}$ in the bit sense, unlike the standard real-numbered ERM formulation. 
This contrast comes entirely from how solutions are quantified, not from the underlying Turing model.
This mismatch raises a basic question:
\begin{quote}
	\emph{To what extent do hardness results for neural networks with real numbers survive when we restrict attention to polynomially encodable rational weights?}
\end{quote}
In this paper, we study this question for \emph{deep} networks. Deep networks are of highest practical relevance \cite{DBLP:books/daglib/0040158}; however, in many theoretical aspects, they are not well understood  in contrast to their shallow counterparts. We show that the algebraic structure of the activation function dictates the bit-model complexity of training: any non-linear polynomial activation functions (with rational coefficients) can encode hard, in terms of complexity, arithmetic and force extreme exploding/vanishing gradients, whereas standard piecewise-linear activations (e.g., ReLU and leaky-ReLU) admit NP verification of $\mathsf{ERM}_{\text{bit}}$ witnesses and polynomial-time back-propagation. This distinction, supported by the preference of modern deep learning architectures for piecewise linear activations, could not be given using a real-RAM setting of ERM. 

\subsection{Main Results}

Our contributions can be summarized by the following theorems.
First, we show that for polynomial activations, $\mathsf{ERM}_{\text{bit}}$ on deep architectures can be cast as a computational hard numerical problem. 
In the next results, let $\id$ be the identity (activation) function. 
In the next result, we enforce the intended parameter behavior via auxiliary samples, yielding an $\mathsf{ERM}_{\text{bit}}$ instance that represents a hard numerical circuit for a wide family of polynomial activation functions. 
Interestingly, our next hardness results hold even for the {\em promise} version of the problem: Given $a<b \in \mathbb{N}$ and an instance $I$ of an optimization problem, decide whether the objective value of $I$ is at most $a$ or at least $b$.

\begin{theorem}
		\label{thm:SLP}
    Deciding $\mathsf{ERM}_{\text{bit}}$ for deep networks with activations in $\{\sigma,\id\}$ is $\textnormal{\#P}$-hard (under polynomial-time Turing reductions), where $\sigma \in \Q[T]$ is any non-linear polynomial activation with degree $ \ge 2$. This result holds for $0/1$ loss computable in polynomial time with (i) any fixed promise gap $a<b$, or if (ii) all intermediate node values are uniformly bounded by a constant.  
\end{theorem}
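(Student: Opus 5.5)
The plan is to reduce from a \#P-hard problem on straight-line programs (arithmetic circuits), namely PosSLP or, better, the problem of computing a bit of an integer given by an SLP with $+,-,\times$. Allain et al.\ show that deciding the $k$-th bit of the integer represented by a division-free SLP is \#P-hard, and the \#P-hardness of BitSLP goes back to Allender, B\"urgisser, Kjeldgaard-Pedersen and Miltersen. So I would argue that a deep network with activations in $\{\sigma,\id\}$ can simulate an SLP. Then a polynomial-time Turing reduction, querying the $\mathsf{ERM}_{\text{bit}}$ oracle on $0/1$-loss threshold instances, recovers the desired bit.

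\textbf{Simulating an SLP.} Let $\sigma(T)=\sum_{i=0}^d c_i T^i$ with $d\ge 2$ and $c_d\neq 0$. Squaring comes from a finite difference: for suitable rational shifts $s_j$ and weights $w_j$, the combination $\sum_j w_j\sigma(T+s_j)$ equals $T^2$. Indeed, iterated finite differences of order $d-2$ of $\sigma$ yield a quadratic $\alpha T^2+\beta T+\gamma_0$, and the linear and constant parts can be cancelled with $\id$ neurons and biases. Multiplication then follows from $xy=\tfrac14\big((x+y)^2-(x-y)^2\big)$, and addition and subtraction are linear. Each SLP gate therefore becomes a constant-size gadget, and the network depth is proportional to the SLP length.

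\textbf{Pinning the parameters.} The difficulty is that in ERM the gadget weights are free parameters rather than constants. I would fix them through auxiliary training samples. For each gadget, I add samples whose inputs isolate that gadget, for example by zeroing other paths with dedicated input coordinates, and whose labels force the correct behavior under $0/1$ loss. For (i), a promise gap is obtained by replicating each auxiliary sample many times, so that any deviation costs more than $b-a$, together with a threshold label on the target. Concretely, the target sample asks whether the circuit output $N$ satisfies $N\ge t$. With the oracle answering threshold queries, binary search over $t$ gives the bit of $N$. This works even though $N$ can be doubly exponential, because only comparisons are performed and never an explicit encoding of $N$.

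\textbf{Main obstacle.} The main obstacle is bit-length. The witness $\theta$ must have polynomial encoding, yet the intermediate values become huge. I would keep all weights as small constants so that $\theta$ is short, while the values computed on the data explode, which is allowed. For (ii), the bounded-intermediate variant, I would instead rescale and work with values in a fixed bounded range: I would simulate arithmetic modulo scaling, or use PosSLP-style comparisons on normalized quantities, with the hardness now encoded in tiny differences that a $0/1$ threshold still detects. Showing that the auxiliary samples force the weights uniquely is the second delicate point. Since $0/1$ loss only constrains signs, I would use pairs of samples with thresholds $v\pm\epsilon$ on polynomially many distinct inputs, so that the polynomial identities defining each gadget are forced exactly by interpolation.
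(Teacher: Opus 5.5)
\textbf{The gap: threshold queries cannot extract a bit of $N$.} Your SLP simulation is sound. An order-$(d-2)$ finite difference of $\sigma$ gives a nondegenerate quadratic, an $\id$ path removes its linear part, and $xy=\tfrac14\bigl((x+y)^2-(x-y)^2\bigr)$ finishes the gadget. This is only a cosmetic variant of the paper's Lemma~\ref{lem:polynomial_representation}, which shows the shifts $\sigma(U+j)$, $0\le j\le\mu$, span $\Q[U]_{\le\mu}$ and polarizes with $(x+y)^2-x^2-y^2$.

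The failure is the last step, reading a bit of $N=n_P$ via threshold samples and binary search. A threshold sample only tells you the sign of $N-t$. So what your reduction actually establishes is that $\mathsf{ERM}_{\text{bit}}$ is PosSLP-hard, which is exactly the paper's Remark~\ref{rem:hinge-posslp} for hinge loss, not \#P-hard.

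Binary search cannot close this gap. $N$ may have about $2^{\ell}$ bits, and its $j$-th bit flips every $2^j$ integers. For small $j$ (e.g.\ parity) you must therefore localize $N$ essentially exactly, which takes exponentially many one-bit answers. Moreover, a threshold $t$ written in the dataset has only polynomially many bits and cannot even reach the scale of $N$. Your remark that ``only comparisons are performed'' is the obstruction, not a remedy. Any polynomial-query scheme using only such sign information would give a Turing reduction from BitSLP to PosSLP, which is not known; the only hardness known for PosSLP, such as Theorem~\ref{thm:PosSLP'}, is conditional.

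\textbf{How the paper avoids it, and a related problem in your pinning step.} The paper uses the fact that the loss only has to be a polynomial-time computable $0/1$-valued function of the rational output. The main sample is scored by $\mathbf{1}[\mathrm{bit}_j(\hat y_t)\neq 1]$, which reads bit $j$ of $n_P$ directly. It is replicated $b$ times against auxiliary samples replicated $b+1$ times, giving a single-query reduction.

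Your pinning step has the same weakness. Constraints $v-\epsilon<f_\theta(x)<v+\epsilon$ cut out open sets of parameters, so they pin weights only approximately, and interpolation from approximate values yields only approximate identities. Any perturbation is then amplified doubly exponentially by the repeated multiplications, which destroys the bits of the output. The paper instead uses the exact-match loss $\mathbf{1}[\hat y\neq y]$ on the full vector of node values. Every vertex is an input and an output coordinate, and inputs are chosen by \eqref{eq:aux-input}. Subtracting a baseline sample from a perturbed one gives $w_{p,q}=w^\ast_{p,q}$ exactly on identity-head edges. On $\sigma$-edges, $\mu+1$ samples force $\sigma(w_{L,U}z+b_{L,U})\equiv\sigma(z)$.

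This gives functional equality, not unique weights: for even $\sigma$, $w_{L,U}=-1$ also works. Functional equality is all that is needed, so your goal of forcing weights uniquely is both false in general and unnecessary.

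For (ii), the concrete ingredient is the exponent-tracking reduction to bounded-norm SLPs (Lemma~\ref{lem:posslp-to-bnpslp}) combined with the same bit-reading loss, not threshold tests on normalized quantities.
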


By the above $\#\mathsf{P}$-hardness, membership in $\mathsf{NP}$ would imply a collapse of the polynomial hierarchy via Toda's theorem~\cite{toda1991pp}; hence $\mathsf{ERM}_{\text{bit}}\notin\mathsf{NP}$ is widely believed.

\begin{corollary}
If $\mathsf{ERM}_{\text{bit}} \in \textnormal{NP}$, then $\textnormal{PH} \subseteq \Delta_2^P$
(in particular, the polynomial hierarchy collapses to its second~level).
\end{corollary}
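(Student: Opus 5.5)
The corollary follows from Theorem~\ref{thm:SLP} together with Toda's theorem, by a standard oracle argument. Toda's theorem~\cite{toda1991pp} gives $\textnormal{PH} \subseteq \textnormal{P}^{\#\textnormal{P}}$, and more precisely $\textnormal{PH} \subseteq \textnormal{P}^{\textnormal{PP}}$. So it suffices to show that, under the hypothesis $\mathsf{ERM}_{\text{bit}} \in \textnormal{NP}$, we have $\textnormal{P}^{\#\textnormal{P}} \subseteq \textnormal{P}^{\textnormal{NP}} = \Delta_2^P$.

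The key step is to unpack Theorem~\ref{thm:SLP}. It provides a polynomial-time Turing reduction from a $\#\textnormal{P}$-complete function, say $\#\textsf{SAT}$ or the permanent, to deciding $\mathsf{ERM}_{\text{bit}}$. That is, $\#\textnormal{P} \subseteq \textnormal{FP}^{\mathsf{ERM}_{\text{bit}}}$. Now take any language $L \in \textnormal{P}^{\#\textnormal{P}}$, decided by a polynomial-time machine $M$ with oracle access to some $f \in \#\textnormal{P}$. We replace each oracle call to $f$ by the polynomial-time Turing reduction that computes $f$ using an $\mathsf{ERM}_{\text{bit}}$ oracle. The composed machine still runs in polynomial time, because:
\begin{itemize}
\item there are polynomially many oracle calls;
\item each call is answered by a polynomial-time reduction whose queries have polynomial length.
\end{itemize}
Hence $\textnormal{P}^{\#\textnormal{P}} \subseteq \textnormal{P}^{\mathsf{ERM}_{\text{bit}}}$. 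If $\mathsf{ERM}_{\text{bit}} \in \textnormal{NP}$, each query to it can be answered by an NP oracle through a Karp reduction to \textsf{SAT}. This gives $\textnormal{P}^{\mathsf{ERM}_{\text{bit}}} \subseteq \textnormal{P}^{\textnormal{NP}} = \Delta_2^P$.

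Chaining the inclusions gives $\textnormal{PH} \subseteq \textnormal{P}^{\#\textnormal{P}} \subseteq \Delta_2^P$. Since $\Delta_2^P \subseteq \Sigma_2^P \cap \Pi_2^P$, this means $\textnormal{PH} = \Sigma_2^P$, so the hierarchy collapses to its second level.

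The only step that needs care is confirming that the reduction in Theorem~\ref{thm:SLP} is a genuine polynomial-time Turing reduction computing the $\#\textnormal{P}$ function. This includes the fixed choice of the constant $C$ in $\mathsf{ERM}_{\text{bit}} = \mathsf{ERM}_{\text{bit}}(C)$ and a fixed activation $\sigma$, so that a single language is involved. If instead the reduction yields only PP-hardness, or one bit of a $\#\textnormal{P}$ value, the same argument still works through $\textnormal{PH} \subseteq \textnormal{P}^{\textnormal{PP}}$.
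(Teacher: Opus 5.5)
Your proposal is correct and follows the paper's own argument, which is a one-line appeal to the $\#\textnormal{P}$-hardness of \Cref{thm:SLP} under polynomial-time Turing reductions combined with Toda's theorem. You give the chain $\textnormal{PH} \subseteq \textnormal{P}^{\#\textnormal{P}} \subseteq \textnormal{P}^{\mathsf{ERM}_{\text{bit}}} \subseteq \textnormal{P}^{\textnormal{NP}} = \Delta_2^P$ with the oracle composition spelled out, which the paper leaves implicit.
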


This hardness does not depend on the magnitude of the numbers, as even with bounded norm in each layer, the problem remains as hard. Note that the result holds for any non-linear polynomial with rational coefficients.\footnote{if $\sigma$ is given in the input and is not universally fixed, then the intermediate values are constants that depend on $\sigma$} The above result uses a non-standard (yet efficient) loss function to obtain the $\#$-P-hardness; however, we can obtain a slightly weaker lower bound for the standard hinge loss $\ell(x,y) = \max\{0,1-y\cdot x\}$ (see the discussion for more details).

We extend this hardness to the numerical routine most central to deep learning: backpropagation. We define \textsc{Backprop-Sign} as the problem of determining the sign of a single gradient coordinate. under similar assumptions to Theorem~\ref{thm:SLP}, even deciding the \emph{sign} of a single gradient coordinate is
intractable (unlikely to have a bounded-error probabilistic polynomial time algorithm (BPP)), and this persists even when all intermediate node values are
uniformly bounded by a constant. Moreover, deciding the value of a specific bit in the gradient is $\#P$-hard, making it unlikely to even be in NP. 

\begin{theorem}
    \label{thm:backprop-hard-bit}
    Given a network with activation functions in $\{\sigma,\id\}$, for any non-linear polynomial activation $\sigma$, evaluating a single gradient coordinate at a given rational parameter $\theta$ admits the following hardness results. 
    \begin{itemize}
    \item Deciding a specific bit in the gradient is \textnormal{$\#P$-hard}. 
        \item Deciding the sign of the gradient is not in \textnormal{BPP}, assuming the Constructive Univariate Radical Conjecture~\cite{10.1145/3510359} and assuming $\textnormal{NP} \not\subseteq \textnormal{BPP}$. 
    \end{itemize}
\end{theorem}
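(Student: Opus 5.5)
We reduce from problems on arithmetic straight-line programs (SLPs), where hardness is already known. The two source problems are $\textsc{BitSLP}$ (deciding the $i$-th bit of the integer computed by a division-free SLP), which is $\#P$-hard by Allender et al., and $\textsc{PosSLP}$ (deciding whether that integer is positive). For $\textsc{PosSLP}$, the Constructive Univariate Radical Conjecture of \cite{10.1145/3510359} together with $\textnormal{NP}\not\subseteq\textnormal{BPP}$ implies that it is not in $\textnormal{BPP}$. The plan is to give a polynomial-time, value-preserving compilation of any SLP $P$ into a network $N_P$ with activations in $\{\sigma,\id\}$, a rational parameter $\theta$, an input, and a loss. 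We arrange that one chosen gradient coordinate $\partial L/\partial\theta_e$ equals $c\cdot \mathrm{val}(P)+d$ for explicit rationals $c\neq 0$ and $d$ with polynomial-size encodings. Ideally we pick $c=1$ and $d=0$, or $c$ a power of $2$ and $d$ an integer, so that bits and signs transfer directly.

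\textbf{Step 1: simulating SLP gates with $\sigma$.} Write $\sigma(T)=\sum_{k=0}^{m} a_kT^k$ with $m\ge 2$ and $a_m\neq 0$.

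1. A finite-difference identity isolates squaring. The map $x\mapsto \sigma(x+t)-\sigma(x)$ lowers the degree by one. Iterating it $m-2$ times with rational shifts, and taking fixed linear combinations via $\id$ nodes and edge weights, yields $x\mapsto \alpha x^2+\beta x+\gamma$ with $\alpha\neq 0$ rational. Subtracting the linear and constant parts through a parallel identity path then gives exactly $x^2$. All of this uses constant depth and width, since $m$ is a fixed constant.
2. Multiplication follows from the polarization identity $xy=\tfrac14\bigl((x+y)^2-(x-y)^2\bigr)$.
3. Addition and subtraction are identity nodes with $\pm1$ weights, and the constant $1$ comes from a bias or a fixed input coordinate.

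Composing these gadgets layer by layer, with identity nodes carrying earlier values forward, yields a network whose output on a fixed input is $\mathrm{val}(P)$. Its depth is $O(|P|)$ and its size is polynomial. All weights are fixed rationals depending only on $\sigma$.

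\textbf{Step 2: turning the output into a gradient coordinate.} Append a final linear output edge with weight $\theta_e$ and use the loss $\ell(\hat y,y)=\hat y$, or the square loss at a suitable label. Then $\partial L/\partial\theta_e=\mathrm{val}(P)$ exactly, or an affine function of it, e.g. $2\theta_e\mathrm{val}(P)^2-2y\,\mathrm{val}(P)$ for the square loss. With $\hat y$ as the loss, the sign and the bits of the gradient coincide with those of $\mathrm{val}(P)$. This immediately yields both bullets:

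1. \textbf{Bit.} $\textsc{BitSLP}\le$ the gradient-bit problem, which is therefore $\#P$-hard, via a many-one and hence Turing reduction.
2. \textbf{Sign.} $\textsc{PosSLP}\le$ \textsc{Backprop-Sign}. A BPP algorithm for \textsc{Backprop-Sign} would therefore place $\textsc{PosSLP}$ in BPP, contradicting the conjecture together with $\textnormal{NP}\not\subseteq\textnormal{BPP}$.

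If one insists on a standard loss, the squaring identity still lets us read the sign: take label $y=0$ and differentiate with respect to an internal weight $w$ multiplying $\mathrm{val}(P)$ at the last layer, which gives $2w\,\mathrm{val}(P)^2$. This detects only $|\mathrm{val}(P)|$, so for the sign we would instead keep the linear loss.

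\textbf{Main obstacle.} The delicate point is Step 1, item 1: extracting exact squaring from an arbitrary degree-$m$ rational polynomial while keeping every coefficient of polynomial bit-length and the depth blow-up constant per gate. The finite-difference approach handles this. Choosing shifts $t=1$, the $(m-2)$-fold difference has leading coefficient $a_m\, m!/2$, which is a nonzero rational, so dividing by it is a constant-size weight. A secondary concern is the bounded-intermediate-values variant. If required, we would first rescale the SLP by repeated squaring of a small constant, as in the proof of Theorem~\ref{thm:SLP}, and reuse that construction. For the statement as given, unbounded intermediate values are allowed, so the direct compilation suffices.
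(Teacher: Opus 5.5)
Your proof is correct. Its skeleton matches the paper's: compile the SLP into a $\{\sigma,\id\}$-network, append one output edge $e$, and read $\mathrm{val}(P)$ off $\partial L/\partial w_e$. Two ingredients differ.

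\textbf{The multiplication gadget.} The paper (Lemma~\ref{lem:polynomial_representation}) proves that the shifts $\sigma(U+j)$, $j=0,\dots,\mu$, form a $\mathbb{Q}$-basis of the polynomials of degree at most $\mu$. It does this by extracting coefficients from the top degree down and then applying a Vandermonde determinant. It then solves a linear system for $\lambda_j$ with $\sum_j\lambda_j\sigma(U+j)=U^2$, which gives the exact identity $\sum_j\frac{\lambda_j}{2}\bigl(\sigma(x+y+j)-\sigma(x+j)-\sigma(y+j)\bigr)=xy$. You instead take the $(m-2)$-fold forward difference, whose leading coefficient $a_m m!/2$ is explicit. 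You cancel the residual $\beta x+\gamma$ with an identity edge and a bias, and then polarize. Your route is more elementary: it gives closed-form weights and needs neither the independence lemma nor Gaussian elimination. The paper's route produces a single exact identity built only from $\sigma$-nodes and one linear combination. It reuses that identity from the $\mathsf{ERM}_{\text{bit}}$ reduction, so the backprop theorem takes only a few lines on top of Lemma~\ref{lem:SLP-to-ERM}.

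\textbf{The loss.} You rely on the linear loss $\ell(\hat y,y)=\hat y$, which the problem definition permits. The paper uses the square loss $\tfrac12(\hat y-y)^2$ with label $y=-a_0$, evaluated at $w_{e^\star}=0$. This gives $\partial L/\partial w_{e^\star}=B\,a_0\,n_P$.

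Your remark that the square loss detects only $|\mathrm{val}(P)|$ holds only for label $0$. Your own formula $2\theta_e\mathrm{val}(P)^2-2y\,\mathrm{val}(P)$ reduces to $-2y\,\mathrm{val}(P)$ at $\theta_e=0$. This is also not affine in $\mathrm{val}(P)$ unless $\theta_e=0$, contrary to how you described it. With $y=-\tfrac12$, the square loss therefore already transfers both the sign and every bit of $\mathrm{val}(P)$. That is the stronger, standard-loss form the paper states after the theorem, and you do not need the linear loss.
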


The above result holds even for one-dimensional inputs and square loss with any fixed promise gap. Moreover, the hardness persists even when all intermediate node values are uniformly bounded by a constant. 
To the best of our knowledge, this provides the first complexity-theoretic perspective for the empirical ``exploding gradients'' and ``vanishing gradients'' phenomena~\cite{hochreiter1991untersuchungen,hochreiter1998vanishing,bengio1994learning,glorot2010understanding,pascanu2013difficulty}. In practice, exploding gradients can be controlled by various regularization techniques. However, the above result theoretically demonstrates that, under non-linear polynomial activations, bounding the norms is insufficient. 

We note that in practice ``exploding/vanishing gradients'' are a numerical
magnitude issue rather than a computational one. Our result rules out this
interpretation in the worst case: the hardness stems from a \emph{non-benign}
form of overflow/underflow, where the growth is not a simple, easily trackable
scaling. Indeed, there are benign cases---e.g., repeatedly multiplying by $2$
(or squaring powers of two) merely shifts the binary point / appends zeros in
binary, which can be tracked efficiently. In contrast, our constructions encode
general arithmetic circuits, so intermediate quantities may have succinct
descriptions but computationally hard-to-predict bit patterns.

In contrast to the above negative results, we show that standard piecewise-linear activations (such as ReLU) escape this hardness, in compliance with their abundant use in modern networks. Intuitively, unlike polynomials, they do not increase the bit-length of values multiplicatively with depth. Note that the next theorem holds even without promise, in contrast to our hardness results. This result may be known for some activations, and we give it for completeness and to highlight the contrast from polynomial activations. 

\begin{theorem}
	\label{thm:intro-relu}
	For deep networks with piecewise-linear activations, $\mathsf{ERM}_{\textnormal{bit}}$ is $\textnormal{NP}$-\textnormal{complete}. Furthermore, verifying a witness and computing the exact gradient (one step of back-propagation) can be done in polynomial time.
\end{theorem}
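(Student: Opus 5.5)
The plan has three parts: NP membership, NP-hardness, and polynomial-time gradient computation. All three rest on one invariant. Under piecewise-linear activations with rational breakpoints and slopes (fixed, or given in the input), the encoding length of every intermediate value grows only \emph{additively} with depth. It never grows multiplicatively.

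For membership, the certificate is the parameter vector $\theta$ itself. By definition of $\mathsf{ERM}_{\text{bit}}$ we have $|\mathrm{enc}(\theta)|\le C_1|I|^{C_2}$, so the certificate is polynomially bounded. The verifier evaluates the network on each sample in topological order. The key step is a bit-length bound for node values. Write each value as a fraction with a common denominator. Let $D$ be the product of all denominators of the weights, biases, inputs and activation constants; its encoding length is polynomial in $|I|+|\mathrm{enc}(\theta)|$.

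The first pre-activation is an affine form in the input with coefficients from $\theta$. Applying a piecewise-linear $\sigma$ requires comparing with finitely many breakpoints and then applying an affine map $a_jx+b_j$. The next layer again takes a weighted sum. Hence every node value at depth $d$ is a multilinear expression in the parameters along paths: a sum over paths of products of at most $d+1$ weights and slopes.

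Here lies the main obstacle. A product of $d$ weights along a path has bit-length about $d\cdot\max|\mathrm{enc}(w)|$. That is still polynomial, since the sum of the bit-lengths of all factors is at most $d$ times the total encoding. Summing over possibly exponentially many paths adds only $\log(\#\text{paths})\le d\log(\text{width})$ bits to the numerator. For the denominator, every value at depth $d$ has a denominator dividing $\prod_{\ell\le d}D_\ell$. Here $D_\ell$ is the lcm of the denominators appearing in layer $\ell$, and each $D_\ell$ has polynomial length. The numerator is bounded by $(W\cdot \max|\text{coef}|)^{d}$ times the input magnitude, where $W$ is the width. So the bit-length is $O(d\cdot(|I|+|\mathrm{enc}(\theta)|))$.

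I will state this as a lemma and prove it by induction on layers. The invariant is that the value at layer $\ell$ has numerator and denominator bounded by $B^{\ell}$ for a fixed $B$ of polynomial length. This is exactly where polynomial activations fail: squaring doubles the length at each layer, whereas piecewise-linear maps only multiply by a constant-size slope. With this lemma, the comparisons with breakpoints, the loss computation (assumed poly-time over $\Q$), the summation, and the comparison with $\gamma$ all run in polynomial time.

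For NP-hardness, I would invoke the known NP-hardness of training a small ReLU network in the bit model, e.g.\ the classical $2$-node (or 3-node) network training reduction of Blum--Rivest type. That reduction already has rational polynomial-size witnesses, so it embeds as a depth-$2$ special case; a deep network can simply pass values forward through identity layers. For a general piecewise-linear activation, I would note that a ReLU can be simulated with one nonlinear breakpoint and affine rescalings. For gradients, backpropagation on a fixed $\theta$ first fixes the activation pattern, since the piecewise derivatives are the constant slopes $a_j$ (with a fixed convention at breakpoints). The gradient with respect to a weight is then again a sum over paths of products of weights, slopes and one forward value. The same induction, run backward from the output, gives polynomial bit-length, so exact backpropagation runs in polynomial time.
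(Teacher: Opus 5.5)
Your proposal is correct and follows the same plan as the paper. NP membership comes from evaluating the explicit witness $\theta$ in polynomial time, via a bit-length bound that grows only additively with depth. NP-hardness comes from citing classical hardness results for a specific piecewise-linear activation; that already suffices, so your side remark about simulating ReLU with an arbitrary piecewise-linear $\sigma$ is not needed. Polynomial-time backpropagation comes from running the same bound through the backward pass.

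The main difference is in how the bit-length lemma is justified. The paper counts operations using the rule $\mathrm{len}(a\pm b)\le\max\{B,C\}+1$, which holds only for integers or for values over a shared denominator. Your common-denominator-plus-magnitude induction handles rational addition correctly. One small fix: use $D_\ell^2$, or the product of the weight and slope denominators, rather than the lcm $D_\ell$, because a weight and a slope at the same depth are multiplied together.
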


The above result also holds for rounding non-linear activations to a fixed number of bits.

\begin{figure}[H]
	\centering
\includegraphics[width=0.8\columnwidth]{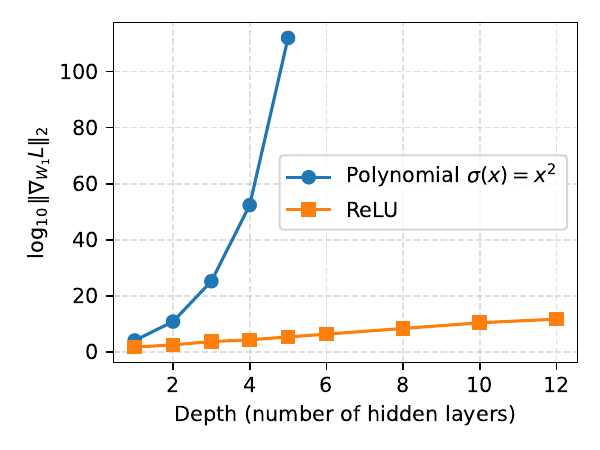}
	\caption{Illustration of Depth-dependent gradient growth for polynomial vs.\ ReLU
		activations. We plot $\log_{10}\|\nabla_{W_1} L\|_2$ versus depth for
		randomly initialized MLPs with $\sigma(x)=x^2$ or ReLU, where in both
		cases weights are scaled by a factor of~3 on every layer. Gradients
		increase with depth for both activations, but much faster for the
		polynomial one, qualitatively reflecting our bit-complexity separation. This experiment is illustrative rather than a formal bound.}
	\label{fig:gradients-poly-vs-relu}
\end{figure}

\begin{remark}[Bit-bounded activations: polynomial-time evaluation and backpropagation]
\label{cor:bit-bounded}
Fix $k\ge 1$ with $k\le \poly(|I|)$, where $|I|$ is the encoding size of the instance. Suppose every activation $\sigma^v$ is some function
$\phi_k:\Q\to\Q$ that (i) is computable in time $\poly(|I|)$ on rational inputs, (ii) always outputs $k$-bit
dyadic rationals (optionally after clipping to a fixed interval), and (iii) is equipped with a specified
backprop rule/derivative oracle that is also computable in time $\poly(|I|)$ and returns rationals of bit-length
$\poly(|I|)$. Then, one forward pass and one backward pass (computing any fixed gradient coordinate) run in
polynomial time in the bit model.
\end{remark}

This result extends to any architecture whose forward/backward computation is
given by an explicit polynomial-size computation DAG in our bit model.
In particular, it holds for RNN architectures run for $T\le \textnormal{poly}(N)$ steps, by
unrolling the recurrent computation into a depth-$T$ DAG and applying the same
argument.
Table~\ref{tab:main-summary} summarizes these complexity comparisons. In addition,
\begin{table}[h]
	\centering
	\small
	\setlength{\tabcolsep}{4pt}
	\begin{tabular}{@{}lcc@{}}
			\toprule
			\textbf{Setting} & \textbf{Real-valued ERM} & \textbf{$\mathsf{ERM}_{\text{bit}}$} \\
			\midrule
			Shallow ERM & $\notin \textnormal{NP}^{\ast\ast}$ & NP-complete \\
			Deep ERM, poly $\sigma$ & $\cdot$ & $\notin \textnormal{NP}^{\ast}$ \\
			Deep ERM, pw-linear $\sigma$ &  $\notin \textnormal{NP}^{\ast\ast}$ & NP-complete \\
			Backprop-Sign, poly $\sigma$ & in $\textnormal{P}$  & $\notin \textnormal{BPP}^{\ast\ast\ast}$ \\
			Backprop / sign, pw-linear $\sigma$ & $\cdot$ & in $\textnormal{P}$ \\
			\bottomrule
		\end{tabular}
	\caption{Main complexity comparisons between the classical real-valued ERM
			formulation and the bit-bounded $\mathsf{ERM}_{\text{bit}}$ formulation.
			``poly $\sigma$'' / ``pw-linear $\sigma$'' denote fixed polynomial /
			piecewise-linear (ReLU-type) activations; The entry $\notin \textnormal{NP}^{\ast\ast}$ indicates hardness via
			the existential theory of the reals ($\exists\mathbb{R}$), so NP membership
			is unknown and believed unlikely for real-valued ERM. The entry marked
			${}^{\ast}$ assumes that the polynomial hierarchy (PH) does not collapse $\textnormal{PH} \not \subseteq \Delta^P_2$, and			${}^{\ast\ast\ast}$ assume the constructive univariate radical conjecture and $\textnormal{NP}\not\subseteq\textnormal{BPP}$.}
	\label{tab:main-summary}
\end{table}
We give an intuitive visualization of the growth rate difference between non-linear polynomials and piecewise linear activations in \Cref{fig:gradients-poly-vs-relu}. Our result offers a complexity-theoretic perspective that may help explain, at an abstract level, why piecewise-linear activations are attractive in modern deep learning architectures.

\subsection{Discussion}
This section discusses the implications and limitations of our results and outlines several open questions.

\paragraph*{Real vs. bit ERM.}
Our results show that the choice of computational model is central to understanding the complexity of deep learning. Namely, unrealistic real numbers render the complexity of ERM far from ERM on numbers that can be represented in practice, as shown in \Cref{tab:main-summary}. We believe that our $\mathsf{ERM}_{\text{bit}}$ is closer to ERM in practice and consider deriving further bounds in this model as an interesting theoretical direction with practical significance. 
It is worth noting that  the $\mathsf{ERM}_{\text{bit}}$ model is still somewhat far from practice and should still be considered as an intermediate model that is closer to practice. 
For example, our positive result for piecewise linear activations for $\mathsf{ERM}_{\text{bit}}$ and Backprop-Sign overlook the empirical fact that gradient exploding can still occur in such activations \cite{DBLP:books/daglib/0040158}; however, such gradient explosions are incomparably small than those produced by non-linear polynomials.

\paragraph*{Depth, activation functions, and practical relevance.}
For deep networks, we obtain a sharp contrast between polynomial activations and standard piecewise-linear activations such as ReLU, highlighting depth and the activation as key drivers of computational hardness in the bit model. 
Our lower bounds apply for {\em any} non-linear polynomial with rational coefficients. Polynomial activations, especially quadratic ones, are well studied both in practice and theoretically~\cite{soltani2019fast, DBLP:conf/colt/Allen-ZhuL23, blondel2017multi, livni2014computational, sarao2020optimization, bartan2021neural, du2018power, soltanolkotabi2018theoretical}. In addition, due to Taylor expansion, used to approximate smooth activations (sigmoid, tanh, etc) \cite{roheda2024volterra}, polynomial activations admit strong practical motivation. Hence, even though our results are worst-case and theoretical, we believe our dichotomy gives a valuable distinction between the two families (piecewise linear and polynomial).

Beyond the polynomial versus piecewise-linear regimes, it is natural to ask where other nonlinearities used in practice fit into this picture, such as smooth activations or non-Lipschitz activations like \(1/x\). Our current techniques do not classify these activations, and accomplishing this remains an open question. In particular, we show in Appendix~\ref{sec:lower_bound_poly} that our techniques cannot be generalized to power functions with non integer rationals or negative integer exponents. 

\paragraph*{Exploding/vanishing gradients and ``non-benign'' overflow.}
Our lower bounds should not be read as claiming that practical exploding or
vanishing gradients are \emph{typically} computationally hard. Rather, they show
that in finite-precision models there exist worst-case instances where overflow
(or underflow) is \emph{algorithmically} non-benign: the issue is not merely that
numbers become large, but that their binary representations encode hard
arithmetic; thus, ``tracking the magnitude'' is insufficient. This contrasts with
benign overflow patterns (e.g., repeated exponentiation with base $2$) that correspond
to simple bit shifts and are easier to predict and control.

\paragraph{Loss Functions} The $\#$-P-hardness result presented in \Cref{thm:SLP} uses a non-standard loss function. This loss function is used to efficiently extract a specific bit $j$ from a very large number; this cannot be done using standard losses such as squared, hinge, etc., due to their intrinsic algebraic properties. Even though,  
we still can obtain a slightly weaker lower bound using the standard hinge loss $\ell(x,y) = \max\{0,1-y\cdot x\}$ by considering the sign (rather than a specific bit) of the output of an arithmetic circuit. We give the details in Appendix~\ref{sec:omited_3}.

\paragraph{Dimensions and Regularization}
The main hardness result in \Cref{thm:SLP} uses large input/output dimensions. A natural open
question is whether the same \(\#P\)-hardness can be shown while keeping both dimensions \(O(1)\). We note that if one allows a centered
quadratic regularizer \(\lambda\|\theta-\theta^\ast\|_2^2\), then such a
constant-dimensional variant is possible as we show in the appendix (see
Remark~\ref{rem:one-dim-via-l2}).

\paragraph{Number representation of deep networks in practice.}
Real-world deep learning does not operate over exact reals: parameters, activations,
and gradients are stored in finite-precision floating-point formats (typically IEEE~754),
with rounding, a finite exponent range, and special values such as $\pm\infty$ and NaN;
overflow and underflow are therefore intrinsic to the arithmetic~\cite{ieee7542019}.
Moreover, modern training commonly uses \emph{mixed precision}: many tensors are kept in
low precision (e.g., FP16/BF16/TF32) for throughput, while critical accumulations
(e.g., master weights and optimizer updates) are performed in FP32; in FP16, (dynamic)
loss scaling is used to mitigate gradient underflow~\cite{micikevicius2017mixed,kalamkar2019bfloat16,nvidia_a100_2020}.
Our model abstracts these representation choices while still respecting the finite-precision
regime, which is invisible in real-RAM abstractions.

\paragraph{Smooth activations in practice.}
Smooth nonlinearities (sigmoid, $\tanh$, softmax, GELU) are typically implemented via finite-precision routines
(e.g., polynomial/rational approximations or lookup tables) followed by rounding (and often clipping or numerical
stabilization). In our terminology, such implementations correspond to bit-bounded activation maps $\phi_k$ with
$k$-bit outputs. Thus, for any $k\le \poly(N)$, Remark~\ref{cor:bit-bounded} implies polynomial-time forward
evaluation and one backward pass for the resulting computation DAG. This remark concerns evaluation of a fixed
finite-precision implementation, and does not assert tractability for the corresponding exact real-valued
activation.

\paragraph*{Real-RAM vs. bit model in other domains}
Our results give evidence that learning theory models are occasionally far away from practice. Studying core learning theory questions on more realistic models, thus bridging theory and practice, is a core challenge. For example, we give in Appendix~\ref{sec:PAC} an easy construction showing that in the setting of PAC learning, the question of how many samples are needed to learn a simple class in the realizable setting, depends on whether one works in an ideal real-number model or in a finite-precision bit model. 
\begin{theorem}
	\label{thm:intro-pac}
	There exists a hypothesis class and distribution where the realizable PAC sample complexity is $O(1)$ in the exact \textnormal{real-RAM} model, but scales as $\Omega(q)$ in the bit model with $q$-bit precision rounding.
\end{theorem}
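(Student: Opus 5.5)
We will construct an explicit hypothesis class together with a distribution, tailored to the gap between exact real arithmetic and $q$-bit rounding.

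Take the domain $\cX=[0,1]$ with labels in $\{0,1\}$, and let $\cH=\{h_t : t\in[0,1]\}$ with
\[
h_t(x)=\mathbf{1}[x\ge t].
\]
We choose $\cD$ so that a single example identifies the target in the real model but not after rounding. Concretely, fix the target $t^\ast$ and let $\cD$ be supported on the two points $x=t^\ast$ and $x=1$, for instance uniformly. Shift the labels so that a single point reveals $t^\ast$ exactly. A cleaner variant is the class $h_t(x)=\mathbf{1}[x=t]$ with $\cD$ uniform on $\{t^\ast, \text{other}\}$.

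\textbf{Real-RAM side.} In the real model, any learner that outputs a hypothesis consistent with the sample reads $t^\ast$ exactly from a single positive example. Hence $O(1)$ samples suffice to make the failure probability at most $\delta$ for any fixed $\varepsilon$. Here the number of samples needed is $\log(1/\delta)$ divided by the mass of $t^\ast$; since that mass is a constant, this is $O(1)$.

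\textbf{Bit side.} In the bit model with $q$-bit rounding, the learner sees only $\mathrm{round}_q(x)$. We design the class so that $t^\ast$ is hidden in $q$ bits that must be learned one at a time. The plan is the following.

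1. Let the targets be $t\in\{0,1\}^q$, encoded as dyadics.
2. Let $\cD$ be uniform over $q$ points $x_1,\dots,x_q$, where each $x_i$ is representable with $q$ bits.
3. Let $h_t(x_i)=t_i$, so that each observed sample reveals one bit of $t$.
4. Arrange that in the real model a single example $x$ carries all of $t$. For this, the input is a real $x_i+\eta(t)$, where $\eta(t)$ is a tiny perturbation, smaller than $2^{-q}$, encoding $t$. The hypotheses are defined on the perturbed points.

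Rounding to $q$ bits then erases the perturbation $\eta(t)$, leaving only $x_i$ and the single label bit $t_i$.

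\textbf{Lower bound.} The standard coupon-collector / Fano argument then gives the $\Omega(q)$ bound. With $m=o(q)$ samples, a constant fraction of the indices are unseen, since each sample covers at most one index. The learner cannot distinguish hypotheses that differ on those indices. Taking $t$ uniformly at random, the expected error is at least $\tfrac12\cdot(\text{mass of unseen points})$, which is at least a constant once $\varepsilon$ is chosen below $1/4$. Realizability holds throughout, because every $h_t$ labels the data consistently.

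\textbf{Main obstacle.} The main difficulty is making the real-model upper bound legitimate rather than an artifact: the real learner must be allowed to decode $t$ from $\eta(t)$. This is standard in real-RAM, via exact floor and comparison operations, and I would state it explicitly as a modeling assumption. I must also check that the bit model's rounding is applied to the inputs before learning, so that the rounded sample distribution is genuinely independent of the unseen bits.
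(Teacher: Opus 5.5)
There is a genuine gap, and it sits on the real-RAM side. In realizable PAC learning (as the paper defines it), the marginal $D$ on $\cX$ is fixed first, and the learner must succeed for \emph{every} target $h^\star\in\cH$ from samples $(x_i,h^\star(x_i))$ with $x_i\sim D$. In your final construction the learner reads $t$ off the input $x_i+\eta(t)$, so the distribution of the inputs depends on the target. Your earlier variants have the same defect, with $\cD$ supported on $t^\ast$ itself.

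Once $D$ is fixed independently of $t$, the input carries no information about $t$, and each example reveals only its binary label $t_i$. Your coupon-collector argument then applies verbatim in the real model as well, so the claimed $O(1)$ versus $\Omega(q)$ separation disappears. Your ``main obstacle'' paragraph addresses the wrong issue. Decoding $t$ from $\eta(t)$ by exact floor and comparison operations is a computational matter, whereas sample complexity is information-theoretic. The real obstruction is that a target-dependent marginal is simply not part of the model; if it were allowed, unlabeled data alone could encode the target and trivialize learning.

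The fix is to hide the extra information in the real-valued \emph{label}, which is what rounding destroys, while keeping the marginal fixed. For example, take $D$ uniform on $q$ fixed points and set $h_t(x_i)=t_i+\eta(t)$, where $0\le \eta(t)<2^{-q}$ is injective in $t$. Then one exact label determines $t$, while $\mathrm{round}_q$ of the label is just $t_i$. Your unseen-coordinates argument then gives $\Omega(q)$ even for constant $\varepsilon$, which is somewhat stronger than what the paper proves.

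This label-based mechanism is exactly the paper's. The paper takes $\cX_q=\{2^0,\dots,2^q\}$ with $D_q$ uniform, and compares $h_c(x)=cx$ against $\mathrm{round}_q(cx)$ for $c\in[0,1)$. An exact label gives $c=y/x$. A rounded label at $2^k$ depends only on the first $k+q$ bits of $c$. So two targets that differ only in bit $2q$ agree on $2^0,\dots,2^{q-1}$ and differ at $2^q$. The learner must therefore sample the point $2^q$, which has mass $1/(q+1)$, giving $m\ge q\log\frac{1}{2\delta}$ for $\varepsilon<1/(q+1)$.
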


\paragraph{Quantization and finite-precision training.}
The bit-bounded formulation $\mathsf{ERM}_{\text{bit}}$ directly models
training under a fixed global precision constraint, as in fixed-point or
low-bit floating-point arithmetic (e.g., 8-bit or 16-bit formats) used on
modern hardware. The restriction that witnesses have polynomially bounded
bit-length corresponds to requiring that all weights, biases, and
intermediate activations be stored in a finite-precision format.

 Our $\mathsf{ERM}_{\text{bit}}$ model can be viewed as {\em quantization}  of the standard ERM theoretical problem on real numbers. In practice, there are several reasons to perform quantization \emph{during} 
training, as in our model: (i) memory and bandwidth
constraints on accelerators, (ii) the need to simulate the numerical behavior
of low-precision deployment hardware, and (iii) empirical evidence that
quantization-aware training can improve robustness and generalization for
quantized models \cite{courbariaux2015binaryconnect,rastegari2016xnor,
	polino2018model,rokh2023comprehensive}. Our bit-model results thus provide
worst-case complexity guarantees tailored to such finite-precision training
regimes \cite{tailor2020degree,zhou2018adaptive,nagel2022overcoming,sakr2022optimal,liu2023llm}, rather than to a post hoc ``train-then-quantize'' pipeline.

\subsection{Related Work}

\paragraph{Real-RAM computation and $\exists\mathbb{R}$-hardness.}
The Blum–Shub–Smale (BSS) model~\cite{blum1989theory} and Smale’s program on complexity over $\mathbb{R}$~\cite{smale1998mathematical} provide the standard real-RAM abstraction allowing each basic operation on real numbers to take $O(1)$. 
Many geometric and algebraic decision problems can be expressed in the \emph{existential theory of the reals} ($\exists\mathbb{R}$), which asks whether a given system of polynomial equalities and inequalities has a real solution~\cite{DBLP:conf/gd/Schaefer09}. 
A problem is called \emph{$\exists\mathbb{R}$-hard} if it is at least as hard as any problem in this class. It is known that $\textnormal{NP} \subseteq \exists \mathbb{R} \subseteq \textnormal{PSPACE}$ and widely assumed that both containments are proper \cite{canny1988some,schnider2023topological,schaefer2024existential}. Unlike NP, an $\exists\mathbb{R}$ witness is a tuple of \emph{real} numbers, which
need not admit a succinct rational encoding; this is one reason $\exists\mathbb{R}$
is believed to strictly contain NP \cite{DBLP:conf/gd/Schaefer09,schaefer2024existential}. Hence, classifying a problem as $\exists \mathbb{R}$-complete is considered a stronger result compared to merely NP-completeness. In particular,
in the standard ERM framework allowing infinite precision, training even fairly simple neural networks with one hidden layer is $\exists\mathbb{R}$-complete~\cite{NEURIPS2021_9813b270,BertschingerHJM23,hankala2024complexity}, where the completeness part assumes the real-RAM model.  

\paragraph{Bit-model hardness and SLP.}
For our hardness results, we rely on the \textsc{PosSLP} and \textsc{BitSLP} problems (see~\cite{DBLP:journals/siamcomp/AllenderBKM09, koiran2011interpolation,DBLP:conf/fsttcs/BlaserDJ24}), the problem of deciding the sign or a specific bit of a deep arithmetic circuit over the operations $\{\times,+,-\}$ and constants $0,1$. Bürgisser and Jindal~\cite{DBLP:conf/soda/BurgisserJ24} provided conditional lower bounds for \textsc{PosSLP} under the constructive radical conjecture of~~\cite{10.1145/3510359}. We build on these works to obtain hardness for deep-network $\mathsf{ERM}_{\text{bit}}$ and for \textsc{Backprop-Sign} with polynomial activations.

\paragraph{Hardness and algorithms for ERM.}

The NP-hardness of training small or shallow networks in the bit model has been known since the works of \cite{judd1988complexity,blum1988training,vsima2002training}. For ReLU and other piecewise-linear activations, \cite{dey20} analyzed one-node ReLU training, \cite{Goel21} established sharp hardness thresholds for depth-2 networks, and \cite{Froese22,Froese2024,froese2022computational} proved parameterized and fixed-dimension hardness results; \cite{Boob22} obtained further NP-hardness and approximation results. 

Several algorithms were also given to ERM. Important examples include the works of \cite{AroraBMM18,PilanciE20,ergen2021convex,ergen2021global}, who studied ReLU networks and derived polynomial-time algorithms under structural restrictions (e.g., fixed input dimension or width). For linear-threshold activations, \cite{khalife2024neural} give algorithms polynomial in the dataset size but exponential in network size and ambient dimension.

The recent work of \cite{doron2025hardness} provides a lower bound for {\em discrete} ERM, where each parameter weight is chosen from a discrete finite set given in the input explicitly. For this model, the above paper shows that this discrete variant of ERM is computationally hard. 
While the setting bears similarities to our first result, the above model is much more restrictive than our general bit-level ERM. More importantly, the activation functions used by \cite{doron2025hardness} are highly non-continuous and specifically engineered, in sharp contrast to our general family containing any non-linear polynomial of rational coefficients.

\paragraph{Exploding/vanishing gradients.}
Exploding and vanishing gradients have been studied extensively in recurrent and deep networks~\cite{hochreiter1991untersuchungen,hochreiter1998vanishing,bengio1994learning,glorot2010understanding,pascanu2013difficulty}, and more recent work analyzes instabilities in deep training dynamics~\cite{sun2022surprising} and surveys these issues~\cite{DBLP:books/daglib/0040158}. Our contributions are orthogonal: we construct explicit families of networks with polynomial activations whose gradients tend to be extremely large or small in both bit-complexity and magnitude sense, and we show that even deciding their sign is intractable under standard complexity assumptions. In terms of computational complexity, the work of~\cite{backurs2017fine} provides a fine-grained (polynomial) lower bound for computing the gradient. 

\paragraph{Organization.}
\Cref{sec:prel} formalizes our bit model, $\mathsf{ERM}_{\text{bit}}$, and the straight-line program (SLP) primitives.
\Cref{sec:ERM} proves the deep-network ERM dichotomy, including the $\#\mathsf{P}$-hardness for polynomial activations (\Cref{thm:SLP}). 
\Cref{sec:backprop} establishes the corresponding hardness for backpropagation (\Cref{thm:backprop-hard-bit}).
Due to space constraints, additional background and deferred proofs appear in the appendix.

\section{Preliminaries}
\label{sec:prel}

\paragraph{Notations} Let $\mathbf{1}_{B}$ be the indicator function for some boolean condition $B$, which is equal to $1$ if $B$ is true and is set to $0$ otherwise; for some $k \in \mathbb{N}$, let $[k] = \{1,\ldots,k\}$.

\paragraph{Bit Model and Rational Encoding}
We work in the standard Turing machine (bit) model, where generally numbers are represented in binary and running time is measured in bit operations. A rational $q\in\mathbb{Q}$ is encoded as a reduced pair $(\mathrm{num}(q),\mathrm{den}(q))\in\mathbb{Z}\times\mathbb{N}$ with $\mathrm{den}(q)>0$ and $\gcd(\mathrm{num}(q),\mathrm{den}(q))=1$, and its size is the total bit-length of $\mathrm{num}(q)$ and $\mathrm{den}(q)$. We write $|I|$ for the total bit-length of a standard encoding of an input instance $I$, and say an algorithm runs in polynomial time if it uses ${|I|}^{O(1)}$ bit operations. Arithmetic over $\mathbb{Q}$ is exact (addition, multiplication, and polynomial evaluation), with cost measured by bit length; unlike the real-RAM model, operations are not unit-cost.

\paragraph{Empirical Risk Minimization}
We give below the formal definition of ERM in the bit model. 
\paragraph{Input.}
An ERM instance specifies a feedforward network and a finite dataset.
The network is a directed acyclic graph $N=(V,E)$ with vertex set
$V=S \cup H\cup T$, where $S$ are the unique input vertices, $H$ is the set of
hidden neurons, and $T$ is the set of output neurons.
 The \emph{depth} is the length of
the longest directed path from some $s \in S$ to a vertex in $T$.
The dataset is
$
\cD = \{(x_i,y_i)\}_{i=1}^n,
$
with inputs $x_i \in \cX \subseteq \mathbb{Q}^{S}$ and labels
$y_i \in \cY \subseteq \mathbb{Q}^{|T|}$.
Each $v\in H\cup T$ has a fixed activation
$\sigma^v:\mathbb{Q}\to\mathbb{Q}$, and we are given a loss function
$\cL:\cY\times\cY\to\mathbb{Q}$. All $\sigma^v$ and $\cL$ are
polynomial-time computable on rational inputs.

let $\Theta = \Q^{E} \times \Q^{E}$ be the parameter domain of the instance, which enables selecting scalar weight and bias per edge.  
All numerical data in the instance are rationals encoded as specified above; the \emph{size} of the instance is the total bit-length of this
encoding.

\paragraph{Network semantics.}
A parameter vector is a tuple
\[
\theta=(w_{u,v},b_{u,v})_{(u,v)\in E}\in\Theta,\footnote{Our lower bounds yield similar results if a  bias is defined for every node instead for every~edge.}
\]
Given $\theta$ and an input $x\in\Q^{S}$, for $s \in S$ which is a source in the network, set $f^s_\theta(x):=x_s$. For each other $v \in V$ define
\[
f^v_\theta(x)
:=\sigma^v\!\left(x_v+
\sum_{(u,v)\in E} \bigl(w_{u,v}\cdot f^u_\theta(x) + b_{u,v}\bigr)
\right),
\]
where $x_v := 0$ if $v \notin S$.  Define the network output as $f_\theta(x):=(f^t_\theta(x))_{t\in T}\in\cY$. 

\paragraph{Objective and decision problem.}
The empirical loss of the parameter set $\theta$ on $\cD$ is defined by
$$
L(\theta) := \sum_{i=1}^n \cL\bigl(f_\theta(x_i), y_i\bigr).
$$
For some $C = (C_1,C_2) \in \mathbb{N}^2$, in the bit-model $\mathsf{ERM}_{\text{bit}}(C)$ decision problem we consider the language
\begin{equation*}
	\begin{aligned}
		\mathsf{ERM}_{\text{bit}}(C)
		&= \bigl\{\, I \,\big|\, \exists \theta\in \Theta
		\text{ s.t. } L(\theta)\le\gamma,\\
		&\qquad |\mathrm{enc}(\theta)| \le C_1 \cdot |I|^{C_2} \bigr\}.
	\end{aligned}
\end{equation*}
where $I$ is an encoded ERM instance with binary bit-length $|I|$, and
$\mathrm{enc}(\theta)$ is the binary encoding of $\theta$
(the certificate) with encoding size $|\mathrm{enc}(\theta)|$. Recall that we use $\mathsf{ERM}_{\text{bit}} = \mathsf{ERM}_{\text{bit}}(C)$ for a fixed and sufficiently large parameter complexity $C$ throughout the paper.

In other words, given an instance $I=(N,\cD,\cL,\gamma)$, the bit-model ERM decision problem asks whether there exists a
\emph{rational} parameter vector $\theta$ whose binary encoding has length polynomial in $|I|$ such that
$L(\theta)\le\gamma$. This excludes solutions with super-polynomial encoding, which are allowed in real-RAM.

\begin{definition}[$(a,b)$-promise $\mathsf{ERM}_{\text{bit}}$]
\label{def:ermbit-promise}
Fix $a<b$ in $\mathbb{Q}$. In the \emph{$(a,b)$-promise} version, the input is an $\mathsf{ERM}_{\text{bit}}$ instance
$I$ (with the same encoding conventions) and it is promised that either
(i) there exists a feasible $\theta$ with $L(\theta)\le a$ (YES case), or
(ii) for every feasible $\theta$ we have $L(\theta)\ge b$ (NO case).
The goal is to distinguish YES from NO.
\end{definition}

\paragraph{Complexity Classes.} 
The class $\mathrm{P}$ consists of all problems that can be solved in polynomial time by such a deterministic Turing machine. The class $\mathrm{NP}$ consists of all problems for which a proposed solution can be verified in polynomial time by a deterministic Turing machine, given the input and the solution. 
{\em Bounded-error probabilistic polynomial time (BPP)} describes decision problems decidable by a probabilistic Turing machine with success probability at least $\frac{2}{3}$ in polynomial time. The class $\#\mathrm{P}$~\cite{valiant1979complexity} is the counting analogue of $\mathrm{NP}$: it contains problems that ask for the number of polynomial-length certificates (witnesses) that make a polynomial-time verifier accept a given input.
While P $\subseteq$ BPP and P $\subseteq$ NP, it is only conjectured that P $=$ BPP and P $\neq$ NP.

\paragraph{Straight Line Programs (SLP)}
The paper will use {\em straight-line programs (SLP)} in some of the reductions. 

\begin{definition}
    \label{def:SLP_p}
	A (numeric) {\em straight-line program (SLP)} $P$ is a sequence of rationals $(a_0, a_1, \ldots , a_{\ell})$ such
	that $a_0 \in \Q$ and $a_i = a_j \oplus a_k$~ for all $1 \leq i \leq \ell$, where $\oplus \in \{+, -, *\}$ and $j, k < i$. 
\end{definition} 

The definition of SLP given above is a softened definition defined for our reduction needs. Specifically, SLPs are commonly defined as univariate polynomials with $a_0 = 1$. In our lower bounds, we use either $a_0 = 1$ or $a_0 = 2^{-\ell}$, where the latter yields programs with all intermediate values in $[-1,1]$.  
A thorough discussion and related results proving why the above definition is without loss of generality in our setting, and further background on the subject are provided in Appendix~\ref{sec:SLP}. 

In SLPs, we are given only the inductive representation of their output (i.e., the last variable). Importantly, we do not have an explicit encoding of the variable's values.

This leads to the following problems originating in \cite{DBLP:journals/siamcomp/AllenderBKM09} and also appears in \cite{koiran2011interpolation,DBLP:conf/fsttcs/BlaserDJ24}. 

\begin{problem}
	\label{prob:BitSLP_p}
	\textnormal{\textsf{BitSLP:}} Given an integer $j \in \mathbb{N}$ and an \textnormal{SLP} $P = (a_0,\ldots,a_{\ell})$ computing a rational $u/v = a_{\ell} = n_P$ with $u \in \mathbb{Z}$ and $v \in \mathbb{N}$, decide if the $j$-th \textnormal{LSB} binary digit of $n_P$ is $1$, i.e., decide whether $\lfloor \tfrac{|u|}{v} / 2^j \rfloor \bmod 2 = 1$. 
\end{problem}

A strong hardness result for this problem follows from~\cite{DBLP:journals/siamcomp/AllenderBKM09}. 

\begin{theorem}
	\label{thm:BitSLP}
\cite{DBLP:journals/siamcomp/AllenderBKM09}: 
\textnormal{\textsf{BitSLP}} is $\textnormal{\#P-hard}$. 
\end{theorem}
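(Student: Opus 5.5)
This result is quoted from \cite{DBLP:journals/siamcomp/AllenderBKM09}; here is how I would reconstruct the argument. The plan is to reduce a canonical $\#$P-complete function, the permanent of a $0/1$ matrix (Valiant~\cite{valiant1979complexity}), to \textsf{BitSLP} via polynomial-time Turing reductions. The idea is to build a polynomial-size SLP whose output is a huge integer $N$ with the permanent sitting in a known block of bit positions. A \textsf{BitSLP} oracle queried at each position of that block then recovers the permanent bit by bit.

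\textbf{Key steps.}
\begin{enumerate}
\item Given an $n\times n$ $0/1$ matrix $A$, consider
\[
Q(x_1,\dots,x_n)=\prod_{i=1}^n\Bigl(\sum_{j=1}^n A_{ij}x_j\Bigr).
\]
The coefficient of the monomial $x_1x_2\cdots x_n$ in $Q$ is $\mathrm{per}(A)$.
\item Kronecker-substitute $x_j = 2^{m\cdot d_j}$ for a block width $m$ and integer exponents $d_j$. Choose the $d_j$ so that distinct exponent vectors $(e_1,\dots,e_n)$ with $\sum_j e_j=n$ give distinct sums $\sum_j e_j d_j$. The choice $d_j=(n+1)^{j-1}$ works, since it is a base-$(n+1)$ encoding and each $e_j\le n$.
\item Take $m$ larger than $\log_2$ of any coefficient of $Q$. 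The coefficients are at most $n^n$ (they are at most $Q(1,\dots,1)$), so $m=n\lceil\log_2 n\rceil+1$ suffices. Then the integer $N=Q(2^{md_1},\dots,2^{md_n})$ is a concatenation of nonoverlapping $m$-bit blocks, one per monomial, because all coefficients are nonnegative and no carries occur. The block at offset $m\cdot\sum_j d_j$ equals $\mathrm{per}(A)$.
\item Show that $N$ has an SLP of size $\mathrm{poly}(n)$, starting from $a_0=1$. The number $2$ is $1+1$. Each $2^{md_j}$ has exponent at most $m(n+1)^{n}$, which has polynomially many bits, so repeated squaring plus multiplications builds it in $O(n\log n)$ steps. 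After that, $n$ linear forms and $n-1$ products finish $Q$.
\item Query the \textsf{BitSLP} oracle at the $m$ bit positions $j=m\sum_j d_j+t$ for $0\le t<m$. The positions are polynomial-size integers, and the answers reassemble $\mathrm{per}(A)$.
\end{enumerate}

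\textbf{Main obstacle.} The delicate step is the no-interference argument in step 3. It has two parts: the substituted exponents must be injective on the relevant monomials, and the block width must exceed the coefficient size so that no carries spill from one block into the next. Both follow from nonnegativity of all coefficients together with the bounds above. Once this is in place, the reduction has polynomial size and uses polynomially many oracle calls, which gives $\#$P-hardness under Turing reductions.
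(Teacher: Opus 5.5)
Your proposal is correct. The paper gives no proof of this statement; it imports it as a black box from Allender et al. Your argument is a valid, self-contained proof of the cited result, and it is the standard one: the $0/1$ permanent is the $x_1\cdots x_n$ coefficient of $\prod_i\sum_j A_{ij}x_j$; the Kronecker substitution $x_j=2^{m(n+1)^{j-1}}$ places the nonnegative coefficients, each below $2^m$, in disjoint carry-free $m$-bit blocks; and $m$ BitSLP queries at positions $m\sum_j d_j+t$ then recover $\mathrm{per}(A)$ under Turing reductions.

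Two blemishes are cosmetic only. The index $j$ is overloaded in step 5. The $O(n\log n)$ repeated-squaring cost is per $x_j$, so the SLP has size $O(n^2\log n)$ overall, which is still polynomial.
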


We also use the following slightly easier version, concerning the sign of $n_P$ rather than a specific bit, as a source of hardness for our reductions. 
\begin{problem}
	\label{prob:SLP}
	\textnormal{\textsf{PosSLP:}} Given an \textnormal{SLP} $P = (a_0,\ldots,a_{\ell})$ computing a number $n_P = a_{\ell}$, decide if $n_P > 0$. 
\end{problem}

We will use a lower bound on \textnormal{PosSLP} obtained by \cite{DBLP:conf/soda/BurgisserJ24}, stating that a conjecture of \cite{10.1145/3510359} (see Appendix~\ref{sec:SLP}) and a randomized polynomial time algorithm for PosSLP would imply the very unlikely statement that \textnormal{NP} $\subseteq$ \textnormal{BPP}.  

\begin{theorem}
	\label{thm:PosSLP'}
	\cite{DBLP:conf/soda/BurgisserJ24}: If \textnormal{Conjecture~\ref{con:radical}} is true and \textnormal{PosSLP} $\in$ \textnormal{BPP}, then \textnormal{NP} $\subseteq$ \textnormal{BPP}.
\end{theorem}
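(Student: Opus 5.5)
The statement is the result of Bürgisser and Jindal, and the plan follows the usual shape of such results: take an $\textnormal{NP}$-hard problem about univariate polynomials given succinctly by SLPs, and show that a BPP algorithm for \textsf{PosSLP}, together with Conjecture~\ref{con:radical}, decides it in randomized polynomial time. I have not verified the details below against their paper, so this is a plan in the spirit of their argument rather than a reconstruction of it.

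\emph{Source of hardness.} I would start from a Plaisted-style problem: given a small SLP computing a univariate integer polynomial $f(x)$ (for instance a product or sum of sparse polynomials built from $x^{2^k}$ by repeated squaring), decide whether $f$ has a real root in a prescribed interval, or a common root with another such polynomial. I would arrange the encoding so that every root is real and simple, say $f=\prod_i (x-\alpha_i)$ with the $\alpha_i$ in a grid of exponentially many rationals. Then satisfiability of the underlying 3-SAT or subset-sum instance corresponds to the existence of some $\alpha_i$ in a target set.

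\emph{Role of the conjecture.} The constructive univariate radical conjecture says roughly that if a polynomial of this fully-split form has an SLP of size $s$, then its square-free part (radical) has an SLP of size $\poly(s)$, computable in polynomial time. I would use it to replace $f$, or a product of $f$ with test polynomials, by its radical $g$. This makes every root simple, so a root inside an interval $(p,q)$ with rational endpoints is witnessed by a strict sign change of $g$. To stay robust even with a single root there, I would combine the radical with the derivative $g'$ in a Sturm-sequence fashion.

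\emph{Using the oracle.} Each sign $\operatorname{sign} g(r)$ at a rational point $r$ is one \textsf{PosSLP} instance: substitute the SLP for $r$ into the SLP for $g$. Run with amplified success probability, the assumed BPP algorithm answers polynomially many such queries, and a union bound keeps the overall error below $1/3$. I would then binary-search over the exponentially large grid with sign-change tests to decide whether a root lies in the target region, which decides the $\textnormal{NP}$-hard problem in BPP and gives $\textnormal{NP}\subseteq\textnormal{BPP}$.

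\emph{Main obstacle.} The hardest step is making the sign-change test complete. Roots of even multiplicity, or an even number of roots in one interval, cause no sign change, and detecting them is exactly where the constructive radical conjecture is needed. Choosing the hard instance so that the radical is the right object, and so that root existence reduces to polynomially many sign queries rather than to a count over exponentially many intervals, is the delicate part. If a direct argument does not go through there, I would follow Bürgisser–Jindal's construction and treat this theorem as imported, as the paper does.
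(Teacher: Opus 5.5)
The paper gives no proof of this statement. It imports the result from Bürgisser and Jindal, so your fallback of treating it as a citation is exactly what the paper does. As a self-contained reconstruction, however, your sketch has a genuine gap at the step you flag, and the fixes you propose do not close it.

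Binary search with sign-change tests can only close in on a root once you hold an interval whose endpoints have opposite signs. It cannot distinguish ``no root'' from ``an even number of simple roots'' in a region. Passing to the radical removes even multiplicities, but not even counts. A Sturm sequence does not help either: for an SLP polynomial of exponential degree, the remainder sequence has exponentially many members of exponential degree, with no known polynomial-size SLPs. So sign evaluations of $g$ and $g'$ at rational points do not give a root count.

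Your setup is also in tension with the role of the conjecture. If the hard polynomial already had only simple roots, the radical would be superfluous. In the available hard instances (Perrucci--Sabia's real-root version of Plaisted's construction), the roots are algebraic points rather than a rational grid. The polynomial need not be square-free; it may even be nonnegative with every root of even multiplicity, so it never changes sign.

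The missing idea is to stop locating roots and decide a \emph{parity}. If $g=\textsf{rad}(f)$ is square-free and all real roots lie in $(-M,M)$, then
\[
\mathrm{sign}\, g(-M)\cdot \mathrm{sign}\, g(M) \;=\; (-1)^{\#\{\text{distinct real roots of } f\}} .
\]
This holds because each real root $r$ contributes a factor $x-r$ that is negative at $-M$ and positive at $M$. Complex-conjugate pairs contribute positive factors, and the leading coefficient cancels. So a single \textsf{PosSLP} query on the integer $g(M)g(-M)$ reveals the parity of the number of real roots of $f$. This is precisely where the constructive radical conjecture is needed.

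Existence is then converted into parity by randomized isolation (Valiant--Vazirani). After random hashing, a satisfiable formula has exactly one solution with probability $\Omega(1/n)$, and an unsatisfiable one has none. Pushed through a suitably parsimonious version of the Perrucci--Sabia reduction, this becomes ``odd number of real roots'' versus ``none.'' You then repeat $O(n)$ times, amplify both the assumed \textnormal{BPP} algorithm and the randomized radical construction, and take a union bound to get $\textnormal{NP}\subseteq\textnormal{BPP}$. Note that randomness is essential in the isolation step itself, not only for amplifying the oracle. This is why the conclusion is \textnormal{BPP} and not \textnormal{P}, and why no deterministic search over an exponential grid can replace it.
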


\begin{figure*}[t]
\centering
\resizebox{\textwidth}{!}{%
\begin{tikzpicture}[
  box/.style={draw,rounded corners,align=center,inner sep=5pt},
  arr/.style={-latex,thick},
  note/.style={align=left,font=\scriptsize},
  node distance=10mm
]
\node[box] (slp) {SLP};
\node[box, right=22mm of slp] (gadget) {$\sigma$-gadget\\simulates $+,-,\times$};
\node[box, right=22mm of gadget] (ermr) {Hardness of $\mathsf{ERM}_{\text{bit}}$\\(promise gap)};

\node[box, right=22mm of ermr] (back_b) {Backprop-Bit $\#P$-hard};

\node[box, below=14mm of ermr] (back) {Backprop-Sign\\not in BPP (cond.)};

\draw[arr] (slp) -- node[note,above]{Lemma~\ref{lem:polynomial_representation}} (gadget);
\draw[arr] (gadget) -- node[note,above]{Lemma~\ref{lem:SLP-to-ERM}} (ermr);
\draw[arr] (ermr) -- (back);
\draw[arr] (ermr) -- (back_b);

\node[box, below=14mm of slp, minimum width=60mm] (relu) {Piecewise-linear\\$\mathsf{ERM}_{\text{bit}}\in$NP; exact backprop in P};
\end{tikzpicture}%
}
\caption{Proof roadmap: SLP based hardness is embedded into deep networks via a fixed polynomial activation $\sigma$, yielding conditional lower bounds for $\mathsf{ERM}_{\text{bit}}$ and Backprop-Sign; piecewise-linear activations admit NP verification and polynomial-time exact backprop in the bit model.}
\label{fig:roadmap}
\end{figure*}
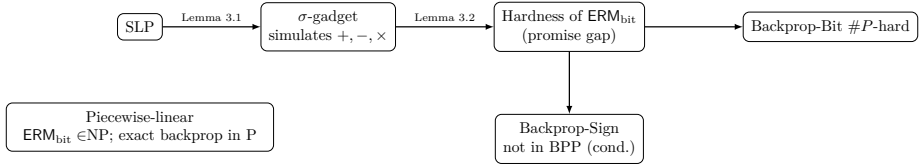

\section{Computational Complexity of Deep Networks}
\label{sec:ERM}
In this section, we prove \Cref{thm:SLP}. 
We use the following result in the construction. Let $\mathbb{F}[T]$ denote the ring of polynomials in the indeterminate $T$ with coefficients in a field $\mathbb{F}$. 

\begin{lemma}
	\label{lem:polynomial_representation}
	Let
	$f(T) = \sum_{k=0}^{\mu} r_k T^k \in \mathbb{Q}[T]$, where $r_{\mu} \neq 0,  {\mu} \geq 2
	$,
	be a non‑linear polynomial with rational coefficients. Then, there is a polynomial-time algorithm that, given $f$, returns rational numbers $\lambda_0,\ldots,\lambda_{\mu}$ such that for every $x,y \in \mathbb{R}$ it holds that 
	\[\sum_{j=0}^{\mu} \lambda_j\bigl(f(x+y+j) - f(x+j) - f(y+j)\bigr) = x \cdot y.\]
\end{lemma}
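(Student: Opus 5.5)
Define, for each shift $j$, the polarization $D_j(x,y) := f(x+y+j)-f(x+j)-f(y+j)$. The plan is to show that the bilinear part of these expressions spans $xy$ once we take suitable linear combinations over $j=0,\ldots,\mu$. The monomials in $D_j$ of degree at least two in the joint variables $(x,y)$ cause the difficulty, and the shifts $j$ will be used to cancel them. This is in the spirit of finite-difference (Vandermonde) arguments.

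First expand with Taylor's formula around $j$: $f(x+j)=\sum_{k} \frac{f^{(k)}(j)}{k!}x^k$. This gives
\[
D_j(x,y)=\sum_{k=0}^{\mu} \frac{f^{(k)}(j)}{k!}\bigl((x+y)^k-x^k-y^k\bigr)
= -f(j) + \sum_{k=2}^{\mu} \frac{f^{(k)}(j)}{k!}\,h_k(x,y),
\]
where $h_k(x,y):=(x+y)^k-x^k-y^k$. The $k=1$ term vanishes, and the $k=0$ term contributes $-f(j)$; note $h_0 = -1$. The polynomials $h_0,h_2,h_3,\ldots,h_\mu$ are linearly independent, since they have distinct total degrees, and $h_2=2xy$. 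It therefore suffices to find $\lambda_0,\ldots,\lambda_\mu\in\Q$ with
\[
\sum_j \lambda_j f^{(k)}(j) = 0 \text{ for } k\in\{0,3,4,\ldots,\mu\}, \qquad \sum_j\lambda_j f''(j)=1 .
\]
Then the combination equals $\tfrac{1}{2}\cdot 2xy = xy$.

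The main step is solvability of this linear system. There are $\mu$ constraints and $\mu+1$ unknowns, so I would argue via the vector space $W=\mathrm{span}\{f(T+j):0\le j\le\mu\}$. Because $\deg f=\mu$, the shifts $f(T+j)$ for $j=0,\ldots,\mu$ span all of $\Q[T]_{\le\mu}$. To see this, note that finite differences $\Delta^m f$ have degree exactly $\mu-m$ with nonzero leading coefficient, so the iterated differences $\Delta^m f(T)$ for $m=0,\ldots,\mu$ form a triangular basis, and each $\Delta^m f(T)$ is an integer combination of the shifts $f(T+j)$.

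Hence there exist rationals $\lambda_j$ with $\sum_j\lambda_j f(T+j) = T^2/2$ identically as polynomials in $T$. Evaluating the derivatives at $T=0$ gives exactly the required system: $\sum_j\lambda_j f^{(k)}(j)$ is the $k$-th derivative of $T^2/2$ at $0$, which is $1$ for $k=2$ and $0$ otherwise. Equivalently, directly,
\[
\sum_j\lambda_j D_j(x,y)=g(x+y)-g(x)-g(y), \qquad g(T)=T^2/2,
\]
and this equals $xy$. That equivalence is the cleanest route, and it needs no Taylor bookkeeping.

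For the algorithm, the $\lambda_j$ come from solving a $(\mu+1)\times(\mu+1)$ linear system over $\Q$: match coefficients of $\sum_j\lambda_j f(T+j)$ with those of $T^2/2$. The matrix is invertible by the spanning argument, so Gaussian elimination runs in polynomial time in the bit-length of $f$, with polynomially bounded rational entries. The only step I expect to need care is the spanning claim, which rests on the degree drop of $\Delta$ by exactly one (using $\mu\ge 2$ only to ensure $T^2$ lies in the span). Note also that since $f$ is fixed, $\mu$ is a constant.
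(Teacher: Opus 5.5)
Your proof is correct and follows the paper's overall route: show that the shifts $f(T+j)$, $0\le j\le\mu$, form a basis of $\mathbb{Q}[T]_{\le\mu}$, expand a quadratic in that basis, polarize, and compute the $\lambda_j$ by Gaussian elimination. The paper expands $U^2$ and halves the coefficients at the end, while you expand $T^2/2$ directly; your Taylor-expansion detour is correct but not needed.

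The one real difference is how the basis claim is proved. The paper proves linear independence: it compares coefficients from $U^{\mu}$ downward to get $\sum_j\alpha_j j^{\ell}=0$ for all $\ell\le\mu$, then inverts a Vandermonde matrix. You instead prove spanning directly with the iterated differences $\Delta^m f$. These have degree $\mu-m$ and nonzero leading coefficient $\tfrac{\mu!}{(\mu-m)!}r_\mu$, so they form a triangular basis built from integer combinations of the shifts. This is a shorter argument and makes the change of basis explicit.
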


For example, if $f(x) = x^2$, then $\lambda_0 = \frac{1}{2}$ and $\lambda_1,\lambda_2 = 0$ satisfy the conditions of the lemma since $\frac{1}{2}((x+y)^2-x^2-y^2) = x \cdot y$.
	All coefficients $r_k$ of $f$ and all coefficients $\lambda_j$
are in $\mathbb{Q}$ and are not restricted to be integers. 
Note also that ``polynomial'' here means with integer exponents. Allowing non‑integer exponents (such as $T^{1/2}$) would break the argument. We show in Appendix~\ref{sec:lower_bound_poly} that the statement cannot be generalized for rational non-integer exponents and for integer negative exponents.

We give the following reduction from BitSLP to $\mathsf{ERM}_{\text{bit}}$, to be used in the proof of \Cref{thm:SLP}. We give in \Cref{fig:roadmap} a road map for our SLP reductions and illustrate the construction in \Cref{fig:multiplication-gadget}.  

\begin{lemma}
	\label{lem:SLP-to-ERM}
	Let $\sigma\in\mathbb{Q}[T]$ be a non-linear polynomial and let $a,b \in \mathbb{N}$ such that $a<b$. There is a
	polynomial-time procedure that, given a \textnormal{BitSLP} instance
	$(j,P=(a_0,\dots,a_\ell))$, constructs an $\mathsf{ERM}_{\text{bit}}$ instance
	$I(P)$ with promise gap $a<b$ and activations in $\{\sigma,\mathrm{id}\}$ 
such that there is $\theta^\ast(P) \in \Theta$ satisfying
		$
		f_{\theta^\ast(P)}(a_0) \;=\; a_\ell \;=:\; n_P
		$.
		 In addition, $P$ is a \textnormal{YES}-instance of \textnormal{BitSLP} if
		and only if $I(P)$ is a \textnormal{YES}-instance of $\mathsf{ERM}_{\text{bit}}$ (with a promise $a<b$).
	The size of $I(P)$ is
	polynomial in the length of~$P$.
\end{lemma}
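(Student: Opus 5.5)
The plan is to simulate the SLP gate by gate with a layered network, and then to force the weights to their intended values using auxiliary samples. The loss will be a poly-time computable $0/1$ loss that reads off bit $j$ of the output.

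\textbf{Step 1 (gadgets).} Each addition or subtraction gate $a_i=a_j\pm a_k$ becomes a node with activation $\id$, incoming weights $1,\pm1$, and bias $0$. Each multiplication gate $a_i=a_j\cdot a_k$ uses Lemma~\ref{lem:polynomial_representation}. For every $t\in\{0,\dots,\mu\}$ we create three $\sigma$-nodes computing $\sigma(a_j+a_k+t)$, $\sigma(a_j+t)$ and $\sigma(a_k+t)$; the shifts $t$ are realized as edge biases. These feed into an $\id$-node with weights $\pm\lambda_t$. That node then equals $a_j a_k$ exactly.

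Since gates may reference arbitrary earlier $a_j$, we carry each value forward through $\id$ relay nodes, so the network is layered with depth $O(\ell)$ and size $O(\ell^2\mu)$. The input vertex receives $a_0$. Let $\theta^\ast(P)$ be the intended parameters: the weights $1,\pm1,\pm\lambda_t$, the biases $t$, and $0$ elsewhere. Then $f_{\theta^\ast(P)}(a_0)=n_P$ by induction on the gates, and $\theta^\ast$ has polynomial encoding size.

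\textbf{Step 2 (forcing $\theta=\theta^\ast$).} This is the main obstacle. Every rational $\theta$ of polynomial bit-length is a candidate, so we must make every $\theta\neq\theta^\ast$ incur large loss, while $\theta^\ast$ itself incurs loss at most $a$.

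I would give every relay and gadget node its own output coordinate, which uses the large output dimension mentioned in the discussion. I would also use many input coordinates, so that individual input vertices can inject test values through $x_v$ into specific nodes. For each edge we then add auxiliary samples whose labels certify that the edge's weight and bias equal their intended values. The idea is to zero out the other parents and feed the node several test points; the $0/1$ loss is $0$ exactly when the observed output matches the label. Because $\sigma$ and $\id$ are fixed and the test points are small rationals, two distinct inputs determine $w$ and $b$ for $\id$ nodes. For $\sigma$ nodes, $\deg\sigma+1$ inputs suffice: $\sigma(wx+b)$ matching at enough points pins down the affine map up to symmetries of $\sigma$. I would remove those symmetries with an extra test point, or accept any affine map that still yields the same gadget identity.

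The delicate point is that these auxiliary samples also pass through downstream nodes. The loss must therefore only inspect the designated output coordinate of each sample, which is allowed since $\cL$ is an arbitrary poly-time function of $(f_\theta(x),y)$. Labels will encode which coordinate to check.

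\textbf{Step 3 (bit extraction and gap).} For the main sample (input $a_0$, zeros elsewhere) the loss outputs $0$ if bit $j$ of the final output coordinate is $1$, and outputs $M$ otherwise, where $M$ is a large constant. Each auxiliary sample has $0/1$ loss scaled by $M$.

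Set $\gamma=a$, and choose $M\ge b$ with $M$ a fixed constant, so that any violated sample costs at least $b$. In the YES case $\theta^\ast$ gives loss $0\le a$. In the NO case, every $\theta$ either violates an auxiliary sample, costing at least $b$, or equals $\theta^\ast$ on all relevant edges, in which case the output is $n_P$ and the main sample costs $M\ge b$.

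Evaluating the loss is poly-time: the loss receives the network output as a rational and computes one bit of it. The size bound for $\theta$ is respected because $\theta^\ast$ is small and $C$ is large.
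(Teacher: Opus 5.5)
Your overall architecture matches the paper's: the Lemma~\ref{lem:polynomial_representation} gadget, every vertex serving as both an input and an output coordinate, auxiliary samples that pin the parameters, a bit-extraction loss on the main sample, and amplification for the gap. Scaling the loss by $M$ instead of replicating samples makes no difference. The gap is in Step~2, and it comes from your choice to let each auxiliary sample inspect only one designated coordinate. The parents' values on that sample are then unobserved. You therefore need an induction along a topological order showing that every earlier node computes exactly its $\theta^\ast$-value on \emph{all} auxiliary inputs, including those with nonzero injections $x_v$. You never state this induction, and your option to ``accept any affine map that still yields the same gadget identity'' breaks it.

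For instance, take $\sigma(z)=z^4$ and a $\sigma$-node $U$ meant to compute $\sigma(x_U+z+1)$ from its parent value $z$ (for $z^4$ the shift $1$ has $\lambda_1\neq 0$). The parameters $w=-1$, $b=-1$ give $\sigma(z+1-x_U)$. This agrees on the main input and on every test of $U$ that varies $z$ with $x_U=0$. Now inject at $U$ to test the aggregator edge $(U,Z)$, say with $\theta^\ast$-preactivation $0$ and then $1$, at $z=0$. Then $U$ outputs $16,1$ instead of $0,1$. The one-coordinate tests at $Z$ are then passed by $w_{U,Z}=-w^\ast_{U,Z}/15$ together with a shifted bias sum, and on the main input $Z$ generally no longer equals $a_ja_k$. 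So a NO instance may be accepted.

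There is also a size problem. ``Zeroing out the other parents'' by injecting only at those parents may require cancelling a value of exponential bit-length. This happens, e.g., when one parent of the tested node descends from the injected node through repeated doubling and squaring, unless every intermediate node is also reset.

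The paper sidesteps all of this by checking the \emph{whole} output vector. For a template $(y,\tilde y)$ of $O_\sigma(1)$-size constants, it injects $x(y,\tilde y)_v=\tilde y_v-\sum_{(u,v)\in E}(w^\ast_{u,v}y_u+b^\ast_{u,v})$ at every vertex. Then $f_{\theta^\ast}(x(y,\tilde y))=y$ with nothing propagating downstream, which removes your stated reason for a one-coordinate loss. The auxiliary loss is $\mathbf{1}[\hat y\neq y]$, so zero auxiliary loss fixes every node value. Each node's local equation then holds with \emph{known} parent values, and no induction is needed:
\begin{itemize}
\item Subtracting the baseline template from one that perturbs a single parent pins each identity-head weight and the bias sum. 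Individual biases are not identifiable, contrary to your Step~2, but only the sum matters.
\item For a $\sigma$-edge $(L,U)$, the templates with $f^L=\tau\in\{0,\dots,\mu\}$ and $x_U=0$ give $\sigma(w\tau+b)=\sigma(\tau)$, hence a polynomial identity. This suffices because on the main input all injections at $\sigma$-nodes are $0$.
\item Each $\sigma$-node is fed through an identity node $L$, so it has a single incoming edge. Your two-parent node $\sigma(a_j+a_k+t)$ would need a bivariate version of the $(\mu+1)$-point argument.
\end{itemize}
Your design could be repaired. Pin $\sigma$-nodes exactly by also varying their own injection, since $\sigma(x_U+c)\equiv\sigma(x_U+c')$ forces $c=c'$, and reset every node in each auxiliary input. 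As written, however, Step~2 does not show that a low-loss $\theta$ computes $n_P$ on the main input.
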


Using the above lemma, the proof of \Cref{thm:SLP} follows. 

\begin{figure}[t]
	\centering
	\resizebox{0.9\columnwidth}{!}{%
		\begin{tikzpicture}[
			font=\scriptsize,
			>=latex,
			var/.style={circle,draw,minimum size=6mm,inner sep=0pt},
			sigma/.style={circle,draw,minimum size=6mm,inner sep=0pt,fill=gray!10},
			lin/.style={rectangle,draw,rounded corners,minimum height=6mm,
				minimum width=10mm,align=center},
			]
			
			\node[var] (x) at (0,1.0) {$x$};
			\node[var] (y) at (0,-1.0) {$y$};
			
			\node[lin] (sxy) at (2,1.0) {$x+y+j$};
			\node[lin] (sx)  at (2,0.0) {$x+j$};
			\node[lin] (sy)  at (2,-1.0) {$y+j$};
			
			\node[sigma] (sig1) at (4,1.0) {$\sigma$};
			\node[sigma] (sig2) at (4,0.0) {$\sigma$};
			\node[sigma] (sig3) at (4,-1.0) {$\sigma$};
			
			\node[lin] (lincomb) at (6,0)
			{linear\\combination\\with $\lambda_j$};
			
			\node[var,right=0.8cm of lincomb] (out) {$x\cdot y$};
			
			\draw[->] (x) -- (sxy);
			\draw[->] (y) -- (sxy);
			\draw[->] (x) -- (sx);
			\draw[->] (y) -- (sy);
			
			\draw[->] (sxy) -- (sig1);
			\draw[->] (sx)  -- (sig2);
			\draw[->] (sy)  -- (sig3);
			
			\draw[->] (sig1) -- ++(0.7,0) |- (lincomb.north);
			\draw[->] (sig2) -- (lincomb.west);
			\draw[->] (sig3) -- ++(0.7,0) |- (lincomb.south);
			
			\draw[->] (lincomb) -- (out);
			
			\node[draw,dashed,inner sep=6pt,fit=(sxy) (sx) (sy) (sig1) (sig2) (sig3)]
			(block) {};
			
			\node[above=0.1cm of block.north] {for each shift $j=0,\dots,{\mu}$};
			
		\end{tikzpicture}%
	}
	\caption{Multiplication gadget using a fixed polynomial activation
		$\sigma\in\mathbb{Q}[T]$. For each integer shift $j=0,\dots,{\mu}$, the network
		forms three affine combinations $x+y+j$, $x+j$, and $y+j$, applies $\sigma$,
		and then takes a fixed linear combination with coefficients
		$\lambda_j\in\mathbb{Q}$ (indicated by the ``linear combination'' node).}
	\label{fig:multiplication-gadget}
\end{figure}
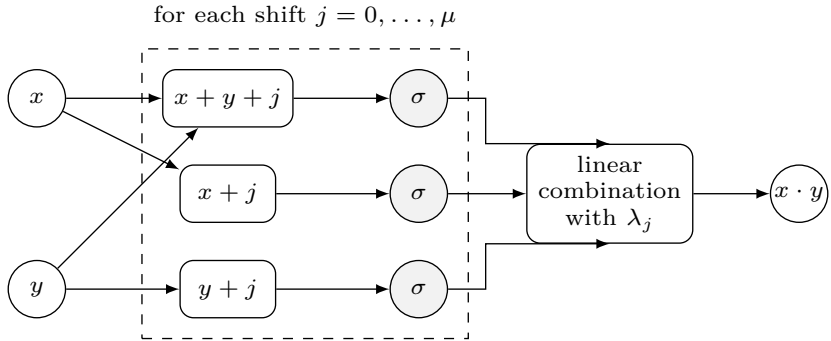

\section{Computational Complexity of Gradient Descent}
\label{sec:backprop}
In this section, we show that it is unlikely that there is a randomized polynomial-time
algorithm that, given a rationally specified network with polynomial activation, a dataset, a loss function, and a
parameter vector, decides the sign of a single gradient coordinate. 

We work in the same bit model and with the same network
and loss conventions as before, but we restrict to scalar-input,
scalar-output networks and the activation family introduced in the
previous section. 
Concretely, we assume:
(i) the input space is $\cX\subseteq\mathbb{Q}$ and the label space is
$\cY\subseteq\mathbb{Q}$;
(ii) for every non-input neuron $v$ the activation is either
$\sigma^v=\sigma$ or $\sigma^v=\mathrm{id}$, where
$\sigma(T)=\sum_{k=0}^{\mu} r_k T^k\in\mathbb{Q}[T]$ is a fixed non-linear
polynomial with rational coefficients;
(iii) the loss $\cL:\mathbb{Q}\times\mathbb{Q}\to\mathbb{Q}$ is
polynomial-time computable and differentiable in its first argument, and
the partial derivative with respect to the first argument is
polynomial-time computable and rational-valued on rational inputs.

We regard the loss $L(\theta)=\sum_i \cL(f_\theta(x_i),y_i)$, which is
defined by an algebraic expression in the parameters, as the restriction
to $\mathbb{Q}^p$ of a real-analytic function $L:\mathbb{R}^p\to\mathbb{R}$.
For any scalar parameter coordinate $w_e$, we write
$\partial L/\partial w_e(\theta)$ for the usual real partial derivative
of this extension evaluated at the given rational point $\theta$.
 The Backprop-Sign problem asks for the sign of a single gradient coordinate at
 a given rational parameter point.  The Backprop-Bit problem asks for the value of the $j$-th lower bit in this gradient. We provide a concise definition of both problems below.

\begin{definition}[Backpropagation sign  and Backpropagation bit problems]
	\label{def:backprop-sign}
	An instance of \textnormal{Backprop-Sign} with polynomial activation is a tuple
	$
	(N,\cD,\cL,\theta,e^\star)
	$
    (respectively, for \textnormal{Backprop-Bit} an instance is $(N,\cD,\cL,\theta,e^\star,j)$ with $j \in \mathbb{N}$)
	where $N=(V,E)$ is a network as above with activations
	$\sigma^v\in\{\sigma,\mathrm{id}\}$ for all non-input vertices
	$v\in V$ where $\sigma \in \Q[T]$ is a non-linear polynomial activation with degree $ \mu \geq 2$, $\cD=\{(x_i,y_i)\}_{i=1}^n\subseteq\mathbb{Q}\times\mathbb{Q}$
	is a dataset, $\cL:\mathbb{Q}\times\mathbb{Q}\to\mathbb{Q}$ is a
	polynomial-time computable loss, $\theta=(w_e,b_e)_{e\in E}$ is a
	parameter vector with all $w_e,b_e\in\mathbb{Q}$, and
	$e^\star\in E$ is a distinguished edge.  The question is whether
	$
	\frac{\partial L}{\partial w_{e^\star}}(\theta) > 0,
	$
    (whether the $j$-th bit in $\frac{\partial L}{\partial w_{e^\star}}(\theta)$ is $1$)
	where $L(\theta)=\sum_{i=1}^n \cL(f_\theta(x_i),y_i)$ is the empirical
	loss.  The input size is the total bit-length of all rationals
	specifying $N,\cD,\cL$ (as a program), $\theta$, and $e^\star$.
\end{definition}

The above formulation for \textnormal{Backprop-Sign} can be made computationally easier with a promise gap $a < b$, where we have to decide if the partial derivative is larger than $b$ or smaller than $a$. Formally, 

\begin{definition}[\textnormal{Backprop-Sign} with a promise]
	\label{def:backprop-sign-promise}
	An instance of \textnormal{Backprop-Sign} with a promise gap is a tuple 
	$
	(N,\cD,\cL,\theta,e^\star,b),
	$
	where $(N,\cD,\cL,\theta,e^\star)$ is a \textnormal{Backprop-Sign} instance and $b \in \mathbb{N}$. The goal is to distinguish between $\frac{\partial L}{\partial w_{e^\star}}(\theta) \geq b$ (the YES case) and $\frac{\partial L}{\partial w_{e^\star}}(\theta) \leq -b$ (the NO case). Moreover, $b \leq \poly(|I|)$ for a fixed polynomial.
\end{definition}

\bibliography{example_paper}
\bibliographystyle{plain}

\appendix

\section{Straight Line Programs (SLP)}
\label{sec:SLP}

In this section, we give the missing details on straight line programs (SLP). For the convenience of reading, we repeat some definitions and results given in \Cref{sec:prel}. We start with a more general definition of SLP.

\begin{definition}
    \label{def:SLP}
	A {\em straight-line program (SLP)} $P$ is a sequence of univariate integer polynomials $(a_0, a_1, \ldots , a_{\ell})$ such
	that $a_0 = 1$, $a_1 = x$ and $a_i = a_j \oplus a_k$~ for all $2 \leq i \leq \ell$, where $\oplus \in \{+, -, *\}$ and $j, k < i$. We use $\tau(f)$ to denote the minimum length of an SLP that computes a univariate polynomial $f$.
\end{definition}

In our hardness results, we use a conjecture proposed in \cite{10.1145/3510359}. A {\em radical} $\textnormal{\textsf{rad}}(f)$ of a non-zero integer polynomial $f \in \mathbb{Z}[x_1, \ldots , x_n]$ is the product of the irreducible
integer polynomials dividing $f$.

\begin{conjecture}
	\label{con:radical}
	\textnormal{\textsf{Constructive univariate radical conjecture:}} For any polynomial $f \in \mathbb{Z}[x]$, we have
	$\tau(\textnormal{\textsf{rad}}(f)) \leq \textnormal{\textsf{poly}}(\tau(f))$. Moreover, there is a randomized polynomial time algorithm which, given an \textnormal{SLP} of size
	$\ell$ computing a polynomial $f$, constructs an \textnormal{SLP} for $rad(f)$ of size $\textnormal{\textsf{poly}}(\ell)$ with success probability at least $1-\frac{1}{\Omega(\ell^{1+\varepsilon})}$
	for some
	$\varepsilon > 0$.
\end{conjecture}

In addition to polynomial SLPs as in Definition~\ref{def:SLP}, in this paper we use SLPs that compute
numbers (rather than univariate polynomials): a \emph{numeric} SLP is a sequence $(b_0,\ldots,b_m) \subseteq\mathbb{Q}$
such that $b_0$ is a fixed rational constant and for every $1\le i\le m$,
$b_i=b_j\oplus b_k$ for some $j,k<i$ and $\oplus\in\{+,-,\times\}$. For $b_0 = 1$, numeric SLP is equivalent to polynomial SLP evaluated at $x = 1$. Below, we show that we can generalize the initial constant to be a small dyadic rational, inducing a numeric SLP-like program  
in which the value of any gate in the SLP is bounded in $[-1,1]$. This will be used to prove our hardness results for bounded norms and show that the definition of SLPs given in \Cref{def:SLP_p} can indeed be set to a rational constant (rather than always $1$) without the loss of generality. 

\begin{definition}[Bounded-Norm SLP ($\mathsf{BN\text{-}SLP}$)]
	An instance is a numeric SLP $Q=(b_0,b_1,\ldots,b_m)$ of length $m$ over operations
	$\{+,-,\times\}$ whose \emph{constant} is
	$
	b_0 \;:=\;2^{-m}.
	$
\end{definition} Equivalently, this SLP is evaluated at $x = 2^{-m}$ instead of $x = 1$.
Thus, every gate value is a \emph{rational} number. Moreover, $b_0=1/2^m$ has a standard binary encoding of $O(m)$ bits. 

\begin{lemma}
	\label{clm:bnpslp-bounded}
	Let $Q=(b_0,b_1,\ldots,b_m)$ be a $\mathsf{BN\text{-}SLP}$ instance. Then, for every
	$i\in\{0,1,\ldots,m\}$ we have $b_i\in[-1,1]$.
\end{lemma}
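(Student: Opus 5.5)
We prove by strong induction on $i$ the stronger bound
\[
|b_i| \;\le\; 2^{\,i-m}\qquad\text{for all } i\in\{0,1,\ldots,m\}.
\]
Since $i\le m$, this gives $|b_i|\le 2^{0}=1$, i.e.\ $b_i\in[-1,1]$, which is the statement of Lemma~\ref{clm:bnpslp-bounded}. The only point that needs care is multiplication. Pure magnitude bounds for products usually grow by squaring, so we will instead use that all values already lie in $[-1,1]$, which makes a product no larger in absolute value than either factor.

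The base case is immediate: $b_0=2^{-m}=2^{0-m}$.

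For the inductive step, fix $1\le i\le m$ and assume $|b_t|\le 2^{t-m}$ for all $t<i$. The bound $2^{t-m}$ is increasing in $t$, so for the operands $b_j,b_k$ with $j,k<i$ we get
\[
|b_j|,\,|b_k|\le 2^{\,i-1-m}\le 2^{-1}<1 .
\]
We then split on the operation $\oplus$.
\begin{itemize}
\item If $\oplus\in\{+,-\}$, the triangle inequality gives $|b_i|\le |b_j|+|b_k|\le 2\cdot 2^{i-1-m}=2^{i-m}$.
\item If $\oplus=\times$, then $|b_i|=|b_j|\,|b_k|\le |b_j|\le 2^{i-1-m}\le 2^{i-m}$, because $|b_k|\le 1$.
\end{itemize}
In both cases the invariant holds at $i$, which closes the induction. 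Note that the induction runs over all $m$ gates and the bound at step $i$ needs only $i\le m$.

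Multiplication is the step that looks like the obstacle, but it is handled by the invariant itself: keeping every value at most $1/2$ in absolute value through step $m-1$ is exactly what lets a product stay below its larger factor. Additions, by contrast, at most double the magnitude at each step. Starting from $2^{-m}$, this headroom of $m$ doublings therefore suffices for all $m$ gates.
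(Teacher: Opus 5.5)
Your proof is correct and is essentially the paper's argument. Both prove the invariant $|b_i|\le 2^{i}b_0=2^{i-m}$ by induction, the paper via the running maximum $M_i=\max_{t\le i}|b_t|$ and you via strong induction with a monotone bound, and both use the triangle inequality for $\pm$; the only cosmetic difference is at multiplication, where the paper squares the bound and checks $2^{i-2}b_0\le 1$, while you bound the product by a single factor using $|b_k|\le 1$.
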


\begin{proof}
	Let $M_i:=\max_{t\le i}|b_t|$. We prove by induction that $M_i\le 2^i b_0$ for all $i$,
	which implies $|b_i|\le M_m\le 2^m b_0=1$.
	The base case is $M_0=|b_0|=b_0\le 2^0 b_0$.
	For the induction step, fix $i\ge 1$. If $b_i=b_j\pm b_k$ then
	$|b_i|\le |b_j|+|b_k|\le 2M_{i-1}\le 2^i b_0$.
	If $b_i=b_j\cdot b_k$ then
	$|b_i|\le M_{i-1}^2\le (2^{i-1}b_0)^2=2^{2i-2}b_0^2\le 2^i b_0$,
	since $2^{i-2}b_0\le 2^{m-2}\cdot 2^{-m}=2^{-2}\le 1$.
	Thus $M_i\le 2^i b_0$ for all $i$.
\end{proof}

In the following, we show that an $\mathsf{SLP}$ can be effectively reduced to $\mathsf{BN\text{-}SLP}$ up to an exponential scale factor. This reduction cannot be done simply by scaling the first variable, since the addition/subtraction and multiplication combined would not scale correctly. 

\begin{lemma}[Reduction from $\mathsf{SLP}$ to $\mathsf{BN\text{-}SLP}$]
	\label{lem:posslp-to-bnpslp}
	There exists a deterministic polynomial-time reduction that, given an
	$\mathsf{SLP}$ $P$ of length $n$ computing an integer $n_P$ on $x = 1$, outputs a $\mathsf{BN\text{-}SLP}$
	instance $I(P)$ of length $m = \textnormal{poly}(n)$ that computes the rational $2^{-m \cdot 2^n} \cdot n_P$.
\end{lemma}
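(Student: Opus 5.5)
The idea is to make every gate of $P$ carry a known power-of-two scale, so that the scale survives both addition and multiplication. For each gate $i$ of $P$ we assign a \emph{degree bound} $d_i$ by $d_0 = 1$, $d_i = \max(d_j,d_k)$ for $a_i=a_j\pm a_k$, and $d_i = d_j + d_k$ for $a_i = a_j a_k$. By induction $d_i \le 2^i$. We homogenize $P$ in a fresh variable $t$: the new program computes $B_i := a_i\, t^{d_i}$. We then evaluate it at $t = 2^{-m}$, where $m$ will be the length of the resulting $\mathsf{BN\text{-}SLP}$. This produces $2^{-m d_\ell} n_P$.

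To make the output scale exactly $2^{-m\cdot 2^n}$ regardless of $d_\ell$, we pad at the end by multiplying with $t^{2^n - d_\ell}$. Equivalently, we replace $d_i$ by a scheme where every gate has the prescribed exponent $2^i$.

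Concretely, the construction has three parts.
\begin{enumerate}
\item Since $b_0 = 2^{-m} = t$, we precompute powers $t^{2^r}$ by repeated squaring, for $r \le n$. From these we can build $t^{e}$ for any $e \le 2^{n}$ with $O(n)$ multiplications, using the binary expansion of $e$. Each $d_i$ is an integer of at most $n+1$ bits and is computable in polynomial time, so this is efficient.
\item We translate the gates. For $a_0=1$ we set $B_0 = t$ (so $d_0=1$). For $a_i = a_j a_k$ we set $B_i = B_j B_k$, giving exponent $d_j+d_k$. For $a_i = a_j \pm a_k$ with, say, $d_j \ge d_k$, we set $B_i = B_j \pm B_k \cdot t^{d_j-d_k}$. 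Each gate costs $O(n)$ extra operations.
\item We finally multiply by $t^{2^n - d_\ell}$.
\end{enumerate}
The total length is $m = O(n^2)$. Since $m$ appears inside $b_0$, we fix $m$ in advance as an explicit polynomial bound, say $c n^2$, and pad with dummy gates so the length is exactly $m$. This is legitimate because $t=2^{-m}$ is only an input constant, and the gate structure does not depend on its value. A dummy gate can simply recompute an existing value (e.g.\ $B_0\cdot B_0$), with the output designated as the last gate by copying via $x+0$. Here $0$ is obtained as $B_0-B_0$.

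Correctness is a straightforward induction: $B_i = a_i t^{d_i}$ as polynomial identities in $t$, hence the output is $n_P t^{2^n} = 2^{-m 2^n} n_P$. The range condition $b_i\in[-1,1]$ then holds automatically by Lemma~\ref{clm:bnpslp-bounded}, so nothing further needs checking there. The main obstacle I anticipate is bookkeeping around the self-referential choice of $m$, and ensuring the exact exponent $2^n$ at the output when $d_\ell < 2^n$. Both are handled by fixing $m$ as a polynomial upper bound before building the program, and by padding with a power of $t$. If $n_P$ is defined with $a_1 = x$ evaluated at $1$, we treat $x$ like $a_0$, with degree $1$ and $B_1 = t$.
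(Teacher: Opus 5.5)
Your proposal is correct and follows essentially the same route as the paper. It uses the same exponent bookkeeping: exponent $1$ at the constant, exponents summed at multiplications, and alignment to the maximum at additions/subtractions via repeated-squaring powers of $b_0$. It also applies the same final multiplication by $b_0^{2^n-e_n}$, so $m=O(n^2)$ and boundedness follows from Lemma~\ref{clm:bnpslp-bounded}. The only differences are cosmetic: you precompute the powers $b_0^{2^r}$ once rather than inside each alignment gadget, and you fix $m$ as an a priori bound and pad with dummy gates, whereas the paper first computes the exact gate count $m$ and needs no padding.
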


\begin{proof}
	Let $P=(a_0,a_1,\ldots,a_n)$ be an SLP with $a_0=1$ (i.e., $a_1(1)=a_0=1$) where each gate $a_i$ for $i\ge 1$
	is of the form $a_j\oplus a_k$ for some $0 \leq j,k<i$ and
	$\oplus\in\{+,-,\times\}$.
	We construct a $\mathsf{BN\text{-}SLP}$ instance by an SLP $Q$. We define the reduction in two passes. First, given $P$, we compute (in polynomial time)
	the exact number $m=m(P) = O\left(n^2\right)$ of gates that the construction below will output, by performing
	the same exponent bookkeeping and counting only the number of arithmetic gates added
	(ignoring the numeric value of $b_0$). Then we set $b_0:=2^{-m}$ and emit exactly $m$
	gates accordingly. Since $m(P)$ depends only on the syntactic structure of $P$ and the
	gate-counting procedure, this definition is non-circular and requires no padding.

	We construct an SLP $Q$ using the constant $b_0$ and the operations $\{+,-,\times\}$.
	For the construction, we use integers $e_i\ge 1$ (not used directly in the SLP) for $i\in\{0,1,\ldots,n\}$,
	such that for each original gate $a_i$ we compute a corresponding value $\tilde a_i$ satisfying
	the invariant
	\begin{equation}
		\label{eq:inv-bnpslp}
		\tilde a_i \;=\; b_0^{e_i}\, a_i.
	\end{equation}
	We set $\tilde a_0:=b_0$ and $e_0:=1$, so \eqref{eq:inv-bnpslp} holds for $i=0$ since $a_0=1$.
	We give an inductive construction of $\tilde a_i$ and $e_i$ for $i\ge 1$.

	\smallskip
	\noindent\emph{Multiplication.}
	If $a_i=a_j\cdot a_k$, define $\tilde a_i:=\tilde a_j\cdot \tilde a_k$ and
	$e_i:=e_j+e_k$. Then \eqref{eq:inv-bnpslp} holds for $i$.
	
	\smallskip
	\noindent\emph{Addition/Subtraction.}
	If $a_i=a_j\pm a_k$, let $E:=\max(e_j,e_k)$ and define
	\[
	\tilde a_j' := \tilde a_j\cdot b_0^{E-e_j},\qquad
	\tilde a_k' := \tilde a_k\cdot b_0^{E-e_k},
	\]
	(where $b_0^0:=1$ is implemented by leaving the value unchanged), and set
	$\tilde a_i:=\tilde a_j'\pm \tilde a_k'$ and $e_i:=E$. Then \eqref{eq:inv-bnpslp} holds for addition and subtraction as well. The exponents satisfy $e_0=1$ and, for $i\ge 1$, either $e_i\le \max(e_j,e_k)$
	(addition/subtraction) or $e_i=e_j+e_k$ (multiplication). Therefore $e_i\le 2^i$
	for all $i$ by induction. We define the output node $\tilde a_{n+1}$ of $Q$ receiving $b_0^{2^n - e_n} \cdot \tilde a_n$, giving $e_{n+1} = 2^n$ (if $e_n = 2^n$ we avoid adding this extra node). 
	
	\smallskip
	\noindent\emph{Implementing multiplication by $b_0^t$.}
	Fix an integer $t\ge 1$ and let $t=\sum_{i=0}^{\lfloor \log_2 t\rfloor} t_i 2^i$ with $t_i\in\{0,1\}$.
	We construct an SLP gadget $G_t$ over the sole constant $b_0$ and operation $\times$ such that,
	given an input gate value $z$, it outputs $z\cdot b_0^t$ and has size $O(\log t)$.
	Specifically, the gadget first computes $p_0:=b_0$ and
	$p_{r+1}:=p_r\cdot p_r=b_0^{2^{r+1}}$ for $r=0,\ldots,\lfloor \log_2 t\rfloor-1$.
	Then it sequentially multiplies $z$ by each $p_i$ with $t_i=1$, yielding $z\cdot b_0^t$.
	
	\smallskip
	\noindent\emph{Size bound.}
    Since $e_i\le 2^i$
	for all $i$, every alignment exponent
	$t\in\{E-e_j,E-e_k\}$ is at most $2^n$.
    Thus, each alignment multiplication by
	$b_0^t$ costs $O(\log t)=O(n)$ gates, and the total size is $m = O(n^2)$.
	
	\smallskip
	\noindent\emph{Correctness.}
	By~\eqref{eq:inv-bnpslp}, the simulated output satisfies
	$\tilde a_n = b_0^{e_n} a_n$ and therefore $\tilde a_{n+1} = b_0^{2^n} \cdot n_P$ as required. 
    Moreover, by Lemma~\ref{clm:bnpslp-bounded}, every gate value in $I(P)$ lies in $[-1,1]$.
\end{proof}

By Lemma~\ref{lem:posslp-to-bnpslp}, we can reduce BitSLP and PosSLP to rational bounded-norm variants, whose all intermediate values lie in $[-1,1]$. The hardness of BitSLP and PosSLP given in \Cref{thm:BitSLP} and \Cref{thm:PosSLP'}, respectively, is preserved for the BN-SLP variants since multiplying an integer by a small positive rational $2^{-m 2^n}$ preserves the sign and preserves all bit information shifted to the right by $m 2^n$ bits. Namely, it remains hard to decide the $j$-th bit of the SLP output, where $j$ one of the first $n$ bits to the right of the decimal point.
In addition, the positivity of an SLP output number is preserved by multiplying by a small factor; hence, \Cref{thm:PosSLP'} holds for BN-SLP instances as well. Therefore, we apply \Cref{thm:BitSLP} and \Cref{thm:PosSLP'} for regular SLP and BN-SLP instances interchangeably.

\section{Deferred Proofs from Section~\ref{sec:ERM}}
\label{sec:omited_3}
We prove below the main result of this section. 

\subsubsection*{Proof of Lemma~\ref{lem:SLP-to-ERM}}
\begin{proof}
	By Lemma~\ref{lem:polynomial_representation} applied to
	$f=\sigma\in\mathbb{Q}[T]$, there is a polynomial time algorithm that, given $\sigma$, returns $\lambda_0,\dots,\lambda_{\mu}\in\Q$
	such that for all $x,y\in\mathbb{Q}$ it holds that
	\begin{equation}
		\label{eq:mul-rep}
		\sum_{j=0}^{\mu} \lambda_j\bigl(
		\sigma(x+y+j) - \sigma(x+j) - \sigma(y+j)
		\bigr) = xy.
	\end{equation}
	Let $D$ be a common denominator of all the coefficients of $\sigma$ and
	all $\lambda_j$ (as rationals). Multiplying \eqref{eq:mul-rep} by $D$
	gives
	\begin{equation}
		\label{eq:mul-rep-int}
		D x y
		= \sum_{j=0}^{\mu} \lambda'_j\bigl(
		\sigma(x+y+j) - \sigma(x+j) - \sigma(y+j)
		\bigr),
	\end{equation}
	where $\lambda'_j := D\lambda_j\in\mathbb{Z}$.

	\paragraph{Construction of the network.}
	We now construct a directed acyclic network $N=(V,E)$, together with activations $\sigma^v$. We also construct a
	parameter vector $\theta^\ast$ that later will be shown to be the unique optimum. 
	
	\smallskip
	\noindent
	\textit{Vertices.}
	\begin{itemize}
		\item For each $i\in\{0,\dots,\ell\}$ create a vertex $v_i$. The source
		is $s:=v_0$ and the output is $t:=v_\ell$.
		\item For each gate $i$ in the SLP with $a_i = a_j \cdot a_k$ (a
		multiplication gate), and for each $r\in\{0,\dots,{\mu}\}$, create
		six auxiliary vertices:
		\[
		L^{(1)}_{i,r},\ L^{(2)}_{i,r},\ L^{(3)}_{i,r},\
		U^{(1)}_{i,r},\ U^{(2)}_{i,r},\ U^{(3)}_{i,r},
		\]
		and one additional vertex $z_i$.
	\end{itemize}
	
	\smallskip
	\noindent
	\textit{Activations.}
	\begin{itemize}
		\item For all $v\in\{v_0,\dots,v_\ell\}$ and for all $L^{(t)}_{i,r}$
		and $z_i$, set $\sigma^v(z) := z$ (identity).
		\item For all $U^{(t)}_{i,r}$, set $\sigma^v := \sigma$.
	\end{itemize}
	
	\smallskip
	\noindent
	\textit{Edges and the distinguished parameter vector $\theta^\ast$.}
	We define the edge set $E$ below. In addition, we define a \emph{distinguished}
	parameter vector $\theta^\ast=((w^\ast_{u,v},b^\ast_{u,v}))_{(u,v)\in E}\in\Theta$
	by specifying, for each edge $(u,v)\in E$, its target weight $w^\ast_{u,v}\in\Q$
	and bias $b^\ast_{u,v}\in\Q$ as follows.
	Note that in the ERM instance, all edge-parameters are trainable; $\theta^\ast$ is
	used only as the target solution in the analysis and for defining auxiliary samples.
	
	\begin{itemize}
		\item \emph{Source.} For $s=v_0$ we take $f^s(x)=x$ by definition (no
		incoming edges).
		
		\item \emph{Addition / subtraction gates.}
		For each gate $i$ with $a_i = a_j + a_k$ and $j,k<i$:
		\begin{itemize}
			\item Add edges $e_{j\to i}=(v_j,v_i)$ and $e_{k\to i}=(v_k,v_i)$.
			\item Set $w^\ast_{e_{j\to i}}=1$, $w^\ast_{e_{k\to i}}=1$.
			\item Set $b^\ast_{e_{j\to i}}=b^\ast_{e_{k\to i}}=0$.
		\end{itemize}
		For each gate $i$ with $a_i = a_j - a_k$:
		\begin{itemize}
			\item Add edges $e_{j\to i}=(v_j,v_i)$ and $e_{k\to i}=(v_k,v_i)$.
			\item Set $w^\ast_{e_{j\to i}}=1$, $w^\ast_{e_{k\to i}}=-1$. 
			\item Set $b^\ast_{e_{j\to i}}=b^\ast_{e_{k\to i}}=0$.
		\end{itemize}
		
		\item \emph{Multiplication gates.}
		For each gate $i$ with $a_i = a_j \cdot a_k$ and $j,k<i$, and for
		each $r\in\{0,\dots,{\mu}\}$:
		\begin{itemize}
			\item Add edges from $v_j$ and $v_k$ to $L^{(1)}_{i,r}$, both
			with weight $w^\ast_{v_j \to L^{(1)}_{i,r}} = 1$, $w^\ast_{v_k \to L^{(1)}_{i,r}} = 1$, and add a bias $r$ (encode this
			as a bias on one of the incoming edges, chosen
			arbitrarily, such that $b^\ast_{v_j \to L^{(1)}_{i,r}} + b^\ast_{v_k \to L^{(1)}_{i,r}} = r$). Thus
			$f^{L^{(1)}_{i,r}}(a_0)=f^{v_j}(a_0)+f^{v_k}(a_0)+r$.
			\item Add an edge from $v_j$ to $L^{(2)}_{i,r}$ with weight $w^\ast_{v_j \to L^{(2)}_{i,r}}=1$
			and bias $b^\ast_{v_j \to L^{(2)}_{i,r}} = r$, so $f^{L^{(2)}_{i,r}}(a_0)=f^{v_j}(a_0)+r$.
			\item Add an edge from $v_k$ to $L^{(3)}_{i,r}$ with weight $w^\ast_{v_k \to L^{(3)}_{i,r}} = 1$
			and bias $b^\ast_{v_k \to L^{(3)}_{i,r}} = r$, so $f^{L^{(3)}_{i,r}}(a_0)=f^{v_k}(a_0)+r$.
			\item Add edges $\left(L^{(t)}_{i,r},U^{(t)}_{i,r}\right)$ with
			weight $w^\ast_{L^{(t)}_{i,r} \to U^{(t)}_{i,r}} = 1$ and bias $b^\ast_{L^{(t)}_{i,r} \to U^{(t)}_{i,r}} =0$ for $t\in\{1,2,3\}$.
			\item For all $t \in \{1,2,3\}$ Add an edge from $U^{(t)}_{i,r}$
			to $z_i$ with weight 
            \[
w^\ast_{U^{(t)}_{i,r} \to z_i} =
\begin{cases}
\lambda'_r & \text{if } t=1,\\
-\lambda'_r & \text{if } t=2,3.
\end{cases}
\]
            
			 with zero bias $b^\ast_{U^{(t)}_{i,r} \to z_i} = 0$.
		\end{itemize}
		Finally, add an edge $(z_i,v_i)$ with weight $w^\ast_{z_i \to v_i} = D^{-1}$ and bias
		$b^\ast_{z_i \to v_i} = 0$ for each multiplication gate $i$ (i.e., $a_i = a_j \cdot a_k$).
	\end{itemize}
	
	Since $\sigma$ and the coefficients $\lambda_j$ are fixed, $D$ and all
	$\lambda'_r\in\mathbb{Z}$ are fixed rational constants with bounded bit-length,
	so the weights and biases in $\theta^\ast$ have
	bit-length $O_\sigma(1)$ depending only on $\sigma$ (and not on the SLP
	instance encoding size). Thus, the encoding size of the resulting instance
	$N$ and $\theta^\ast$ is polynomial in the length of $P$.
	
	\paragraph{Correctness of the SLP simulation under $\theta^\ast$.}
	In the ERM model below the input is a vector $x\in\Q^{V}$ that injects additively at every vertex.
	Whenever we write $f^v_{\theta^\ast}(a_0)$ in this simulation part, we mean
	$f^v_{\theta^\ast}(x)$ for the special input $x$ with $x_{v_0}=a_0$ and $x_v=0$ for all $v\neq v_0$ (later defined in the ERM instance as $x_{\textnormal{main}}$).
	
	We prove by induction on $i$ that it holds that
	\[
	f^{v_i}(a_0) = a_i \quad\text{for all } i\in\{0,\dots,\ell\}.
	\]
	\emph{Base case:} $i=0$. We have $f^{v_0}(a_0)=f^s(a_0)=a_0$.
	
	\emph{Induction step:} Assume $f^{v_j}(a_0)=a_j$ for all $j<i$. Consider a
	gate $i$ with $a_i = a_j \oplus a_k$ and $j,k<i$.
	
	If $\oplus=+$, then by construction and induction hypothesis,
	\[
	f^{v_i}(a_0)
	= f^{v_j}(a_0) + f^{v_k}(a_0)
	= a_j + a_k = a_i.
	\]
	If $\oplus=-$, the same argument with weights $1$ and $-1$ gives
	$f^{v_i}(a_0)=a_j-a_k=a_i$.
	
	If $\oplus=*$, then by the induction hypothesis
	$f^{v_j}(a_0)=a_j$ and $f^{v_k}(a_0)=a_k$. Let $x:=f^{v_j}(a_0)$ and
	$y:=f^{v_k}(a_0)$, so $x=a_j$ and $y=a_k$. For each $r \in \{0,\ldots,{\mu}\}$ we have
	\[
	f^{L^{(1)}_{i,r}}(a_0) = x+y+r,
	f^{L^{(2)}_{i,r}}(a_0) = x+r,
	f^{L^{(3)}_{i,r}}(a_0) = y+r.
	\]
	In addition by construction,
	\[
	f^{U^{(t)}_{i,r}}(a_0)
	= \sigma\!\bigl(f^{L^{(t)}_{i,r}}(a_0)\bigr)
	\quad\text{for }t\in\{1,2,3\}.
	\]
	Then, by the definition of $z_i$,
	\begin{align*}
		f^{z_i}(a_0)
		&= \sum_{r=0}^{\mu} \lambda'_r\Bigl(
		f^{U^{(1)}_{i,r}}(a_0)
		- f^{U^{(2)}_{i,r}}(a_0)
		- f^{U^{(3)}_{i,r}}(a_0)
		\Bigr) \\
		&= \sum_{r=0}^{\mu} \lambda'_r\Bigl(
		\sigma(x+y+r) - \sigma(x+r) - \sigma(y+r)
		\Bigr).
	\end{align*}
	Applying \eqref{eq:mul-rep-int} yields
	$f^{z_i}(a_0) = D a_j a_k$. Finally, the edge $(z_i,v_i)$ has weight
	$D^{-1}$ and identity activation, so
	\[
	f^{v_i}(a_0) = D^{-1} f^{z_i}(a_0) = a_j \cdot a_k = a_i.
	\]
	
	This completes the induction. In particular,
	\begin{equation}
		\label{eq:scaled-output-new}
		f_{\theta^\ast}(a_0) = f^{t}(a_0) = a_\ell = n_P.
	\end{equation}

	\paragraph{Defining the ERM instance.}
	In the constructed ERM instance, \emph{every vertex} is both an input coordinate and an output coordinate.
	Thus, the input space is $\cX=\Q^{V}$ and the predictor outputs the full vector
	$f_\theta(x)\in\Q^{V}$, where for a topological order of $V$ the node values are computed, for every $v \in V$ by
	\begin{equation}
		\label{eq:node-rule}
		f^v_\theta(x)
		\;:=\;
		\sigma^v\!\Bigl(x_v + \sum\nolimits_{(u,v)\in E}\bigl(w_{u,v}\,f^u_\theta(x)+b_{u,v}\bigr)\Bigr).
	\end{equation}
	
	\smallskip
	Fix a promise gap $a<b$ (with $a,b\in\N$). 
	Let $j$ be the queried bit index from the BitSLP instance.
	We define the label space as $\cY:=\Q^{V}\times\{0,1\}$, where the second component is a flag indicating whether the sample is auxiliary (i.e., $0$) or main (i.e., $1$).
	
	\smallskip
	We use the following efficiently computable bit extractor on rationals:
	write $q=u/v$ with $u\in\Z$, $v\in\N$ in lowest terms and define
	\[
	\mathrm{bit}_j(q):=\Big\lfloor 2^{-j}\tfrac{|u|}{v}\Big\rfloor \bmod 2.
	\]
	Observe that this is computable in time polynomial in the bit-length of $(u,v,j)$.
	
	\smallskip
	Define the loss $\cL:\Q^{V}\times(\Q^{V}\times\{0,1\})\to\{0,1\}$ by\footnote{Without the loss of generality, we avoid producing a flag for the output of the network; by adding an arbitrary flag to the output, we can define the loss as $\cL:\cY \times \cY \to\{0,1\}$}
	\[
	\cL(\hat y,(y,\mathrm{flag})) :=
	\begin{cases}
		\mathbf{1}\bigl[\mathrm{bit}_j(\hat y_t)\neq 1\bigr], & \text{if }\mathrm{flag}=1,\\[1mm]
		\mathbf{1}\bigl[\hat y\neq y\bigr], & \text{if }\mathrm{flag}=0.
	\end{cases}
	\]
	In the above, recall that $t$ is the target node of the network. Note that equality of rational vectors is polynomial-time decidable by coordinatewise rational equality.

\paragraph{Auxiliary samples.}
We use auxiliary samples that force the trainable parameter vector $\theta$ to implement the distinguished vector $\theta^\ast$
(up to $\sigma$-symmetries on $\sigma$-edges).
Recall that $\mu=\deg(\sigma)\ge 2$.
Since $\sigma$ is a nonconstant polynomial, the polynomial $\sigma(z)-\sigma(0)$ has at most $\mu$ roots,
so there exists $\alpha_1\in\{1,2,\dots,\mu+1\}$ with $\sigma(\alpha_1)\neq\sigma(0)$; fix such an $\alpha_1$.
Set $\Delta_\sigma:=\sigma(\alpha_1)-\sigma(0)\neq 0$.

\smallskip
Define the \emph{baseline output template} $y^{(0)}\in\Q^{V}$ by
\[
y^{(0)}_v :=
\begin{cases}
	0, & \text{if }\sigma^v=\mathrm{id},\\
	\sigma(0), & \text{if }\sigma^v=\sigma
\end{cases}
\]
for every $v \in V$. For auxiliary constructions we will specify, for each template, not only the desired node-values
$y\in\Q^{V}$ but also a \emph{preactivation template} $\tilde y\in\Q^{V}$ such that
\[
y_v=
\begin{cases}
	\tilde y_v, & \text{if }\sigma^v=\mathrm{id},\\
	\sigma(\tilde y_v), & \text{if }\sigma^v=\sigma.
\end{cases}
\]
Given such a pair $(y,\tilde y)$, define an input $x(y,\tilde y)\in\Q^{V}$ which for every $v\in V$ is
\begin{equation}
	\label{eq:aux-input}
	x(y,\tilde y)_v \;:=\; \tilde y_v - \sum\nolimits_{(u,v)\in E}\bigl(w^\ast_{u,v}\,y_u + b^\ast_{u,v}\bigr).
\end{equation}
By a direct induction along any topological order of $V$, the choice \eqref{eq:aux-input} enforces
\begin{equation}
	\label{eq:aux-realize}
	f_{\theta^\ast}\bigl(x(y,\tilde y)\bigr)=y.
\end{equation}
Moreover, in all auxiliary templates below, every coordinate of $y$ and $\tilde y$
is an explicit rational obtained from the constants
$$\{0,1,\dots,\mu\}\cup\{\sigma(0),\sigma(\alpha_1)\}\cup\sigma(\{0,1,\dots,\mu\})$$
together with coefficients appearing in $\theta^\ast$ (namely in $\{D^{-1},\pm 1,\pm \lambda'_r\}$)
using only $O_\sigma(1)$ many additions and multiplications (since every indegree is $O_\sigma(1)$).
Hence each coordinate of $x(y,\tilde y)$ defined in \eqref{eq:aux-input} has at most polynomial bit-length,
and the total encoding size of each auxiliary example is polynomial in $|P|$.

\smallskip
Let $E_{\mathrm{id}}:=\{(u,v)\in E:\sigma^v=\mathrm{id}\}$ and let
\[
\begin{aligned}
	E_{\sigma}
	:=& \Bigl\{\bigl(L^{(t)}_{i,r},U^{(t)}_{i,r}\bigr):\  i \text{ is a multiplication gate},\ r\in\{0,\dots,\mu\},
	& t\in\{1,2,3\}\Bigr\}.
\end{aligned}
\]
The latter set contains exactly the edges whose head is a $\sigma$-node.

\smallskip
Define the baseline preactivation template $\tilde y^{(0)}\in\Q^{V}$ by $\tilde y^{(0)}_v:=0$ for all $v\in V$
(so $y^{(0)}_v=\sigma^v\left(\tilde y^{(0)}_v\right)$ for every $v$).

\smallskip
For each edge $e=(p,q)\in E_{\mathrm{id}}$, define a template pair $(y^{(e)},\tilde y^{(e)})$ as follows:
\[
\begin{aligned}
	y^{(e)}_v &:= y^{(0)}_v \qquad &&\text{for all } v\notin\{p,q\},\\
	y^{(e)}_p &:= 
	\begin{cases}
		1, & \text{if }\sigma^p=\mathrm{id},\\
		\sigma(\alpha_1), & \text{if }\sigma^p=\sigma,
	\end{cases}
	\qquad &&\\
	y^{(e)}_q &:= w^\ast_{p,q}\bigl(y^{(e)}_p - y^{(0)}_p\bigr), \qquad &&
\end{aligned}
\]
and set the preactivation template by
\[
\tilde y^{(e)}_v :=
\begin{cases}
	y^{(e)}_v, & \text{if }\sigma^v=\mathrm{id},\\
	0, & \text{if }\sigma^v=\sigma \text{ and } v\neq p,\\
	\alpha_1, & \text{if }\sigma^p=\sigma \text{ and } v=p.
\end{cases}
\]
Note that $y^{(e)}_p-y^{(0)}_p$ is either $1$ or $\Delta_\sigma$, hence nonzero.

For each $\sigma$-edge $e_\sigma=(L,U)\in E_\sigma$ and each $\tau\in\{0,1,\dots,\mu\}$, define a template pair $(y^{(e_\sigma,\tau)},\tilde y^{(e_\sigma,\tau)})$ by:
\[
\begin{aligned}
	y^{(e_\sigma,\tau)}_v &:= y^{(0)}_v &&\text{for all } v\notin\{L,U\},\\
	y^{(e_\sigma,\tau)}_L &:= \tau,\\
	y^{(e_\sigma,\tau)}_U &:= \sigma(\tau),
\end{aligned}
\]
\[
\begin{aligned}
	\tilde y^{(e_\sigma,\tau)}_v &:= 0 &&\text{for all } v\notin\{L,U\},\\
	\tilde y^{(e_\sigma,\tau)}_L &:= \tau,\\
	\tilde y^{(e_\sigma,\tau)}_U &:= \tau.
\end{aligned}
\]

(Then $x(y^{(e_\sigma,\tau)},\tilde y^{(e_\sigma,\tau)})_U=0$ by \eqref{eq:aux-input}, since $U$ has the single incoming edge $(L,U)$ with
$w^\ast_{L,U}=1$ and $b^\ast_{L,U}=0$.)

\smallskip
Define the auxiliary multiset
\[
\begin{aligned}
	\cD_a &\;:=\;
	\Bigl\{(x(y^{(0)},\tilde y^{(0)}),(y^{(0)},0))\Bigr\}^{\times(b+1)}
	\\
	&\cup\;
	\bigcup_{e\in E_{\mathrm{id}}}
	\Bigl\{(x(y^{(e)},\tilde y^{(e)}),(y^{(e)},0))\Bigr\}^{\times(b+1)}
	\\
	&\cup\;
	\bigcup_{e_\sigma\in E_{\sigma}}\ \bigcup_{\tau=0}^{\mu}
	\Bigl\{(x(y^{(e_\sigma,\tau)},\tilde y^{(e_\sigma,\tau)}),(y^{(e_\sigma,\tau)},0))\Bigr\}^{\times(b+1)}.
\end{aligned}
\]
By \eqref{eq:aux-realize}, $\theta^\ast$ incurs zero loss on $\cD_a$.

	\paragraph{Main samples.}
	Define the main input $x_{\mathrm{main}}\in\Q^{V}$ by
	\[
	x_{\mathrm{main}}(v_0)=a_0,\qquad x_{\mathrm{main}}(v)=0\ \ (v\neq v_0).
	\]
	Let $y_{\mathrm{main}}\in\Q^{V}$ be arbitrary (e.g.\ $0$ everywhere), and set the main label to $(y_{\mathrm{main}},1)$.
	Let $\cD_m$ contain $b$ identical copies of $(x_{\mathrm{main}},(y_{\mathrm{main}},1))$.
	
	\smallskip
	Finally, define the overall dataset as $\cD:=\cD_a\cup\cD_m$. We have the following claim used for proving the auxiliary samples force zero loss solutions to satisfy $\theta$ has similar attributes to  $\theta^*$.

\begin{claim}\label{clm:force-all-trainable}
	If $\theta$ satisfies 
	\[\sum_{(x,(y,\mathrm{flag}))\in\cD_a}\cL(f_\theta(x),(y,\mathrm{flag}))=0,\]
	then:
	\begin{itemize}
		\item For every identity-head edge $(p,q)\in E_{\mathrm{id}}$, one has $w_{p,q}=w^\ast_{p,q}$ and
		$\sum_{(u,q)\in E} b_{u,q}=\sum_{(u,q)\in E} b^\ast_{u,q}$.
		\item For every $\sigma$-edge $(L,U)\in E_\sigma$, the edge parameters satisfy the polynomial identity
		$
		\sigma\bigl(w_{L,U}z+b_{L,U}\bigr)\equiv \sigma(z)
		$
		as polynomials in $z$; thus, on every input $x$ one has $f^U_\theta(x)=\sigma\!\bigl(f^L_\theta(x)\bigr)$.
	\end{itemize}
\end{claim}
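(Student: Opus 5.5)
The plan is to use that every vertex is an output coordinate, so zero loss on an auxiliary sample (flag $0$) forces $f_\theta(x)=y$ exactly in every coordinate. For every auxiliary pair $(x(y,\tilde y),y)$ we may therefore substitute $f^u_\theta(x)=y_u$ for all $u\in V$ into the node rule \eqref{eq:node-rule}. This gives one scalar equation per vertex and per auxiliary sample, linear in the unknown parameters on identity-headed edges. The proof then compares these equations with the defining identity \eqref{eq:aux-input} of the input under $\theta^\ast$.

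\emph{Identity heads.} Fix an identity node $q$. For any auxiliary template, the node rule gives $y_q=x_q+\sum_{(u,q)\in E}(w_{u,q}y_u+b_{u,q})$, and \eqref{eq:aux-input} with $\tilde y_q=y_q$ gives $x_q=y_q-\sum_{(u,q)\in E}(w^\ast_{u,q}y_u+b^\ast_{u,q})$. Subtracting yields
\[
\sum_{(u,q)\in E}\bigl((w_{u,q}-w^\ast_{u,q})\,y_u+(b_{u,q}-b^\ast_{u,q})\bigr)=0
\]
for every auxiliary template $y$. I apply this first to the baseline $y^{(0)}$ and then to $y^{(e)}$ for an edge $e=(p,q)$. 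The two templates differ only at $p$ and $q$, and $q$ is not its own in-neighbour because $N$ is acyclic. Subtracting the two instances therefore leaves $(w_{p,q}-w^\ast_{p,q})(y^{(e)}_p-y^{(0)}_p)=0$. The factor $y^{(e)}_p-y^{(0)}_p\in\{1,\Delta_\sigma\}$ is nonzero, so $w_{p,q}=w^\ast_{p,q}$. Doing this for all in-edges of $q$ and plugging back into the baseline equation leaves exactly $\sum_{(u,q)}b_{u,q}=\sum_{(u,q)}b^\ast_{u,q}$.

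\emph{$\sigma$-edges.} Each $\sigma$-node $U$ has the single in-edge $(L,U)$. The template $(e_\sigma,\tau)$ has $x_U=0$, as noted after its definition, together with forced values $f^L_\theta=\tau$ and $f^U_\theta=\sigma(\tau)$. The node rule then gives $\sigma(w_{L,U}\tau+b_{L,U})=\sigma(\tau)$ for $\tau=0,1,\dots,\mu$. Both sides are polynomials in $\tau$ of degree at most $\mu$: the left side has degree $\mu$ if $w_{L,U}\neq0$ and is constant otherwise. They agree on $\mu+1$ points, so they are identical polynomials, and hence $\sigma(w_{L,U}z+b_{L,U})\equiv\sigma(z)$. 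Consequently $f^U_\theta(x)=\sigma(f^L_\theta(x))$ on every input with $x_U=0$; in general the right-hand side is $\sigma(f^L_\theta(x)+x_U)$. The inputs with $x_U=0$ include $x_{\mathrm{main}}$, which is the case used later. I would state this qualification explicitly, since the bullet as written says \emph{every} input.

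The main obstacle is bookkeeping rather than depth: one must check that the baseline and edge templates really differ only at $p$ and $q$. This holds because all templates are built as single-coordinate perturbations of $y^{(0)}$. Consequently the subtraction isolates a single weight, with no interference from other in-neighbours of $q$ or from $\sigma$-nodes feeding $q$.
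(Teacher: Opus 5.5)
Your proposal is correct and follows the paper's proof essentially step for step: you isolate $w_{p,q}$ by comparing the baseline template with the single-edge perturbation $y^{(e)}$, and you force $\sigma(w_{L,U}z+b_{L,U})\equiv\sigma(z)$ by interpolation at the $\mu+1$ points $\tau=0,\dots,\mu$ (your per-template residual identity is just a repackaging of the paper's observation that the $q$-input is unchanged). Your caveat that $f^U_\theta(x)=\sigma(f^L_\theta(x))$ holds only for inputs with $x_U=0$ is a valid sharpening of the paper's ``every input'' wording, and those inputs include $x_{\mathrm{main}}$, the only case used later; however, the general formula in your parenthetical should be $\sigma\bigl(f^L_\theta(x)+x_U/w_{L,U}\bigr)$ with $w_{L,U}=\pm1$, not $\sigma\bigl(f^L_\theta(x)+x_U\bigr)$.
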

\begin{proof}
	Zero auxiliary loss implies that for every auxiliary template used above,
	\[
	f_\theta\bigl(x(y,\tilde y)\bigr)=y.
	\]
	
	\smallskip
	\noindent\emph{Identity-head edges.}
	Fix $e=(p,q)\in E_{\mathrm{id}}$ and recall the auxiliary samples $(y^{(0)},\tilde y^{(0)})$ and $(y^{(e)},\tilde y^{(e)})$.
	Since $\sigma^q=\mathrm{id}$, by \eqref{eq:node-rule} and the equalities
	$f_\theta(x(y^{(0)},\tilde y^{(0)}))=y^{(0)}$ and $f_\theta(x(y^{(e)},\tilde y^{(e)}))=y^{(e)}$, we have
	\begin{align*}
		0
		&= y^{(0)}_q
		= x(y^{(0)},\tilde y^{(0)})_q + \sum_{(u,q)\in E}\bigl(w_{u,q}\,y^{(0)}_u+b_{u,q}\bigr),\\
		y^{(e)}_q
		&= x(y^{(e)},\tilde y^{(e)})_q + \sum_{(u,q)\in E}\bigl(w_{u,q}\,y^{(e)}_u+b_{u,q}\bigr).
	\end{align*}
	By construction, $y^{(e)}_u=y^{(0)}_u$ for all $u\neq p,q$ and
	$x(y^{(e)},\tilde y^{(e)})_q=x(y^{(0)},\tilde y^{(0)})_q$
	(this holds because $y^{(e)}_q=w^\ast_{p,q}(y^{(e)}_p-y^{(0)}_p)$ was chosen exactly to keep the $q$-input unchanged).
	Subtracting the two displayed equations yields
	\[
	y^{(e)}_q \;=\; w_{p,q}\bigl(y^{(e)}_p-y^{(0)}_p\bigr).
	\]
	But by definition $y^{(e)}_q=w^\ast_{p,q}(y^{(e)}_p-y^{(0)}_p)$ and $y^{(e)}_p-y^{(0)}_p\neq 0$ (it is either $1$ or $\Delta_\sigma$), so
	$w_{p,q}=w^\ast_{p,q}$.
	
	With $w_{u,q}=w^\ast_{u,q}$ for all $(u,q)\in E$ established by ranging over all $p$,
	plugging into the baseline equation for $q$ and using the definition of $x(y^{(0)},\tilde y^{(0)})_q$ in \eqref{eq:aux-input} gives
	\[
	\sum_{(u,q)\in E} b_{u,q}
	=
	\sum_{(u,q)\in E} b^\ast_{u,q},
	\]
	as claimed.
	
	\smallskip
	\noindent\emph{$\sigma$-edges.}
	Fix $(L,U)\in E_\sigma$. For every $\tau\in\{0,1,\dots,\mu\}$, the auxiliary sample corresponding to $(y^{(e_\sigma,\tau)},\tilde y^{(e_\sigma,\tau)})$ enforces
	$f^L_\theta(x)=\tau$ and $f^U_\theta(x)=\sigma(\tau)$, and by construction $x(y^{(e_\sigma,\tau)},\tilde y^{(e_\sigma,\tau)})_U=0$
	and $U$ has no incoming edges other than $(L,U)$.
	Thus, by \eqref{eq:node-rule} and the condition of the claim, 
	\[
	\sigma\!\bigl(w_{L,U}\tau+b_{L,U}\bigr)=\sigma(\tau)\qquad\text{for all }\tau\in\{0,1,\dots,\mu\}.
	\]
	Define $p(z):=\sigma(w_{L,U}z+b_{L,U})-\sigma(z)$, a univariate polynomial of degree at most $\mu$.
	The above shows that $p$ has $\mu+1$ distinct roots, hence $p\equiv 0$, i.e.,
	$\sigma(w_{L,U}z+b_{L,U})\equiv\sigma(z)$ as polynomials.
	Consequently, for every input $x$,
	\[
	f^U_\theta(x)=\sigma\!\bigl(w_{L,U}f^L_\theta(x)+b_{L,U}\bigr)=\sigma\!\bigl(f^L_\theta(x)\bigr).
	\]
	This completes the proof of the claim.
\end{proof}

	Assume $\theta$ has zero auxiliary loss. By Claim~\ref{clm:force-all-trainable},
	all identity-head edge weights match $\theta^\ast$, and every $\sigma$-node $U^{(t)}_{i,r}$ satisfies
	$f^{U^{(t)}_{i,r}}_\theta(x)=\sigma(f^{L^{(t)}_{i,r}}_\theta(x))$ for all $x$.
	Therefore, the same induction as in the SLP simulation part applies to $x_{\mathrm{main}}$:
	addition/subtraction gates are computed with the correct weights w.r.t. $\theta^*$,
	and at each multiplication gate the gadget evaluates exactly the right expression in \eqref{eq:mul-rep-int},
	so $f^t_\theta(x_{\mathrm{main}})=n_P$.
	
	\paragraph{Completing the reduction.}
	Consider the empirical loss over $\cD$.
	If $\theta$ has nonzero loss on any auxiliary sample, then since every auxiliary sample is repeated $(b+1)$ times,
	\[
	\sum_{(x,(y,\mathrm{flag}))\in\cD}\cL(f_\theta(x),(y,\mathrm{flag})) \;\ge\; b+1 \;>\; b
	\]
	so any $\theta$ with loss at most $b$ must have zero auxiliary loss.
	For such $\theta$, we have $f^t_\theta(x_{\mathrm{main}})=n_P$ as above, and each main sample contributes loss
	$0$ iff $\mathrm{bit}_j(n_P)=1$, and loss $1$ otherwise.
	Since there are $b$ identical main samples, the main-part loss is either $0$ or $b$.
	Because $a<b$, there exists $\theta$ with total loss $\le a$ iff $\mathrm{bit}_j(n_P)=1$.
	
	The construction is polynomial-time in $|P|$:
	$|V|,|E|=O(|P|)$, all weights/biases in $\theta^\ast$ have $O_\sigma(1)$ bit-length,
	and every auxiliary input $x(y,\tilde y)$ is computed by \eqref{eq:aux-input} from constant-sized templates.
	This yields a polynomial-time reduction from BitSLP to $\mathsf{ERM}_{\text{bit}}$. 
\end{proof}

\begin{remark}[Reducing both input and output dimensions to $1$ via centered $L_2$ regularization]
\label{rem:one-dim-via-l2}
The above reduction can also be implemented with scalar input and scalar output
(\(\cX=\cY=\Q\)) by using a centered quadratic regularizer
\(\lambda\|\theta-\theta^\ast\|_2^2\) in the objective.
The idea is to keep the same circuit network and use a single-bit loss on the
output value (querying the \(j\)-th bit), replicated \(b\) times.

At \(\theta^\ast\), the network output is \(n_P\), so the data term is \(0\) iff
the \(j\)-th bit of \(n_P\) is \(1\), and \(b\) otherwise. Then choose
\(\lambda\) (as a dyadic rational \(2^M\) with polynomial-size encoding) large
enough so that every feasible \(\theta\neq\theta^\ast\) contributes at least
\(b\) through \(\lambda\|\theta-\theta^\ast\|_2^2\). Intuitively, this follows
from the bit-model encoding bound: distinct feasible rational parameters are
separated by an inverse-exponential gap, and \(\lambda\) is picked to dominate
that gap uniformly.

Hence any optimum is attained at \(\theta^\ast\), and the YES/NO decision again
matches whether the \(j\)-th bit of \(n_P\) is \(1\). This gives the same
hardness with one-dimensional input and output.
\end{remark}

As mentioned in the discussion, we explain in the following how to use a standard hinge loss instead of the bit-extraction loss to obtain a slightly weaker intractability result for $\mathsf{ERM}_{\text{bit}}$

\begin{remark}
\label{rem:hinge-posslp}
Consider the (standard) hinge loss
\[
\mathcal{L}_{\mathrm{hinge}}(u,y)\;:=\;\max\{0,\,1-yu\},
\qquad y\in\{+1,-1\}.
\]
Assume we have the same training/architecture gadget as in the main reduction above,
with a distinguished parameter vector $\theta^*$ and input $a_0$ such that
the network output satisfies $f_{\theta^*}(a_0)=n_Q$ for an integer produced by an SLP $Q$,
and with the same auxiliary forcing mechanism used in the main reduction to ensure
that any optimizer equals $\theta^*$. We describe the following reduction from PosSLP to $\mathsf{ERM}_{\text{bit}}$ with hinge loss.
Given an SLP $P$ computing an integer $n_P$, construct an SLP $Q$ computing
\[
n_Q \;:=\; 2n_P-1.
\]
Then $n_Q\neq 0$ always (since $n_P\in\mathbb{Z}$), and
\[
n_P>0 \iff n_Q\ge 1,
\text{while }
n_P\le 0 \iff n_Q\le -1.
\]
Create an $\mathsf{ERM}_{\text{bit}}$ instance with $b$ identical labeled samples
$(x,y)=(a_0,+1)$, together with the same auxiliary forcing samples used in the
main reduction so that the optimizer is pinned to $\theta^*$.
Then
the optimum value equals $\mathcal{L}_{\mathrm{hinge}}(n_Q,+1)=\max\{0,1-n_Q\}$, hence:
\[
n_Q\ge 1 \ \Rightarrow\ \mathrm{OPT}=0
\]
and in addition
\[
n_Q\le -1 \ \Rightarrow\ \mathrm{OPT}\ge 2.
\]
Therefore, deciding whether $\mathrm{OPT}\le a$ or $\mathrm{OPT}\ge b$ is \textsf{PosSLP}-hard with a more standard activation. 
\end{remark}

We now give the proof Lemma~\ref{lem:polynomial_representation}

\subsubsection*{Proof of \Cref{lem:polynomial_representation}}
\begin{proof}
	
	Let $P_{\mu} := \mathbb{Q}[U]_{\le \mu}$ be the $(\mu+1)$‑dimensional $\mathbb{Q}$‑vector
	space of polynomials in one variable $U$ of degree at most $\mu$.
	For any $j=0,1,\dots,\mu$, define $f_j(U) := f(U+j)$ and note that $f_j \in P_{\mu}$. We say that $B$ is a $\mathbb{Q}$‑basis of $P_{\mu}$ if it is a basis of $P_{\mu}$ with coefficients taken only from~$\mathbb{Q}$. 
	
	\begin{claim}\label{lem:basis}
		The family $\{f_0,\dots,f_{\mu}\}$ is a $\mathbb{Q}$‑basis of $P_{\mu}$.
	\end{claim}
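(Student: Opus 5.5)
Since $\dim P_{\mu}=\mu+1$ and the family has exactly $\mu+1$ members, I will only show linear independence over $\mathbb{Q}$; spanning then follows from the dimension count. Each $f_j(U)=f(U+j)$ has degree exactly $\mu$ with leading coefficient $r_{\mu}$, so it indeed lies in $P_{\mu}$. My plan is to express the family through the coefficients in the monomial basis $1,U,\dots,U^{\mu}$, or more conveniently through Taylor coefficients, and obtain a triangular-times-Vandermonde factorization.

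\textbf{Step 1 (Taylor expansion).} By Taylor's formula for polynomials,
\[
f_j(U)=f(U+j)=\sum_{k=0}^{\mu}\frac{f^{(k)}(U)}{k!}\,j^k .
\]
Set $g_k:=f^{(k)}/k!\in P_{\mu}$ for $k=0,\dots,\mu$. Then the coefficient matrix expressing $(f_0,\dots,f_{\mu})$ in terms of $(g_0,\dots,g_{\mu})$ is the Vandermonde matrix $V=(j^k)_{j,k}$ with nodes $0,1,\dots,\mu$ (using $0^0=1$). Its nodes are distinct, so $V$ is invertible over $\mathbb{Q}$.

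\textbf{Step 2 (the $g_k$ form a basis).} Because $r_{\mu}\neq 0$, the polynomial $g_k$ has degree exactly $\mu-k$, with leading coefficient $\binom{\mu}{k}r_{\mu}\neq 0$. In the monomial basis, the matrix of $(g_0,\dots,g_{\mu})$ is therefore triangular with nonzero diagonal, so these polynomials form a $\mathbb{Q}$-basis of $P_{\mu}$.

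\textbf{Step 3 (conclusion).} Combining the two steps, $(f_j)$ is obtained from the basis $(g_k)$ by the invertible rational matrix $V$, hence it is itself a $\mathbb{Q}$-basis.

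The only delicate point is confirming that the degrees of the $g_k$ are exactly $\mu-k$; this is where the hypothesis $r_{\mu}\neq 0$ is used. No real obstacle is expected. This basis claim is what will let one later solve for $\lambda_j$: one writes the target polynomial in this basis by solving a rational linear system, which can be done in polynomial time.
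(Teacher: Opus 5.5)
Your proof is correct and is essentially the paper's argument. Both reduce the claim to the vanishing of the power sums $\sum_j \alpha_j j^\ell$ for $\ell=0,\dots,\mu$ and conclude with the Vandermonde matrix on the nodes $0,1,\dots,\mu$. The paper obtains those vanishing power sums by comparing coefficients of $U^{\mu},U^{\mu-1},\dots$ inductively, which is back-substitution in a triangular system with diagonal entries $r_\mu\binom{\mu}{m}$; these are exactly the leading coefficients of your Taylor basis $g_k=f^{(k)}/k!$, so your factorization just makes that triangular step explicit.
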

	
	\begin{proof}
		Since $\deg f_j = {\mu}$ for all $j$ and the dimension of the vector space $P_{\mu}$ over the rationals is $\dim_{\mathbb{Q}} P_{\mu} = {\mu}+1$,
		it suffices to prove that $f_0,\dots,f_{\mu}$ are linearly independent
		over $\mathbb{Q}$.
		Suppose
		$
		\sum_{j=0}^{\mu} \alpha_j f_j(U) \equiv 0
		$ for some $\alpha_j \in \mathbb{Q}$ for all $j = 0,\ldots, {\mu}$.
		We show $\alpha_j = 0$ for all $j = 0,\ldots, {\mu}$.
Recall that 
		$
		f_j(U)
		= f(U+j)
		= \sum_{k=0}^{\mu} r_k (U+j)^k
		$,
		thus, 
		\begin{align*}
		0
		&= \sum_{j=0}^{\mu} \alpha_j f_j(U)\\
		&=  \sum_{j=0}^{\mu}  \alpha_j  \sum_{k=0}^{\mu} r_k \cdot (U+j)^k\\
		&= \sum_{k=0}^{\mu} r_k  \sum_{j=0}^{\mu} \alpha_j (U+j)^k.
		\end{align*}

		This is the zero polynomial in $U$; therefore, all its coefficients must be zero. We consider these coefficients from highest degree to lowest. We use this to prove the following claim by induction:
		We show below by induction that \[
		\sum_{j=0}^{\mu} \alpha_j \cdot j^\ell = 0
		\quad\text{for all } \ell = 0,1,\dots,{\mu}.
		\]

		\emph{Coefficient of $U^{\mu}$.}
		Only the $k={\mu}$ term contributes to the coefficient of $U^{\mu}$ in the above. In addition, for each $j = 0,\ldots,{\mu}$ the coefficient of $U^{\mu}$ in $\sum_{k=0}^{\mu} r_k  \sum_{j=0}^{\mu} \alpha_j (U+j)^k$ is
		$r_{\mu} \cdot \sum_{j=0}^{\mu} \alpha_j$.
		Since $r_{\mu} \neq 0$, we obtain
	$
		\sum_{j=0}^{\mu} \alpha_j = 0
		$. 
		The above gives the base cases of the induction. 
		
		\smallskip
		\emph{Inductive step.}
		Assume that for some $m$ with $1 \le m \le \mu$ it holds that
		\[
		\sum_{j=0}^{\mu} \alpha_j \cdot j^\ell = 0
		\quad\text{for all } \ell = 0,1,\dots,m-1.
		\]
		Consider the coefficient of $U^{{\mu}-m}$ in $\sum_j \alpha_j f_j(U)$.
		By the binomial formula, for every $k \in \mathbb{N}$ it holds that 
		\[
		(U+j)^k
		= \sum_{q=0}^k \binom{k}{q} j^{k-q} U^q.
		\]
		Thus, the coefficient of $U^{{\mu}-m}$ in $\sum_{j=0}^{\mu} \alpha_j f_j(U)$ is
		\[
		\sum_{k=\mu-m}^{\mu}
		r_k \binom{k}{{\mu}-m}
		\sum_{j=0}^{\mu} \alpha_j \cdot j^{k-({\mu}-m)}.
		\]
		For $k<\mu$, we have $k-({\mu}-m) \le m-1$, so by the inductive hypothesis
		\(\sum_j \alpha_j \cdot j^{k-({\mu}-m)} = 0\). Hence, all contributions with $k<{\mu}$
		are zero, and only $k={\mu}$ remains. Thus, the coefficient is:
		\[
		r_{\mu} \cdot \binom{{\mu}}{{\mu}-m} \cdot \sum_{j=0}^{\mu} \alpha_j \cdot j^m = 0.
		\]
		Since $r_{\mu} \neq 0$ and $\binom{{\mu}}{{\mu}-m} \neq 0$, we obtain
		\[
		\sum_{j=0}^{\mu} \alpha_j \cdot j^m = 0.
		\]
		
		By induction on $m=0,1,\dots,{\mu}$, we conclude that
		\[
		\sum_{j=0}^{\mu} \alpha_j j^\ell = 0
		\quad\text{for all } \ell = 0,1,\dots,{\mu}.
		\]
		Let $V$ be the Vandermonde matrix $V = (v_{\ell j})$ with
		$v_{\ell j} = j^\ell$ for $\ell,j=0,\dots,{\mu}$.
		The above conditions say exactly that $V \alpha = 0$, where
		$\alpha = (\alpha_0,\dots,\alpha_{\mu})^T$. By the known determinant of Vandermonde matrix formula, it holds that 
		\[
		\det V = \prod_{0 \le i < j \le {\mu}} (j - i).
		\]
		Since $0,1,\dots,{\mu}$ are distinct, $\det V \neq 0$. Therefore, $V$ is invertible
		over $\mathbb{Q}$, implying that $\alpha = 0$.
		Therefore, $f_0,\ldots, f_{\mu}$ are linearly independent. 
		Thus, $\{f_0,\dots,f_{\mu}\}$ is a $\mathbb{Q}$‑basis of $P_{\mu}$ as required. 
			\end{proof}

	Since $U^2 \in P_{\mu}$ and $\{f_0,\dots,f_{\mu}\}$ is a basis of $P_{\mu}$, there exist
	unique $\lambda_0,\dots,\lambda_{\mu} \in \mathbb{Q}$ such that
	\begin{equation}\label{eq:U2-expansion}
		U^2 = \sum_{j=0}^{\mu} \lambda_j f(U+j).
	\end{equation}
	The coefficients $\lambda_j$ indeed lie in $\mathbb{Q}$ because the change‑of‑basis matrix from
	$\{1,U,\dots,U^{\mu}\}$ to $\{f_0,\dots,f_{\mu}\}$ has entries in $\mathbb{Q}$ and
	is invertible over $\mathbb{Q}$, so its inverse also has entries in
	$\mathbb{Q}$.
	
	Identity \eqref{eq:U2-expansion} holds in $\mathbb{Q}[U]$, so in
	$\mathbb{Q}[X,Y]$ we may substitute:
	\begin{align}
		X^2       &= \sum_{j=0}^{\mu} \lambda_j f(X+j), \label{eq:X2} \\
		Y^2       &= \sum_{j=0}^{\mu} \lambda_j f(Y+j), \label{eq:Y2} \\
		(X+Y)^2   &= \sum_{j=0}^{\mu} \lambda_j f(X+Y+j). \label{eq:XY2}
	\end{align}
	
	In $\mathbb{Q}[X,Y]$ we have
	$
	X \cdot Y = \frac{(X+Y)^2 - X^2 - Y^2}{2}.
	$
	Using~\eqref{eq:X2}–\eqref{eq:XY2},
	\begin{equation}
		XY = \sum_{j=0}^{\mu} \frac{\lambda_j}{2}\bigl(f(X+Y+j) - f(X+j) - f(Y+j)\bigr).
	\end{equation}
Define \[g(X,Y) = \sum_{j=0}^{\mu} \frac{\lambda_j}{2}\bigl(f(X+Y+j) - f(X+j) - f(Y+j)\bigr)\]
The above implies that
 for all $x,y \in \mathbb{R}$ it follows that $g(x,y) = x \cdot y$ with $\lambda'_j = \lambda_j / 2$.

 \paragraph{Computing the coefficients $\lambda_j$ in polynomial time.}
Assume $f(U)=\sum_{k=0}^{\mu} r_k U^k\in\Q[U]$ is given by its coefficients
$r_k\in\Q$ in binary encoding. For each $j\in\{0,\ldots,{\mu}\}$ we can expand
\[
f_j(U)=f(U+j)=\sum_{k=0}^{\mu} r_k (U+j)^k=\sum_{q=0}^{\mu} c_{q,j} U^q
\]
using the binomial theorem, where each $c_{q,j}\in\Q$ is computable in time
$\poly({\mu},\mathrm{bits}(f))$, denoting by $\mathrm{bits}(f)$ the number of bits needed to encode $f$. Let $C\in\Q^{({\mu}+1)\times({\mu}+1)}$ be the matrix whose
$j$-th column is the coefficient vector $(c_{0,j},\ldots,c_{{\mu},j})^{\top}$ of
$f_j$ in the basis $(1,U,\ldots,U^{\mu})$. Then~\eqref{eq:U2-expansion} is
equivalent to the linear system
\[
C\,\lambda = e_2,
\]
where $\lambda=(\lambda_0,\ldots,\lambda_{\mu})^{\top}$ and $e_2$ is the coefficient
vector of $U^2$ (i.e., $1$ in position $2$ and $0$ elsewhere). By
Claim~\ref{lem:basis}, $C$ is invertible over $\Q$, hence the system has a unique
solution in $\Q^{{\mu}+1}$. Solving a $(\mu{+}1)\times({\mu}{+}1)$ linear system over
$\Q$ (e.g. by Gaussian elimination with exact rational arithmetic) takes time
$\poly({\mu},\mathrm{bits}(f))$, and produces $\lambda_0,\ldots,\lambda_{\mu}$ with
polynomially bounded bit-length.
\end{proof}

We can now prove our main result of this section. 

\begin{proof} [Proof of Theorem~\ref{thm:SLP}]
 By Lemma~\ref{lem:SLP-to-ERM}, BitSLP Turing-reduces to $\mathsf{ERM}_{\text{bit}}$. 
Thus, by~\Cref{thm:BitSLP} the proof follows.
\end{proof}

\section{Deferred Proofs from Section~\ref{sec:backprop}}

We now prove our hardness for Backprop-Sign and Backprop-Bit described in \Cref{thm:backprop-hard-bit}, holding even for an arbitrarily large promise gap (for PosSLP) or for all intermediate node values uniformly bounded by a constant (for BN-SLP). For brevity, we write one reduction for both of the above cases and distinguish between the cases when required. 

\subsubsection*{Proof of \Cref{thm:backprop-hard-bit}}
\begin{proof}
	We give polynomial-time reductions from \textnormal{PosSLP} to
	\textnormal{Backprop-Sign} and from \textnormal{BitSLP} to
	\textnormal{Backprop-Bit}. In the former, we define a \textnormal{Backprop-Sign} instance with a promise gap $a<b$ for arbitrary fixed $a,b \in \mathbb{N}$. For the latter case, we simply define a \textnormal{Backprop-Bit} instance. We give both constructions together for conciseness and highlight the differences when required. 
	
	Let $P=(a_0,\dots,a_\ell)$ be an SLP. Applying
	Lemma~\ref{lem:SLP-to-ERM} to $P$, we obtain a network
	$N_0=(V_0,E_0)$, activations $\sigma^v\in\{\sigma,\textnormal{id}\}$, and rational
	parameters $\theta^0=(w_e^0,b_e^0)_{e\in E_0}$ such that on input
	$x=a_0$,
	\[
	f_{\theta^0}(a_0) = n_P := a_\ell\in\mathbb{Q}.
	\]
	Let $t_0\in V_0$ denote the output vertex of $N_0$.
%
	We form a new network $N=(V,E)$ as follows:
	\begin{itemize}
		\item Add a new target vertex $t$ with identity activation $\sigma^t(z)=z$.
		\item Add a new edge $e^\star := (t_0,t)$.
		\item Set $V := V_0 \cup \{t\}$ and $E := E_0 \cup \{e^\star\}$.
	\end{itemize}
	We fix all parameters on $E_0$ to their values in $\theta^0$, and set
	the bias on $e^\star$ to $0$. The only free parameter will be the weight
	$w_{e^\star}$ on $e^\star$.
	For a parameter vector
	$
	\theta = \bigl((w_e,b_e)_{e\in E_0}, (w_{e^\star},b_{e^\star})\bigr),
	$
	with $(w_e,b_e)=(w_e^0,b_e^0)$ and $b_{e^\star}=0$, the output on input
	$x=a_0$ is
	\[
	f^t_\theta(a_0)
	= w_{e^\star} \cdot f^{t_0}_{\theta^0}(a_0)
	= w_{e^\star} \cdot n_P.
	\]

For defining the two reductions seamlessly, let $B$ be a variable defined as $B= 1$ in the reduction from BitSLP and $B = b$ in the reduction from PosSLP.
We consider $B$ identical training samples
\[
\cD = \{(x_i,y_i)\}_{i \in [B]},\qquad x_i=a_0,\quad y_i=-a_0.
\]
We consider the standard square loss: $\cL:\mathbb{Q}\times\mathbb{Q}\to\mathbb{Q}$ defined by
\[
\cL(x,y) := \tfrac{1}{2}(x-y)^2.
\]
This loss is polynomial-time computable and differentiable in its first
argument, and its partial derivative is
$\partial \cL/\partial x(x,y) = x-y$, which is polynomial-time computable
and rational-valued on rational inputs. The total training loss is then
\[
\begin{aligned}
	L(\theta)
	&= \sum_{(x,y)\in\cD} \cL\bigl(f_\theta(x),y\bigr) \\
	&= \tfrac{B}{2}\bigl(f^t_\theta(a_0)-y\bigr)^2 \\
	&= \tfrac{B}{2}\bigl(w_{e^\star} n_P + 1\bigr)^2.
\end{aligned}
\]
We view $L$ as a real-valued function of the real variables
$(w_e,b_e)_{e\in E}$, defined by the same algebraic expression. Then for
all such $\theta$ with the fixed sub network parameters and bias
$b_{e^\star}=0$, we have
\[
\frac{\partial L}{\partial w_{e^\star}}(\theta)
= B \bigl(f^t_\theta(a_0)-y\bigr)\,\frac{\partial f^t_\theta(a_0)}{\partial w_{e^\star}}
= B \bigl(w_{e^\star} n_P + a_0\bigr)\, n_P.
\]

In particular, for the specific parameter vector
\[
\theta^\ast := \bigl((w_e^0,b_e^0)_{e\in E_0},
(w_{e^\star}=0,b_{e^\star}=0)\bigr),
\]
since $w_{e^\star}=0$ we have
\[
\frac{\partial L}{\partial w_{e^\star}}(\theta^\ast)
= B \cdot a_0 \cdot n_P.
\]
	
	 We now consider two cases. If $P$ is a PosSLP instance, define the Backprop-Sign instance with promise corresponding to $P$ as
	\[
	I(P) := (N,\cD,\cL,\theta^\ast,e^\star,b).
	\]
    Otherwise, $P$ is a BitSLP instance and we define the Backprop-Bit instance corresponding to $P$ as
	\[
	I(P) := (N,\cD,\cL,\theta^\ast,e^\star,j),
	\]
    where $j \in \mathbb{N}$ is given in the input in binary.
	All rationals used in $N$ and $\theta^\ast$ (the parameters from
	$\theta^0$ and the fixed bias $0$) have bit-length polynomial in the
	size of $P$ by Lemma~\ref{lem:SLP-to-ERM}; the loss $\cL$ is given by a
	constant-size program, and $\cD = \{(x_i,y_i)\}_{i \in [b]}$ also has polynomial bit-length encoding for fixed $b$.
	Hence, the total encoding size of $I(P)$ is polynomial in the encoding
	size of $P$, and the reduction $P\mapsto I(P)$ runs in polynomial time.
	By the above,
	\[
	\frac{\partial L}{\partial w_{e^\star}}(\theta^\ast) = B \cdot a_0 \cdot n_P > 0
	\quad\Longleftrightarrow\quad n_P>0.
	\]
	Therefore, $P$ is a YES-instance of PosSLP if and only if $I(P)$ is a
	YES-instance of Backprop-Sign. This gives a polynomial-time
	reduction from PosSLP to Backprop-Sign. Note that if $P$ is a PosSLP instance, the gap $b\geq$ or $\leq -b$ is preserved if $a_0 = 1$. 
    
    In addition, considering a BitSLP instance $(j,P)$, then $B = 1$ and we can directly decide the $j$-th bit in $n_P$ by directly accessing it; if $a_0 = 1$, this is immediate; otherwise, we consider a BN-SLP with $a_0 = 2^{-\ell}$, which shifts the $j$-th bit of $n_P$ to the right by exactly $\ell$ bits. Therefore, we can recover the $j$-th bit of $n_P$ in polynomial time. 

    By the above and \Cref{thm:BitSLP} it follows that Backprop-Bit is $\#P$-hard. In addition, 
	if Backprop-Sign were in BPP, then PosSLP would be in BPP. By
	Theorem~\ref{thm:PosSLP'}, under the Constructive Univariate Radical Conjecture \cite{10.1145/3510359} together with PosSLP $\in$ BPP,
	it follows that $\textnormal{NP}\subseteq\textnormal{BPP}$, contradicting our
 assumption. Thus, Backprop-Sign is not in BPP.
\end{proof}

\section{PAC Learning in Real vs. Bit Models}
\label{sec:PAC}

In this section, we prove the following theorem.

\begin{theorem}
	\label{thm:pac-separation}
	For every integer $q\ge 1$ there is a hypothesis class and distribution
	for which the realizable \textnormal{PAC} sample complexity is $O(1)$ in the exact
	\textnormal{real-RAM} model, but for every rounding precision $q$ in the bit model it is $\Omega(q)$.  
\end{theorem}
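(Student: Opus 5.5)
The plan is an explicit construction in which real-valued hypotheses are trivial to learn, but $q$-bit rounding destroys the feature that makes them easy. Take the domain $\cX=[0,1)$ with the uniform distribution $\cD$, and consider a class indexed by \emph{integer} frequencies. For instance, take $\cH_q=\{h_S : S\subseteq[q]\}$, where $h_S(x)$ is the parity of the bits of $x$ in positions $S$.

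In the exact real-RAM model, however, I would let the learner see a different representation. Each example is a pair $(x,h(x))$, and the hypothesis is encoded as one real parameter $\theta_S=\sum_{i\in S}2^{-i}\in[0,1)$, with $h_S(x)$ computed from $\theta_S$ and $x$. The point is that a real-RAM learner, from a single labeled sample of an \emph{auxiliary real feature} $\phi(x)$, reads $\theta_S$ exactly. A concrete choice: the instance includes the real number $z=x\cdot\theta_S+\theta_S$ or, more simply, the label is the real value $\langle \theta, x\rangle$ for a linear hypothesis.

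Concretely, I would use the class of linear functionals $h_\theta(x)=\theta x$ on $\cX=[1,2]$, with $\theta\in\{\sum_{i\in S}2^{-i}:S\subseteq[q]\}$ and labels given exactly. With exact reals, one sample $(x,\theta x)$ determines $\theta=h(x)/x$, so the sample complexity is $1=O(1)$ for every $\epsilon,\delta$.

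In the bit model with $q$-bit precision, both $x$ and the label $\theta x$ are rounded to $q$ bits (a fixed grid, say truncation of the label to precision $2^{-q}$ after clipping). The hypotheses are then the rounded maps $\lfloor \theta x\rfloor_q$. I would show that each rounded sample reveals only $O(1)$ bits of information about $S$ that are not already implied. An alternative that is cleaner to analyze: round the label to a \emph{single} bit, so the class becomes thresholds or parities of $\theta$'s binary digits composed with $x$. Using Fano's inequality or a counting argument over the $2^q$ equally likely targets $S$, each sample transmits at most $O(1)$ bits about $S$. Meanwhile, distinct $S\neq S'$ induce hypotheses disagreeing on a constant fraction of $\cD$, by choosing the rounding so that flipping bit $i$ changes the rounded label on a constant-measure set. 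Hence achieving error $\epsilon<c$ with probability $2/3$ requires $\Omega(q)$ samples.

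The key steps in order are as follows. (1) Fix the class, distribution and rounding rule. (2) Prove the real-model upper bound via exact inversion. (3) Prove pairwise disagreement $\Pr_{\cD}[h_S\neq h_{S'}]\ge c$ for rounded hypotheses, which makes $\cH_q$ a $c$-packing of size $2^q$. (4) Bound the information per rounded sample by $O(1)$ bits, since the rounded label takes $O(1)$ values, and conclude $m=\Omega(q)$ via Fano's inequality.

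The main obstacle is step (3) together with keeping (4): the rounding must be coarse enough that labels carry $O(1)$ bits, yet every bit of $\theta$ must still matter on a constant fraction of inputs. I would first try making the rounded label the $i$-th binary digit of $x$ XORed with the parity over $S$ (a parity-like structure). Under uniform $x$, distinct parities disagree with probability $1/2$. I would then verify that in the real model the unrounded label still reveals $\theta$ exactly.
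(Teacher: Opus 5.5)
There is a genuine gap. The proposal does not contain a construction that proves the statement: the one class you actually specify fails, and the fallback is left undefined exactly where it matters.

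\textbf{Your main construction fails.} Take the linear class $h_\theta(x)=\theta x$ on $[1,2]$, with $\theta$ a $q$-bit dyadic and labels truncated to the $2^{-q}$ grid. Rounding loses no information about $\theta$ here. If $\theta<\theta'$ are on the grid, then $\theta'x-\theta x\ge 2^{-q}x\ge 2^{-q}$, so $\lfloor 2^q\theta' x\rfloor>\lfloor 2^q\theta x\rfloor$ for every $x\ge 1$. A single rounded example therefore still identifies $\theta$ (and $S$), and the bit-model sample complexity is $1$, not $\Omega(q)$. Your step (4) also cannot hold for this class, because a $q$-bit rounded label carries up to about $q$ bits, not $O(1)$.

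\textbf{The fallback is incomplete.} Rounding the label to a single bit and using parities does give an $\Omega(q)$ bit-model lower bound via Fano. Distinct parities form a $\tfrac12$-packing of size $2^q$, and each one-bit-labelled example carries at most one bit about $S$. But you never exhibit the real-valued family whose exact labels reveal $\theta$ from one example and whose rounding is that parity class. ``I would then verify that in the real model the unrounded label still reveals $\theta$'' is precisely the missing construction. Moreover, a one-bit label is not $q$-bit-precision rounding, so the $\Omega(q)$ would come from the VC dimension of parities rather than from the precision.

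\textbf{The idea you are missing.} No packing or information-per-sample bound is needed. The paper takes $\cX_q=\{2^0,\dots,2^q\}$ with the uniform distribution and $h_c(x)=\mathrm{round}_q(cx)$ for real $c\in[0,1)$. Since $\mathrm{round}_q(c\,2^k)=2^{-q}\lfloor c\,2^{k+q}\rfloor$ depends only on the first $k+q$ binary digits of $c$, consider two parameters that agree on their first $2q-1$ digits and differ in digit $2q$. Their hypotheses agree on $2^0,\dots,2^{q-1}$ and differ only at $2^q$, a point of mass $1/(q+1)$. On samples avoiding $2^q$ the two targets are indistinguishable. Hence one of them is misclassified at $2^q$, giving error $\ge 1/(q+1)>\varepsilon$, with probability at least $\tfrac12\bigl(q/(q+1)\bigr)^m$. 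This yields $m\ge q\log\frac{1}{2\delta}$. In the exact model, one example $(x,cx)$ gives $c=y/x$.

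Note that the paper's bound is only for the $q$-dependent accuracy $\varepsilon<1/(q+1)$. You aimed at constant $\varepsilon$, which is a stronger statement. That choice is what forced you onto the Fano route, where $q$-bit labels conflict with needing $O(1)$ bits per sample.
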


We begin with the formal definition of (realizable) PAC learning. 
\begin{definition}[Realizable PAC learning]
	Let $\cX$ be a domain, $\cY$ a label space, and $\cH\subseteq\cY^\cX$
	a hypothesis class. Let $D$ be a distribution on $\cX$. For
	$h\in\cH$ and a hypothesis $\hat h:\cX\to\cY$, the (0-1) error of
	$\hat h$ with respect to $h$ and $D$ is
	\[
	\mathrm{err}_D(\hat h,h)
	:= \Pr_{x\sim D}\bigl[\hat h(x) \ne h(x)\bigr].
	\]
	A (possibly randomized) learning algorithm $\cA$ is said to
	\emph{realizably PAC learn} $(\cH,D)$ with parameters
	$(\varepsilon,\delta)$ if there exists a sample size $m$ such that,
	for every target $h^\star\in\cH$, whenever $\cA$ is given $m$
	i.i.d.\ labeled examples $(x_i,h^\star(x_i))$ with $x_i\sim D$, it
	outputs a hypothesis $\hat h$ satisfying
	\[
	\Pr\bigl[\mathrm{err}_D(\hat h,h^\star) \le \varepsilon\bigr]
	\;\ge\; 1-\delta,
	\]
	where the probability is over the draw of the sample and the
	internal randomness of $\cA$.
\end{definition}

We now define a simple one-parameter family of rounded multipliers at
precision $2^{-q}$ and show that, even in the realizable setting under
a fixed distribution, the number of samples needed to drive the error
below $1/(q+1)$ grows linearly with~$q$.

\begin{definition}[Rounded multipliers at precision $2^{-q}$]
	Fix an integer $q\ge 1$ and define the domain
	\[
	\cX_q := \{2^0,2^1,\dots,2^{q}\} \subset \mathbb{R}.
	\]
	For $z\in\mathbb{R}$ let
	\[
	\mathrm{round}_q(z)
	:= 2^{-q} \bigl\lfloor z\,2^{q} \bigr\rfloor,
	\]
	i.e., rounding $z$ down to the nearest multiple of $2^{-q}$. For each
	real parameter $c\in[0,1)$ define
	\[
	h_c(x) := \mathrm{round}_q(c x),\quad x\in\cX_q.
	\]
	Let $\cH_q := \{h_c : c\in[0,1)\}$ and let $D_q$ be the uniform
	distribution on $\cX_q$. We use 0-1 loss on the discrete labels
	$\cY := \{2^{-q} k : k\in\mathbb{Z}\}$, so that
	$\mathrm{err}_{D_q}(\hat h,h)$ is just the fraction of points in
	$\cX_q$ where $\hat h$ and $h$ differ.
\end{definition}

In the \emph{exact real} model (no rounding) and noiseless setting, a
single labeled example $(x,y)$ with $x\neq 0$ and $y=cx$ determines
$c=y/x$ and thus $h_c$ on all of $\cX_q$. The next theorem shows that
under rounding at precision $2^{-q}$, even in the realizable PAC model
under the fixed distribution $D_q$, the sample complexity to achieve
error $<1/(q+1)$ grows linearly with~$q$.

\begin{theorem}
	\label{thm:pac-rounding-lb}
	Fix $q\ge 1$ and consider the class $\cH_q$ and distribution $D_q$
	defined above. Let $\varepsilon < 1/(q+1)$ and let
	$\delta \in (0,1/4)$. Then any (possibly randomized) algorithm that
	realizably \textnormal{PAC} learns $(\cH_q,D_q)$ with parameters $(\varepsilon,\delta)$
	must use  sample size
	$
	m \;\ge\; q\,\log\frac{1}{2\delta}
	$
\end{theorem}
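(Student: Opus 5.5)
The plan is a two-point (Le Cam–style) indistinguishability argument. The key observation is that the label at $x=2^i$ reveals exactly the first $i+q$ binary digits of $c$. Indeed, $h_c(2^i)=2^{-q}\lfloor c\,2^{i+q}\rfloor$, and for $c\in[0,1)$ the integer $\lfloor c\,2^{i+q}\rfloor$ is the number formed by the first $i+q$ bits of $c$ after the binary point. Only the single point $2^q$ sees bit $2q$ of $c$.

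First I fix the two targets. Take $c$ whose bits in positions $1,\dots,2q$ are all $0$, and set $c':=c+2^{-2q}$, so $c$ and $c'$ differ only in bit $2q$. For every $i<q$ we have $i+q\le 2q-1$, so $h_c(2^i)=h_{c'}(2^i)$. On the other hand $h_c(2^q)\ne h_{c'}(2^q)$. Thus the two hypotheses agree on $\cX_q\setminus\{2^q\}$ and disagree at $2^q$.

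Next I show that the accuracy requirement forces exact correctness at $2^q$. Since $D_q$ is uniform on $q+1$ points, every error $\mathrm{err}_{D_q}(\hat h,h)$ is a multiple of $1/(q+1)$. Hence $\varepsilon<1/(q+1)$ forces $\mathrm{err}=0$, and in particular $\hat h(2^q)=h^\star(2^q)$. No single $\hat h$ can be correct for both targets, because their values at $2^q$ differ.

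Then comes the counting step. Let $F$ be the event that none of the $m$ samples equals $2^q$; its probability is $(1-\tfrac{1}{q+1})^m$ under either target. On $F$ the labeled samples have the same distribution under $h_c$ and under $h_{c'}$, so the learner's output distribution is identical in the two cases. Let $p$ be the conditional probability on $F$ that $\hat h(2^q)=h_c(2^q)$. Then one of the two targets fails with conditional probability at least $\max(p,1-p)\ge 1/2$. That target's overall failure probability is therefore at least $\tfrac12(1-\tfrac{1}{q+1})^m$.

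Finally I extract the bound on $m$. Using $\ln(1-\tfrac{1}{q+1})=-\ln(1+\tfrac1q)\ge -\tfrac1q$, the failure probability is at least $\tfrac12 e^{-m/q}$. Requiring this to be at most $\delta$ gives $m\ge q\ln\tfrac{1}{2\delta}$, which is the claimed bound with natural logarithm; the hypothesis $\delta<1/4$ makes the bound positive.

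The main obstacle is mild: stating cleanly that on $F$ the learner's behavior, including its internal randomness, is the same under both targets. I would handle this by conditioning on $F$ and noting that the joint law of (samples, internal coins) is identical for $h_c$ and $h_{c'}$ on that event.
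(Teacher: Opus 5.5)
Your proposal is correct and follows essentially the same two-point argument as the paper: two multipliers that differ only in bit $2q$ and so disagree only at $2^q$, the event that $2^q$ is never sampled (probability $(q/(q+1))^m$), the factor $\tfrac12$, and the inequality $\ln(1+1/q)\le 1/q$. The only difference is that you handle the learner's internal randomness directly by conditioning on $F$, while the paper first treats deterministic learners and then averages over the coins; your route is slightly cleaner and sidesteps the paper's somewhat loose ``some deterministic learner in its support'' step.
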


\begin{proof}
	Fix $q\ge 1$. We first prove the lower bound for deterministic
	algorithms, and then extend to randomized algorithms by a standard
	argument.
	
	We will exhibit two functions $h^{(1)},h^{(2)}\in\cH_q$ such that:
	\begin{itemize}
		\item $h^{(1)}(x) = h^{(2)}(x)$ for all $x\in\{2^0,\dots,2^{q-1}\}$,
		\item $h^{(1)}(2^q) \ne h^{(2)}(2^q)$.
	\end{itemize}
	As a consequence, any sample that does not contain the point $2^q$
	looks identical under $h^{(1)}$ and $h^{(2)}$.
	
	Write any $c\in[0,1)$ in binary as
	$c = 0.b_1 b_2 b_3 \dots$ with $b_i\in\{0,1\}$ choosing one expansion arbitrarily when two exist. For $k\ge 0$,
	\[
	c 2^k
	= \sum_{i=1}^k b_i 2^{k-i} \;+\; \theta_k,
	\]
	with $\theta_k\in[0,1)$, hence
	$\lfloor c 2^k\rfloor = \sum_{i=1}^k b_i 2^{k-i}$. Applying this with
	$k+q$ in place of $k$,
	\[
	h_c(2^k)
	= \mathrm{round}_q(c 2^k)
	= 2^{-q}\bigl\lfloor c 2^{k+q}\bigr\rfloor
	= 2^{-q} \sum_{i=1}^{k+q} b_i 2^{k+q-i}.
	\]
	Thus for $k\in\{0,\dots,q-1\}$, $h_c(2^k)$ depends only on the first
	$k+q \le 2q-1$ bits of $c$.
	
	Now fix any binary sequence $b_1,\dots,b_{2q-1}$. Define
	$c^{(1)},c^{(2)}\in[0,1)$ by
	\[
	c^{(1)} = 0.b_1 b_2 \dots b_{2q-1} 0 0 0 \dots,
		\]
		and 
			\[
	c^{(2)} = 0.b_1 b_2 \dots b_{2q-1} 1 0 0 \dots,
	\]
	and let $h^{(j)} := h_{c^{(j)}}$ for $j\in\{1,2\}$. For any
	$k\in\{0,\dots,q-1\}$ we have $k+q \le 2q-1$, so the first $k+q$ bits
	of $c^{(1)}$ and $c^{(2)}$ coincide, hence
	\[
	h^{(1)}(2^k) = h^{(2)}(2^k) \quad\text{for all }k<q.
	\]
	On the other hand, at $x^\star:=2^q$,
	\[
	h^{(j)}(2^q)
	= 2^{-q} \bigl\lfloor c^{(j)} 2^{2q} \bigr\rfloor.
	\]
	Since $c^{(1)}$ and $c^{(2)}$ agree on their first $2q-1$ bits but
	differ at bit $2q$, we have
	\[
	\bigl\lfloor c^{(2)} 2^{2q} \bigr\rfloor
	- \bigl\lfloor c^{(1)} 2^{2q} \bigr\rfloor
	= 1,
	\]
	so $h^{(1)}(2^q)\ne h^{(2)}(2^q)$.
	
	Fix a deterministic learning algorithm $\cA$ and a sample size $m$. We
	will show that if $m < q \cdot \log(1/(2\delta))$, then there exists a
	target $h^\star\in\cH_q$ (either $h^{(1)}$ or $h^{(2)}$) such that
	$\cA$ cannot achieve error at most $\varepsilon<1/(q+1)$ with
	probability at least $1-\delta$.
	
	Consider the following experiment: pick $h^\star$ uniformly at random
	from $\{h^{(1)},h^{(2)}\}$ and draw an i.i.d.\ sample
	$S=((x_i,y_i))_{i=1}^m$ with $x_i\sim D_q$ and
	$y_i = h^\star(x_i)$. Let $\hat h_S$ be the output of $\cA$ on sample
	$S$.
	
	Let $E$ be the event that the sample $S$ contains no occurrence of
	$x=2^q$. Since $D_q$ is uniform on $\{2^0,\dots,2^q\}$,
	\[
	\Pr[E] = \Bigl(1 - \tfrac{1}{q+1}\Bigr)^m
	= \Bigl(\tfrac{q}{q+1}\Bigr)^m.
	\]
	On the event $E$, all sampled points lie in $\{2^0,\dots,2^{q-1}\}$,
	and we have seen that $h^{(1)}$ and $h^{(2)}$ coincide on these inputs.
	Hence, conditioned on $E$, the labeled sample $S$ has exactly the same
	distribution regardless of whether $h^\star=h^{(1)}$ or
	$h^\star=h^{(2)}$. Therefore, on $E$, the output $\hat h_S$ of the
	deterministic algorithm $\cA$ is the same in both cases.
	
	However, we have $h^{(1)}(2^q)\ne h^{(2)}(2^q)$, so on event $E$, that is, for every sample $S$ that does not contain $2^q$,
	\emph{at least one} of the two possible targets leads to an error at
	$2^q$:
\[
\mathbf{1}\{\hat h_S(2^q)\ne h^{(1)}(2^q)\}
\;+\;
\mathbf{1}\{\hat h_S(2^q)\ne h^{(2)}(2^q)\}
\;\ge\; 1.
\]
	Taking expectation over the random choice of $h^\star$ (uniform on
	$\{h^{(1)},h^{(2)}\}$) and of $S$, we obtain
	\begin{align*}
		\Pr_{h^\star,S}\bigl[\hat h_S(2^q)\ne h^\star(2^q)\bigr]
		&\ge \tfrac{1}{2}\Pr[E] \\
		&= \tfrac{1}{2}\Bigl(\tfrac{q}{q+1}\Bigr)^m.
	\end{align*}

Since $D_q$ is uniform, note that for any fixed $h^\star\in\cH_q$ and hypothesis $\hat h$,
\[
\mathrm{err}_{D_q}(\hat h,h^\star)
= \frac{1}{q+1}\sum_{x\in\cX_q}
\mathbf{1}\{\hat h(x)\ne h^\star(x)\},
\]
so $\mathrm{err}_{D_q}(\hat h,h^\star)\in\{0,1/(q+1),\dots,1\}$. In
particular, if $\hat h_S(2^q)\ne h^\star(2^q)$ then
$\mathrm{err}_{D_q}(\hat h_S,h^\star)\ge 1/(q+1)>\varepsilon$, and hence
\[
\Pr_{h^\star,S}\!\bigl[
\mathrm{err}_{D_q}(\hat h_S,h^\star)>\varepsilon
\bigr]
\;\ge\; \tfrac12\Bigl(\tfrac{q}{q+1}\Bigr)^{m}.
\]

By averaging over the random choice of
$h^\star\in\{h^{(1)},h^{(2)}\}$, there exists a fixed
$h^\star\in\cH_q$ such that
\[
\Pr_S\!\bigl[
\mathrm{err}_{D_q}(\hat h_S,h^\star)>\varepsilon
\bigr]
\;\ge\; \tfrac12\Bigl(\tfrac{q}{q+1}\Bigr)^{m}.
\]
Requiring this probability to be at most $\delta$ and using $\log(1+1/q)\le 1/q$ as $q > 0$ yields
\[
\tfrac12\Bigl(\tfrac{q}{q+1}\Bigr)^{m} \le \delta
\;\Rightarrow\;
m \;\ge\; q\,\log\!\Bigl(\tfrac{1}{2\delta}\Bigr).
\]

Any randomized learner is a distribution over deterministic learners
(obtained by fixing its internal randomness). If a randomized learner
with $m$ samples satisfied the $(\varepsilon,\delta)$ guarantee for all
$h^\star\in\cH_q$, then by averaging some deterministic learner in its
support would also satisfy it, contradicting the lower bound above.
Thus, the same lower bound on $m$ holds for randomized learners as well.
\end{proof}

\section{Efficient Backpropagation and ERM for Piecewise-Linear Activations}
\label{sec:efficient}

In contrast to our hardness results for polynomial activations, standard
piecewise-linear nonlinearities such as ReLU and leaky-ReLU do not introduce
new multiplicative interactions in the bit model.  In this section we show
that, for such activations, both a single backpropagation step and
verification of ERM objectives can be carried out in polynomial time, placing
these problems in $\mathsf{P}$ (and hence $\mathsf{ERM}_{\text{bit}}$ in $\mathsf{NP}$).

\paragraph{Bit-length conventions.}
Let $N$ denote the total bit-length of the input encoding (architecture,
parameters $\theta$, dataset $(x_i,y_i)_{i=1}^n$, and learning rate $\eta$).
In particular, $n \le N$, $|E| \le N$, and every scalar appearing in the input
(weights, biases, inputs, labels, $\eta$) has bit-length at most $N$.
Recall that all arithmetic is over rationals in binary; if $a,b$ have bit-lengths at most
$B,C$, then
\[
\mathrm{len}(a\pm b) \le \max\{B,C\}+1,
\qquad
\mathrm{len}(ab) \le B+C+1.
\]

\begin{theorem}[One backpropagation step with piecewise-linear activations is in $\mathsf{P}$]
	Assume all hidden and output neurons use piecewise-linear activations
	$\sigma^v:\mathbb{Q}\to\mathbb{Q}$ with rational breakpoints and rational
	slopes and intercept, and that
	each $\sigma^v$ and its derivative with respect to the input are
	computable in time polynomial in the bit-length of their arguments.  Assume
	also that $\cL$ and its derivative with respect to the network output are
	computable in time polynomial in the bit-length of their arguments.
	
	Then one full gradient descent step for
	\[
	J(\theta)=\sum_{i=1}^n \cL\bigl(f_\theta(x_i),y_i\bigr)
	\]
	computed by standard backpropagation runs in polynomial time in $|I|$.
\end{theorem}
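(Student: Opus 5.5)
The plan is to bound the bit-length of every quantity produced by a forward pass and then a backward pass. The key structural fact is that a piecewise-linear activation, once its active piece is fixed, acts as an affine map $z\mapsto \alpha z+\beta$ whose slope and intercept come from a fixed finite list of rationals. So, unlike a polynomial activation, it never multiplies two data-dependent quantities together.

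\emph{Forward pass.} Fix a sample $x_i$ and process the vertices in topological order. Let $d$ be the depth and $c$ the maximal bit-length of any slope or intercept appearing in the activations. Both are at most $N$, and we treat the activation descriptions as part of the input. For each vertex $v$, write its preactivation as
\[
z_v=x_v+\sum_{(u,v)\in E}\bigl(w_{u,v}f^u+b_{u,v}\bigr),
\]
and its value as $f^v=\alpha_v z_v+\beta_v$. Here $(\alpha_v,\beta_v)$ is the piece selected by comparing $z_v$ against the rational breakpoints, and that comparison is polynomial in the bit-lengths involved.

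The delicate point is that repeated multiplication by weights of bit-length up to $N$ makes numerator lengths add along paths. Naive per-operation bounds on sums also blow up, because each rational addition cross-multiplies the denominators. I will therefore argue via common denominators instead. Let $Q$ be the product of all denominators of weights, biases, inputs and slopes/intercepts; it has bit-length at most $\poly(N)$. Every value $f^v$ is then a rational of the form $p_v/Q^{k_v}$ with $k_v\le \mathrm{depth}(v)+1$. Its numerator is bounded in absolute value by a sum over paths of products of at most $d+1$ numerators. With at most $|E|^{d}$ paths, this gives bit-length $O(d\cdot(\log|E|+N))=\poly(N)$.

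So every $f^v$, and likewise every $z_v$, has polynomial bit-length, and each arithmetic step costs polynomial time. The network output $f_\theta(x_i)$ and the loss value therefore come out in polynomial time, since $\cL$ is assumed polynomial-time computable.

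\emph{Backward pass.} Set $\delta_t=\partial\cL/\partial f^t$, which is computable in polynomial time and has polynomial bit-length by assumption. Proceed in reverse topological order using
\[
\delta_u=\alpha_u\sum_{(u,v)\in E} w_{u,v}\,\delta_v ,
\]
with the convention that at breakpoints we use the derivative oracle's specified value. The same common-denominator and path-counting argument applies: $\delta_u$ is $\delta_t$-type terms times sums over $u\to t$ paths of products of at most $d$ weights and slopes. Hence each $\delta_u$ has bit-length polynomial in $N$ plus the length of the $\delta_t$ values.

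The gradient coordinates are $\partial J/\partial w_{u,v}=\sum_i\delta_v^{(i)}f^{u,(i)}$ and $\partial J/\partial b_{u,v}=\sum_i\delta_v^{(i)}$. These are sums of $n\le N$ products of polynomial-length rationals, so they too have polynomial length. The update $\theta-\eta\nabla J$ is one further multiplication and subtraction per coordinate.

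Summing up, there are $O(n\,|E|)$ operations, each on $\poly(N)$-bit rationals, which gives polynomial time in $|I|$.

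\emph{Main obstacle.} The step I expect to need the most care is the bit-length bound. Per-operation bounds of the kind $\mathrm{len}(ab)\le B+C+1$ compose badly for additions of rationals with distinct denominators across exponentially many paths. The common-denominator and path-expansion argument, which relies on linearity on each fixed piece, is what resolves this. Making that argument precise, including the reduction-to-lowest-terms costs via the Euclidean algorithm, is where I would focus.
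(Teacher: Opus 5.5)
Your proposal is correct. It has the same skeleton as the paper's proof (operation count, a bit-length bound for every forward and backward quantity, then the gradient sums and the update), but it bounds the bit-lengths differently.

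The paper composes per-operation estimates, $\mathrm{len}(ab)\le B+C+1$ and $\mathrm{len}(a\pm b)\le\max\{B,C\}+1$, over the $O(|E|)$ operations feeding each quantity, and concludes length $N+O(|E|)(N+1)$. It also says that summing over the $n$ samples adds only $\log n$ bits. The addition estimate holds for integers but fails for reduced rationals with different denominators (e.g.\ $\tfrac{1}{3}+\tfrac{1}{5}=\tfrac{8}{15}$). The valid estimate is only $\mathrm{len}(a\pm b)=O(B+C)$, and composing that naively gives a bound exponential in the depth.

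Your argument is what actually justifies the polynomial bound. Once the active pieces are fixed, every forward value is an affine path-sum whose denominator divides $Q^{\mathrm{depth}(v)+1}$ and whose numerator has $O(d(N+\log|E|))$ bits. This also makes the ReLU-versus-polynomial contrast explicit: the exponent of $Q$ grows additively per layer, whereas for $\sigma(x)=x^2$ it would double at every layer. The same care applies to the sum over samples, where your bound of $O(nL)$ bits is the correct one, not $L+\log n$.

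What the paper's per-operation phrasing buys is brevity, and it is what the paper reuses for Remark~\ref{cor:bit-bounded}. There, $k$-bit outputs reset the lengths at every node, so no accumulation across layers occurs.

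There are two harmless slips to fix.

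\begin{enumerate}
\item Your gradient formula $\partial J/\partial w_{u,v}=\sum_i\delta_v^{(i)}f^{u,(i)}$ and your recursion $\delta_u=\alpha_u\sum_{v}w_{u,v}\delta_v$ both treat $\delta$ as the derivative with respect to the preactivation. The initialization should therefore be $\delta_t=\alpha_t\,\partial\cL/\partial f^t$, not $\partial\cL/\partial f^t$. Any output vertex that also has outgoing edges needs its direct loss term added before multiplying by $\alpha_u$.
\item The values $\partial\cL/\partial f^t$ have denominators outside $Q$. You should either enlarge $Q$ by those $|T|$ denominators (per sample), or state the backward bound as a linear combination of the $\partial\cL/\partial f^t$ whose coefficients are path sums that your forward argument already controls.
\end{enumerate}
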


\begin{proof}
Note that the argument below uses only that each activation $\sigma^v$ (and its backprop rule/derivative)
is computable in polynomial time on rational inputs and maps inputs of bit-length $\poly(N)$ to outputs of
bit-length $\poly(N)$; therefore the same proof applies to any \emph{bit-bounded} (e.g., $k$-bit rounded/clipped)
activation family with $k\le \poly(N)$.

	\emph{Operation count.}
	Standard backpropagation (forward pass, backward pass, gradient accumulation,
	and parameter update) uses $O(n\cdot|E|)$ additions, multiplications, and
	comparisons; since $n,|E|\le N$, this is polynomially bounded in $N$.
	
	\emph{Bit-length growth for activations and errors.}
	Fix an example $(x_i,y_i)$. All preactivations $z_v^{(i)}$, activations
	$h_v^{(i)}$, and error terms $\delta_v^{(i)}$ are obtained from the input
	scalars (coordinates of $x_i,y_i$ and parameters $\theta$) by repeated
	application of:
	\begin{itemize}
		\item addition / subtraction of previously computed values,
		\item multiplication of a previously computed value by a \emph{parameter}
		(weight or bias) or by a fixed rational slope or coefficient from some
		piece of a $\sigma^v$,
		\item evaluation of a piecewise-linear $\sigma^v$ or its derivative
		$\partial\sigma^v/\partial z$, which consists of a constant number of
		comparisons with rational breakpoints and then an affine expression of the
		form $a z + b$ with $a,b\in\mathbb{Q}$.
	\end{itemize}
	All slopes, intercepts, and breakpoints of all $\sigma^v$ are part of the
	input and therefore have bit-length at most $N$. Thus, starting from values
	of bit-length at most $N$, along the computation of any $z_v^{(i)}$,
	$h_v^{(i)}$, or $\delta_v^{(i)}$ we perform at most $O(|E|)$ such arithmetic
	steps, each of which increases bit-length by at most $O(N)$ (multiplication
	by a rational of bit-length $\le N$) or $1$ (addition), while comparisons and
	case distinctions do not change bit-length. Hence every $z_v^{(i)}$,
	$h_v^{(i)}$, and $\delta_v^{(i)}$ has bit-length bounded by
	\[
	N + O(|E|)\cdot(N+1) \;\le\; \mathrm{poly}(N).
	\]
	
	\emph{Bit-length of gradients and update.}
	For each edge $u\to v$,
	\[
	\frac{\partial J}{\partial w_{u\to v}}
	= \sum_{i=1}^n \delta_v^{(i)} h_u^{(i)},
	\qquad
	\frac{\partial J}{\partial b_v}
	= \sum_{i=1}^n \delta_v^{(i)}.
	\]
	Each product $\delta_v^{(i)} h_u^{(i)}$ involves factors of bit-length
	$\mathrm{poly}(N)$ and therefore has bit-length $\mathrm{poly}(N)$; summing
	over $i=1,\dots,n$ adds at most $\log n \le \log N$ bits. Thus every gradient
	entry has bit-length $\mathrm{poly}(N)$.
	
	The parameter update
	$\theta \leftarrow \theta - \eta\,\nabla J(\theta)$ consists of multiplications
	of gradient entries by the input scalar $\eta$ (bit-length $\le N$) and
	additions/subtractions with the previous parameters (bit-length $\le N$),
	so the updated parameters also have bit-length bounded by $\mathrm{poly}(N)$.
	
	\emph{Conclusion.}
	During one backpropagation step, all intermediate values have bit-length
	$\mathrm{poly}(N)$. Each arithmetic operation on such numbers can be carried
	out in time polynomial in $N$ in our bit model. Since the number of operations
	is $O(n\cdot|E|)\le O(N^2)$, the total running time is polynomial in $N$.
	Hence one backpropagation step with piecewise-linear activations lies in
	$\mathsf{P}$.
\end{proof}

We can now also prove Remark~\ref{cor:bit-bounded}
\paragraph{Proof of Remark~\ref{cor:bit-bounded}}
Identical to the proof above: the operation count is unchanged, and condition (ii) directly enforces that all
intermediate quantities have bit-length $\poly(N)$, so each arithmetic operation costs $\poly(N)$ time.

\begin{theorem}
	Assuming piecewise-linear activations,
	the \textnormal{restricted $\mathsf{ERM}_{\text{bit}}$} problem in the bit model is in $\mathsf{P}$. In particular,
	\textnormal{ERM} in the bit model (with explicit rational witnesses) lies in
	$\mathsf{NP}$.
\end{theorem}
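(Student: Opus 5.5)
The statement has two parts. The first is that the restricted $\mathsf{ERM}_{\text{bit}}$ problem is in $\mathsf{P}$: here the candidate parameter vector $\theta$ is part of the input, and we ask whether $L(\theta)\le\gamma$. The second is that $\mathsf{ERM}_{\text{bit}}$ with explicit rational witnesses is in $\mathsf{NP}$. I will derive the second part from the first using the standard guess-and-verify template.

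For the first part, I fix an input $I$ together with a rational $\theta$ with $|\mathrm{enc}(\theta)|\le C_1|I|^{C_2}$. Let $N:=|I|+|\mathrm{enc}(\theta)|$, which is polynomial in $|I|$.

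I then run only the forward-pass portion of the analysis in the preceding theorem on one backpropagation step. For each sample $x_i$, I compute the node values $f^v_\theta(x_i)$ along a topological order of the DAG, using the node rule $f^v_\theta(x)=\sigma^v\bigl(x_v+\sum_{(u,v)\in E}(w_{u,v}f^u_\theta(x)+b_{u,v})\bigr)$. Evaluating a piecewise-linear $\sigma^v$ costs a constant number of comparisons with rational breakpoints, followed by one affine map $az+b$ whose coefficients come from the input.

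The key invariant is the bit-length bound already established in that proof. Along any path, bit-lengths grow only additively: each multiplication is by a parameter or slope of length at most $N$, and each addition adds $O(1)$ bits. So every node value has length $\mathrm{poly}(N)$.

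The one point needing care is nodes with large in-degree. A sum of $|E|$ terms adds $O(\log|E|)$ bits to the numerator, but the common denominator can grow to the product of all denominators. That product still has length $O(|E|\cdot\mathrm{poly}(N))$, which is polynomial. A cleaner way to see this: every value is a rational whose denominator divides the product of all denominators appearing in $\theta$, the inputs and the activation coefficients, raised to a power at most the depth. Its numerator is bounded in magnitude by $2^{\mathrm{poly}(N)}$, so both parts have polynomial length.

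With node values in hand, each $\cL(f_\theta(x_i),y_i)$ is computed in polynomial time by the assumption on $\cL$. Summing $n\le N$ rationals and comparing the result with $\gamma$ is also polynomial. Hence the restricted problem is in $\mathsf{P}$.

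For membership in $\mathsf{NP}$, the certificate is $\mathrm{enc}(\theta)$ itself. The verifier first checks syntactically that $|\mathrm{enc}(\theta)|\le C_1|I|^{C_2}$, then runs the polynomial-time evaluator above and accepts iff $L(\theta)\le\gamma$. Soundness and completeness are immediate from the definition of $\mathsf{ERM}_{\text{bit}}(C)$.

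The only step with any subtlety is the bit-length bound. I expect no real obstacle there, since it is inherited from the previous theorem. If NP-completeness is also wanted, I would cite the classical NP-hardness of training small networks in the bit model~\cite{blum1988training,Goel21}, and note that those results hold for ReLU with the parameters polynomially bounded.
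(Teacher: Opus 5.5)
Your proposal is correct and is essentially the paper's proof: evaluate the forward pass in polynomial time using the bit-length bound from the preceding backpropagation theorem, then use $\mathrm{enc}(\theta)$ itself as the $\mathsf{NP}$ certificate. Your common-denominator argument is the rigorous way to get that bound: every node value has denominator dividing a power (linear in the depth) of the product of all input denominators, and magnitude at most $2^{\mathrm{poly}(N)}$. By contrast, the paper's path-wise accounting relies on $\mathrm{len}(a\pm b)\le\max\{B,C\}+1$, which fails for rationals, and your first per-node ``product of incoming denominators'' estimate would compound over depth in a DAG, so it is the global argument that actually carries the proof.
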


\begin{proof}
	In a restricted $\mathsf{ERM}_{\text{bit}}$ instance we are given a fixed parameter vector
	$\theta^\ast$ as part of the input. To decide the instance we must compute
	\[
	\sum_{i=1}^n \cL\bigl(f_{\theta^\ast}(x_i),y_i\bigr)
	\]
	and compare it to $\gamma$.
	
	This is exactly the forward part of the backpropagation computation (without
	the backward and update steps). As in the bit-length analysis above, each
	activation $h_v^{(i)}$ is obtained from inputs and parameters by a sequence of
	additions, multiplications by parameters or fixed slopes, and
	piecewise-linear activations of length at most $O(|E|)$, so all intermediate
	values (and hence each $f_{\theta^\ast}(x_i)$ and the sum of losses) have
	bit-length bounded by $\mathrm{poly}(N)$. The number of operations is
	$O(n\cdot|E|)\le O(N^2)$, so this computation runs in polynomial time in $N$.
	Therefore, restricted $\mathsf{ERM}_{\text{bit}}$ with piecewise-linear activations in the bit model is in $\mathsf{P}$.
	
	Since, in the general ERM (bit-model) problem, a candidate parameter vector
	$\theta$ is given explicitly as a rational witness, and verifying its loss is
	in $\mathsf{P}$ by the same argument, ERM in the bit model lies in
	$\mathsf{NP}$.
\end{proof}

We note that NP-Completeness follows from the (numerous) known NP-Hardness results given in the literature (e.g., \cite{judd1988complexity,blum1988training,vsima2002training}).

\section{Not Every Rational Polynomial Can Represent $x \cdot y$}
\label{sec:lower_bound_poly}
In this section, we study when a fixed univariate function $f$ can be used as a
``universal gadget'' to represent the product of two variables using only
shifted evaluations of $f$ and rational coefficients.  Concretely, we ask for
identities of the form
\[
x y \;=\; \sum_{j} \lambda_j\bigl(f(x+y+j) - f(x+j) - f(y+j)\bigr),
\]
valid for all reals $x,y$, where the $\lambda_j$ are rational and only
finitely many are nonzero.

We have shown in \Cref{lem:polynomial_representation} that every nonlinear polynomial with rational coefficients
admits such a representation. 
One might hope to extend this positive result from integer powers to
non-integer rational exponents or negative integer exponents.  The next theorem shows that this is
impossible: for $f(T) = T^\alpha$ with $\alpha \in \mathbb{Q}
\setminus \mathbb{Z}_{\ge 0}$, no finite rational combination of shifted copies of
$f$ can represent $x \cdot y$.

\begin{theorem}
	\label{thm:counter_fractional}
	Let $\alpha \in \mathbb{Q}\setminus\mathbb{Z}_{\ge 0}$ and consider
	$f(T) = T^\alpha$ on $(0,\infty)$, with the principal real branch of
	$t^\alpha$. Then there do not exist an integer $m \ge 0$, rational
	coefficients $\lambda_0,\dots,\lambda_m \in \mathbb{Q}$, and a real
	number $R>0$ such that for all rational $x,y>R$,
	\begin{equation}
		\label{eq:frac_representation}
		\sum_{j=0}^m \lambda_j\bigl(f(x+y+j) - f(x+j) - f(y+j)\bigr) = x  \cdot y.
	\end{equation}
\end{theorem}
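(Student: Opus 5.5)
We argue by contradiction, using the asymptotic behavior of $f(T)=T^\alpha$ along a line. Suppose such $m$, rational $\lambda_j$ and $R$ exist. Since both sides of \eqref{eq:frac_representation} are real-analytic in $(x,y)$ on $(R,\infty)^2$ and rationals are dense there, continuity upgrades the identity to all real $x,y>R$. Define
\[
G(x,y):=\sum_{j=0}^m \lambda_j\bigl((x+y+j)^\alpha-(x+j)^\alpha-(y+j)^\alpha\bigr),
\]
so the assumption is $G(x,y)=xy$ on $(R,\infty)^2$.

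The plan is to differentiate in both variables to remove the single-variable terms. Since $(x+j)^\alpha$ depends only on $x$ and $(y+j)^\alpha$ only on $y$, applying $\partial_x\partial_y$ gives
\[
\alpha(\alpha-1)\sum_{j=0}^m \lambda_j (x+y+j)^{\alpha-2} = 1
\]
for all $x,y>R$. Put $s=x+y$, which ranges over $(2R,\infty)$. Then $h(s):=\sum_j\lambda_j(s+j)^{\alpha-2}$ equals the constant $c:=1/(\alpha(\alpha-1))$. This is well defined because $\alpha\notin\{0,1\}$ by the hypothesis $\alpha\notin\mathbb{Z}_{\ge0}$.

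Next we show this identity is impossible. Set $\beta:=\alpha-2$. Then $\beta\notin\mathbb{Z}_{\ge 0}$, since $\beta=k\ge0$ would force $\alpha=k+2\in\mathbb{Z}_{\ge0}$.

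We would run two arguments and keep whichever is cleaner.

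\emph{Asymptotic argument.} Expand $(s+j)^\beta=s^\beta\sum_{k\ge0}\binom{\beta}{k}j^k s^{-k}$ for large $s$, so that
\[
h(s)=\sum_{k\ge0}\binom{\beta}{k}M_k\, s^{\beta-k}, \qquad M_k:=\sum_j\lambda_j j^k.
\]
Since $\beta\notin\mathbb{Z}_{\ge0}$, none of the exponents $\beta-k$ equals $0$. The functions $s^{\gamma}$ with distinct exponents are linearly independent as asymptotic scales, so $h(s)=c\neq0$ would require the nonzero constant $c$ to be matched by one of them, which is impossible.

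\emph{Differentiation argument (more rigorous).} Differentiate $h\equiv c$ enough times $N$ so that every term decays:
\[
\sum_j\lambda_j\,\beta(\beta-1)\cdots(\beta-N+1)(s+j)^{\beta-N}=0.
\]
The falling factorial is nonzero because $\beta\notin\mathbb{Z}_{\ge0}$. Hence $\sum_j\lambda_j(s+j)^{\beta-N}=0$ identically. Fix $N$ with $\beta-N<0$; then a similar identity holds for every larger $N'$ as well. Write $\gamma=\beta-N$. Then $\sum_j\lambda_j(s+j)^{\gamma-r}\equiv0$ for every $r\ge0$, and expanding in $1/s$ gives
\[
\sum_k\binom{\gamma-r}{k}M_k s^{-k}=0.
\]
The coefficient of $s^0$ and the induction used in Claim~\ref{lem:basis} give $M_k=0$ for $k=0,\dots,m$. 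The Vandermonde argument then forces all $\lambda_j=0$. But then $h\equiv0\neq c$, a contradiction.

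The only delicate step is this linear-independence claim: showing that a finite combination $\sum_j \lambda_j (s+j)^\beta$ with non-integer or negative exponent cannot be a nonzero constant. Everything else is routine. I expect the cleanest route is the moment argument above: any nonzero $\lambda$ gives $h(s)\sim \binom{\beta}{k_0}M_{k_0}s^{\beta-k_0}$, where $k_0$ is the least $k$ with $M_{k_0}\neq0$ (such $k_0\le m$ exists by Vandermonde). This is never asymptotic to a nonzero constant, since $\beta-k_0\neq0$ and the binomial coefficient is nonzero for non-integer $\beta$ (and also for negative integer $\beta$).
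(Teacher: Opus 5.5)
Your proposal is correct and follows essentially the paper's route. Both arguments extend the identity to all reals, apply $\partial_x\partial_y$ to isolate $\sum_j \lambda_j (s+j)^{\alpha-2}$, expand binomially in $1/s$ to obtain the moments $\sum_j \lambda_j j^k$, and conclude with a Vandermonde argument. The only difference is that the paper differentiates once more to reach a vanishing combination with exponent $\alpha-3$ and then applies Lemma~\ref{lem:indep}, whereas your leading-moment asymptotic handles the nonzero constant directly; both versions are valid.
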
 In particular, the representation of
Lemma~\ref{lem:polynomial_representation} for polynomials cannot be
extended to $f(T)=T^\alpha$ with rational $\alpha \notin \mathbb{Z}_{\ge 0}$.
To prove this result, we use the following. 

\begin{lemma}
	\label{lem:indep}
	Let $\beta \in \mathbb{Q} \setminus \mathbb{Z}_{\ge 0}$ and let
	$b_0,\dots,b_m \in \mathbb{Q}$ be pairwise distinct. Define
	\[
	\varphi_j(x) := (x+b_j)^\beta
	\]
	on an interval $(R,\infty)$ with $R>-\min_j b_j$, so that $x+b_j>0$ for all
	$x>R$.
%
	If
	\[
	\sum_{j=0}^m \lambda_j \varphi_j(x) = 0
	\quad\text{for all } x>R,
	\]
	for some $\lambda_0,\dots,\lambda_m \in \mathbb{Q}$, then
	$\lambda_0=\cdots=\lambda_m=0$.
\end{lemma}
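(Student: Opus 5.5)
We have to prove that the functions $\varphi_j(x)=(x+b_j)^\beta$, with pairwise distinct shifts $b_j$ and a non-integer or negative $\beta$, are linearly independent over $\mathbb{Q}$ on $(R,\infty)$. We will in fact prove independence over $\mathbb{R}$. The plan is to pass to the complex-analytic continuation and use the distinct branch points. Each $\varphi_j$ is real-analytic on $(R,\infty)$ and extends holomorphically to $\mathbb{C}\setminus(-\infty,-b_j]$. Suppose $F(x):=\sum_j\lambda_j\varphi_j(x)\equiv 0$ on $(R,\infty)$. By the identity theorem, $F\equiv 0$ on the connected domain $\Omega:=\mathbb{C}\setminus(-\infty,-\min_j(-b_j)]$, that is, $\mathbb{C}$ minus a ray to the left of the largest singular point. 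More conveniently, we may continue $F$ along any path in $\mathbb{C}\setminus\{-b_0,\dots,-b_m\}$, and the continuation of the zero function is identically zero.

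Fix an index $k$ and work near the point $p:=-b_k$. Take a small disk $B$ around $p$ that contains no other $-b_j$. For $j\neq k$, the function $\varphi_j$ has a holomorphic single-valued branch on all of $B$, so $G(z):=\sum_{j\ne k}\lambda_j\varphi_j(z)$ is holomorphic on $B$. Then $F\equiv 0$ gives
\[
\lambda_k(z-p)^\beta=-G(z)
\]
on a slit disk, which is reached by continuing from $(R,\infty)$ along a path avoiding the other singular points. We now split into two cases.
\begin{itemize}
\item If $\beta\notin\mathbb{Z}$, continue once around $p$ inside $B$. The right-hand side returns to itself, while $(z-p)^\beta$ is multiplied by $e^{2\pi i\beta}\neq 1$. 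Hence $\lambda_k(e^{2\pi i\beta}-1)(z-p)^\beta=0$, and so $\lambda_k=0$.
\item If $\beta$ is a negative integer, the left-hand side has a pole of order $-\beta$ at $p$ unless $\lambda_k=0$, while the right-hand side is holomorphic at $p$. Hence $\lambda_k=0$.
\end{itemize}
Since $k$ was arbitrary, all the $\lambda_j$ vanish.

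The main obstacle is making the continuation argument rigorous. Two points need care: the identity $F\equiv 0$ must persist along paths from $(R,\infty)$ into a punctured neighbourhood of each $-b_k$, and after the monodromy loop each $\varphi_j$ with $j\neq k$ must return to its original value. Both are handled by working in the simply connected region obtained by removing the rays $(-\infty,-b_j]$ for $j\ne k$ only, which is possible when $-b_k$ is the largest singular point. To reach the other singular points, the better route is induction: first treat the largest point $p=\max_j(-b_j)$. There all the other rays lie strictly to the left, so a small loop around $p$ stays inside the domain of holomorphy of every $\varphi_j$ with $j\ne k$. This kills $\lambda_k$, and we then repeat the argument with the remaining indices.

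A purely real alternative avoids complex analysis for the case $\beta\notin\mathbb{Z}$. Let $-b_k$ be the largest singular point and let $x\downarrow -b_k$. The $(d+1)$-st derivative of $\lambda_k(x+b_k)^\beta$ blows up, where $d>\beta$ is an integer, while the other terms stay smooth there. This is a fallback if the monodromy bookkeeping becomes awkward.
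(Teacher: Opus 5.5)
Your proof is correct, but it takes a genuinely different route from the paper's. The paper works at infinity. It substitutes $t=1/x$ and multiplies by $t^\beta$ to get $\sum_j\lambda_j(1+b_jt)^\beta=0$ for small $t>0$. It then expands each term in its binomial series and reads off $\binom{\beta}{k}\sum_j\lambda_jb_j^k=0$ for every $k$. Since $\binom{\beta}{k}\neq0$ for all $k$ precisely because $\beta\notin\mathbb{Z}_{\ge0}$, this gives the Vandermonde system $\sum_j\lambda_jb_j^k=0$ for $k=0,\dots,m$, and hence $\lambda=0$ in one stroke. That argument is shorter and stays real-analytic. It also treats non-integer and negative-integer $\beta$ uniformly, with no induction.

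You instead localize at the finite branch points. After continuing $F\equiv0$ to $\mathbb{C}$ minus a ray, you isolate the rightmost singularity and kill its coefficient, by monodromy when $\beta\notin\mathbb{Z}$ and by the pole when $\beta\in\mathbb{Z}_{<0}$. You then peel off the singularities one by one. This costs more machinery and a case split. In exchange, it uses the hypothesis in the transparent form ``$(z+b_k)^\beta$ is not holomorphic at $-b_k$''. It is also more robust: it goes through unchanged if the exponents differ from term to term (each $\beta_j\notin\mathbb{Z}_{\ge0}$), where the Vandermonde reduction does not apply directly.

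Some minor points:
\begin{itemize}
\item The domain should be $\Omega=\mathbb{C}\setminus(-\infty,\max_j(-b_j)]$, i.e.\ the ray ending at $-\min_j b_j$. As written, $-\min_j(-b_j)=\max_j b_j$, which is not the rightmost singular point.
\item Your concern about a general index $k$ is unnecessary if you phrase the argument as analytic continuation along a path in $\mathbb{C}\setminus\{-b_0,\dots,-b_m\}$ into the disk $B$. The continued branches of $\varphi_j$, $j\neq k$, are single-valued on $B$, so the loop around $-b_k$ leaves them unchanged. The induction from the rightmost point is equally valid.
\item In the real fallback, you would first need to extend $F\equiv0$ from $(R,\infty)$ to $(-\min_j b_j,\infty)$ by real-analyticity before letting $x\downarrow-b_k$.
\end{itemize}
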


\begin{proof}
	Assume
	\[
	\sum_{j=0}^m \lambda_j (x+b_j)^\beta = \sum_{j=0}^m \lambda_j \varphi_j(x) =0
	\quad\text{for all } x>R.
	\]
	Set $t = 1/x$ for $x>R$; then for all sufficiently small $t>0$,
	it holds that $x+b_j = 1/t + b_j > 0$; thus,
	\[
	0 = \sum_{j=0}^m \lambda_j (1/t + b_j)^\beta.
	\]
	Multiplying by $t^\beta$ gives
	\[
	0
	= t^\beta \sum_{j=0}^m \lambda_j (1/t + b_j)^\beta
	= \sum_{j=0}^m \lambda_j (1 + b_j t)^\beta.
	\]
	Define
	\[
	F(t) := \sum_{j=0}^m \lambda_j (1 + b_j t)^\beta.
	\]
	For $|t|$ small, each $(1 + b_j t)^\beta$ admits a convergent binomial
	series, hence $F$ is analytic near $t=0$. We have $F(t)=0$ for all
	sufficiently small $t>0$, so by analyticity $F(t)\equiv 0$ near $0$.
	
	Using the binomial expansion,
	\[
	(1 + b_j t)^\beta
	= \sum_{k=0}^\infty \binom{\beta}{k} b_j^k t^k,
	\]
	we obtain
	\begin{align*}
		0 = F(t)
		&= \sum_{j=0}^m \lambda_j \sum_{k=0}^\infty
		\binom{\beta}{k} b_j^k t^k \\
		&= \sum_{k=0}^\infty
		\binom{\beta}{k}\Bigl(\sum_{j=0}^m \lambda_j b_j^k\Bigr) t^k.
	\end{align*}
Thus, since $F$ is analytic near $0$ and $F(t)\equiv 0$, for every $k\ge 0$:
\[
\binom{\beta}{k} \sum_{j=0}^m \lambda_j b_j^k = 0.
\]
Since $\beta\notin\mathbb{Z}_{\ge 0}$, the binomial coefficient
\[
\binom{\beta}{k}
= \frac{\beta(\beta-1)\cdots(\beta-k+1)}{k!}
\]
is nonzero for all $k\ge 0$, so
	\[
	\sum_{j=0}^m \lambda_j b_j^k = 0
	\quad\text{for all } k\ge 0.
	\]
	
	For $k=0,1,\dots,m$, this yields the homogeneous linear system
	$V\lambda = 0$, where $V$ is the $(m+1)\times(m+1)$ Vandermonde matrix
	$V_{kj} = b_j^k$. Since the $b_j$ are pairwise distinct, $\det V\neq 0$,
	so $V$ is invertible and hence $\lambda_0=\cdots=\lambda_m=0$.
\end{proof}

We now give the proof of \Cref{thm:counter_fractional}
\begin{proof}
	
	Assume towards a contradiction that such $m$, $\lambda_0,\dots,\lambda_m$, and
	$R$ exist. For $x,y>R$ all arguments of $f$ in
	\eqref{eq:frac_representation} are positive, so $f$ is real-analytic
	there. Both sides of \eqref{eq:frac_representation} are analytic in
	$(x,y)$ on $(R,\infty)^2$ and agree on all rational pairs $(x,y)$ in this
	region, hence by analyticity the identity holds for all real $x,y>R$.
	
	Differentiating \eqref{eq:frac_representation} with respect to $x$ gives
	\begin{equation}
		\label{eq:first_derivative}
		y
		= \sum_{j=0}^m \lambda_j \alpha\Bigl(
		(x+y+j)^{\alpha-1} - (x+j)^{\alpha-1}
		\Bigr)
	\end{equation}
	for all $x,y>R$. Differentiating \eqref{eq:first_derivative} with respect
	to $y$ yields
	\begin{equation}
		\label{eq:second_derivative}
		1
		= \sum_{j=0}^m \lambda_j \alpha(\alpha-1) (x+y+j)^{\alpha-2}
	\end{equation}
	for all $x,y>R$.
	
	Let $z = x+y$. Then for all $z>2R$, \eqref{eq:second_derivative} becomes
	\begin{equation}
		\label{eq:constant_eq_frac}
		1
		= \alpha(\alpha-1)\sum_{j=0}^m \lambda_j (z+j)^{\alpha-2}.
	\end{equation}
	Differentiating \eqref{eq:constant_eq_frac} with respect to $z$ gives
	\begin{equation}
		\label{eq:zero_eq_frac}
		0
		= \alpha(\alpha-1)(\alpha-2)\sum_{j=0}^m \lambda_j (z+j)^{\alpha-3}
	\end{equation}
	for all $z>2R$.
	
Since $\alpha\in\mathbb{Q}\setminus\mathbb{Z}_{\ge 0}$, we have
$\alpha,\alpha-1,\alpha-2\neq 0$, so $\alpha(\alpha-1)(\alpha-2)\neq 0$.
Let $\beta := \alpha-3 \in \mathbb{Q}\setminus\mathbb{Z}_{\ge 0}$ and define
$\lambda'_j := \alpha(\alpha-1)(\alpha-2)\lambda_j \in \mathbb{Q}$.
Then
\[
\sum_{j=0}^m \lambda'_j (z+j)^\beta = 0
\quad\text{for all } z>2R.
\]
By Lemma~\ref{lem:indep} it follows that  $\lambda_j=0$ for all $j$.
	But then \eqref{eq:frac_representation} reduces to $x y = 0$ for all
	$x,y>R$, a contradiction. Therefore no such finite representation
	\eqref{eq:frac_representation} can exist.
\end{proof}

\end{document}